\newcommand{\omitfromsubmission}[1]{}
\newcommand{\forspace}[1]{}
\newcommand{\mycomment}[1]{}
\newcommand{\repeatclaim}[2]{
    \addvspace{5pt plus 3pt minus 1pt}
    \noindent\textbf{#1. }{\it #2}

    \addvspace{5pt plus 3pt minus 5pt}
}
\newcommand{\qed}{\hfill\rule{1ex}{1.5ex}}
\newcommand{\Var}{{\rm Var}}
\def\norm#1{\left\| #1 \right\|}
 \newenvironment{sketch}{\par{\sc Proof Sketch:}} {\qed \par}
\newcommand{\D}{\mathcal D} 
\newcommand{\DIS}{{\rm DIS}}
\renewcommand{\H}{\mathcal H} 
\newcommand{\C}{\mathbb C} 
\renewcommand{\P}{\mathbb P} 
\renewcommand{\Pr}{\mathbb P} 
\newcommand{\R}{\mathit R} 
\newcommand{\reals}{\mathbb{R}} 
\newcommand{\E}{\mathbb E}
\newcommand{\cY}{{ Y}}
\newcommand{\err}{\mathrm{err}}
\newcommand{\dis}{\mathrm{dis}}
\newcommand{\capacity}{\mathrm{cap}}
\newcommand{\distf}{\mathrm{d}}
\newcommand{\vol}{\mathrm{vol}}
\newcommand{\Uball}{\mathrm{UBALL}}
\newcommand{\poly}{\mathrm{poly}}
\newcommand{\anglesep}{\theta}
\newcommand{\roundsEPSwc}{{\lceil\log_2{\frac{1}{c\epsilon}}\rceil}}
\newcommand{\sgn}{{\mathrm{ sign}}}
\newcommand{\sign}{{\mathrm{ sign}}}
\newcommand{\betats}{a}
\newcommand{\tc}{\tilde{c}}
\newcommand{\distrib}{D}
\newcommand{\rounds}{s}
\newcommand{\integers}{Z^+}
\title[Active and passive
   learning of linear separators
   under log-concave distributions]{Active and passive
   learning of linear separators \\
   under log-concave distributions \\}
\begin{document}

\maketitle

\begin{abstract}
We provide new results concerning label efficient, polynomial time, passive and active learning of linear separators.
 We prove that
active learning provides an exponential improvement over  PAC (passive) learning of  homogeneous
linear separators under nearly log-concave distributions.
Building on this, we  provide a computationally efficient PAC algorithm with optimal (up to a constant factor) sample complexity for such problems. This resolves an open question of~\citep{Lon95,Lon03,phil-doubling} concerning the sample complexity of efficient PAC algorithms under the uniform distribution in the unit ball. Moreover,
it provides the first bound for a polynomial-time  PAC algorithm that is tight for an interesting infinite class of hypothesis functions under a
general and natural class of data-distributions, providing significant progress towards  a longstanding open question of~\citep{EHKV89,BEHW89}.

We also provide new bounds for active and passive learning in the case that the
data might not be linearly separable, both in the agnostic case and and under the Tsybakov low-noise condition.
To derive our results, we provide new
structural results for (nearly) log-concave distributions, which might be of independent interest  as well.
\end{abstract}
%

\begin{keywords}
Active learning,  PAC learning, ERM, nearly log-concave distributions, Tsybakov low-noise condition, agnostic learning.\end{keywords}

\vspace{-3mm}
\section{Introduction}
\vspace{-1mm}
Learning linear separators is  one of the central challenges in machine learning.
They are widely used and have been long studied both in the statistical and computational learning theory.
A seminal result of \citep{BEHW89}, using tools due to
\citep{VC}, showed that $d$-dimensional linear separators
can be learned to accuracy $1 - \epsilon$ with probability $1 - \delta$
in the classic PAC model in polynomial time with $O((d/\epsilon) \log
(1/\epsilon) + (1/\epsilon)\log(1/\delta))$ examples.
The best known lower bound for
 linear separators is $\Omega(d/\epsilon + (1/\epsilon)\log(1/\delta))$, and this holds even in the case in which the distribution is uniform~\citep{Lon95}.
Whether the upper bound can be improved to match the
lower bound via a polynomial-time algorithm is been long-standing open question, both for general distributions~\citep{EHKV89,BEHW89} and for the case of  the uniform distribution in the
unit ball~\citep{Lon95,Lon03,phil-doubling}.
In this work we resolve this question in the case where the underlying distribution belongs to the class of log-concave
 and nearly log-concave distributions, a wide class of distributions that includes
the gaussian distribution and  uniform distribution over any convex set,
and which has played an important role in several areas including sampling, optimization, integration, and learning~\citep{lv:2007}.

We also consider active learning, a major area of research of modern machine learning,
where the
algorithm only receives the classifications
of examples when it requests them~\citep{sanjoy11-encyc}.
Our main result here is a polynomial-time active learning algorithm
with label complexity that is exponentially better than the label complexity of any
passive learning  algorithm in these settings. This answers an open question in~\citep{BBZ07} and it also significantly expands the set of
cases for which  we can show that active learning provides a clear exponential improvement in $1/\epsilon$ (without increasing the dependence on $d$) over passive learning. Remarkably, our analysis for passive learning is done via a connection to our analysis for active learning -- to our knowledge, this is the first paper using this technique.

We also study active and passive learning in the case that the
data might not be linearly separable. We specifically provide new improved bounds for the
widely studied Tsybakov low-noise condition~\citep{MT99,BBM05,MN06}, as well as
new bounds on the disagreement coefficient, with  implications
 for the agnostic case (i.e., arbitrary forms of noise).

\smallskip
\noindent{\bf Passive Learning}~
In the classic passive supervised machine learning setting,
the learning algorithm is given a set of labeled examples drawn i.i.d.\ from some fixed but unknown
distribution over the instance space and labeled according to some fixed but unknown target function,
and the goal is to output a classifier that does well on new examples coming from the same distribution.
This setting has been long studied in both computational learning theory (within the PAC model~\citep{Valiant:acm84,KV:book94})
and statistical learning theory~\citep{Vap82,Vapnik:book98,bbl05}, and has played a crucial role in the developments and successes of machine learning.

However, despite remarkable progress, the basic question of providing
polynomial-time algorithms with {\em tight} bounds on the sample
complexity has remained open. Several milestone results along these
lines that are especially related to our work include the following.
The analysis of \citep{BEHW89}, proved using tools from~\citep{VC},
implies that linear separators can be learned in polynomial time with
$O((d/\epsilon) \log (1/\epsilon) + (1/\epsilon)\log(1/\delta))$
labeled examples.  \citep{EHKV89} proved a bound that implies an
$\Omega(d/\epsilon + (1/\epsilon)\log(1/\delta))$ lower bound for
linear separators and explicitly posed the question of providing tight
bounds for this class.  \citep{HLW94} established an upper bound of
$O((d/\epsilon) \log(1/\delta))$, which can be achieved in
polynomial-time for linear separators.

\citep{BEHW89} achieved polynomial-time learning by finding a
consistent hypothesis (i.e., a hypothesis which correctly classifies
all training examples); this is a special case of
ERM \citep{Vap82}.  An intensive line of research in the empirical
process and statistical learning theory literature has taken account
of ``local complexity'' to prove stronger bounds for ERM~        \citep{VW96,Van00,BBM05,Lon03,Men03,gine:06,Hanneke07,steve-surrogate}.
In the context of learning, local complexity takes account of the fact
that really bad classifiers can be easily discarded, and the set of
``local'' classifiers that are harder to disqualify is sometimes not
as rich.  A recent landmark result of \citep{gine:06}
(see also~\citep{RR11,steve-surrogate}) is the bound for consistent algorithms of
\begin{equation}
\label{e:cap}
O((d/\epsilon) \log
(\capacity(\epsilon)) + (1/\epsilon)\log(1/\delta))
\end{equation}
where
$\capacity(\epsilon)$ is the Alexander capacity, which depends on the
distribution~\citep{alexander87} (see Section~\ref{se:dis} and Appendix~\ref{rel-appendix} for further discussion).
However, this bound
can be
suboptimal for linear separators.

In particular, for linear separators in the case in which the
underlying distribution is uniform in the unit ball, the sample
complexity is known \citep{Lon95,Lon03} to be $\Theta\left(\frac{d
+ \log(1/\delta)}{\epsilon}\right)$, when computational considerations are ignored.  \citep{phil-doubling},
using the doubling dimension \citep{Ass83}, another measure of
local complexity, proved a bound of
\begin{equation}
\label{e:doubling}
O((d/\epsilon)
\sqrt{\log(1/\epsilon)} + (1/\epsilon)\log(1/\delta))
\end{equation}
for a polynomial-time algorithm. As a lower bound of $\Omega(\sqrt{d})$ on
$\capacity(\epsilon)$ for $\epsilon=o(1/\sqrt{d})$ for the case of linear separators
and the uniform distribution is implicit in \citep{Hanneke07},
the bound of \citep{gine:06} given by (\ref{e:cap})
cannot yield a bound better than
\begin{equation}
\label{e:cap.con}
O((d/\epsilon) \min \{ \log d, \log(1/\epsilon) \} + (1/\epsilon)\log(1/\delta))
\end{equation}
in this case.

In this paper we provide a {\em tight} bound (up to constant factors)
on the sample complexity of polynomial-time learning of linear separators with respect to log-concave distributions.
Specifically, we prove an upper bound of
$O\left(\frac{d + \log(1/\delta)}{\epsilon}\right)$
using a polynomial-time algorithm that holds for any zero-mean
log-concave distribution.
We also prove an information theoretic lower bound
that matches
our   (computationally efficient) upper bound for {\em each}
log-concave distribution.  This provides the first
bound for a polynomial-time algorithm
that is tight for an interesting non-finite class of hypothesis functions under a general class of data-distributions, and also characterizes
(up to a constant factor) the distribution-specific sample complexity for each distribution in the
class.
In the special case of the uniform distribution, our upper bound
closes the existing $\Omega(\min \{ \sqrt{\log(1/\epsilon)}, \log(d) \})$ gap between
the upper bounds (\ref{e:doubling}) and (\ref{e:cap.con}) and
the lower bound of \citep{Lon95}.

\smallskip
\noindent{\bf Active Learning}~
We also study learning of linear separators in the active learning model;
here the learning algorithm can access unlabeled
(i.e., unclassified) examples and ask for labels of
unlabeled examples of its own choice, and the hope is that a good classifier can be learned
with significantly fewer labels by actively directing the queries to informative examples. This has been a major area of machine learning research in the past fifteen years mainly due the availability of large amounts of unannotated or raw data in many modern applications~\citep{sanjoy11-encyc}, with many exciting developments on understanding its underlying principles
as well~\citep{QBC,sanjoy-coarse,BBL,BBZ07,Hanneke07,dhsm,CN07,Nina08,Kol10,nips10}. However, with a few exceptions~\citep{BBZ07,CN07,dkm}, most of the theoretical developments have focused on the so called disagreement-based active learning paradigm~\citep{hanneke:11,Kol10}; methods and analyses developed in this context are often suboptimal, as they take a conservative approach and consider strategies that query even points on which there is a small amount of uncertainty (or disagreement) among the classifiers still under consideration given the labels queried so far.
The results derived in this manner often show an improvement in the $1/\epsilon$ factor in the label complexity of active versus passive learning; however, unfortunately, the dependence on the $d$ term typically gets worse.

By analyzing a more aggressive, margin-based active learning algorithm, we prove that we can efficiently (in polynomial time) learn homogeneous linear separators when the
underlying distribution is log-concave by using only  $O((d + \log(1/\delta) + \log \log (1/\epsilon))\log (1/\epsilon))$ label requests, answering an open question
in~\citep{BBZ07}.
This represents an exponential improvement of active learning over passive learning and it significantly broadens the cases for which we can show
that the dependence on $1/\epsilon$ in passive learning can be improved to only $\tilde{O}(\log(1/\epsilon))$ in active learning, but without increasing the dependence on the dimension $d$. We note that an improvement of this type was known to be possible only for the case when the underlying distributions is (nearly) uniform  in the unit
 ball~\citep{BBZ07,dkm,QBC};
 even for this special case, our analysis
 improves by a multiplicative $\log\log(1/\epsilon)$ factor
 the results of~\citep{BBZ07}; it also provides better dependence on $d$ than any other previous analyses implementable in a computationally efficient manner (both disagreement-based~\citep{hanneke:11,Hanneke07} and more aggressive  ones~\citep{dkm,QBC}), and over the inefficient splitting index analysis of~\citep{sanjoy-coarse}.

\smallskip
\noindent{\bf Techniques}~
At the core of our results is a novel characterization of the region of disagreement of two linear separators under a log-concave measure.
We show that for any two linear separators specified by normal vectors $u$ and $v$,  for any constant $c \in (0,1)$ we can pick a margin as small as $\gamma=\theta(\alpha)$, where $\alpha$ is the angle between $u$ and $v$, and still ensure that the probability mass of the region of disagreement outside of band of margin $\gamma$ of one of them is
$c \alpha$ (Theorem~\ref{lemma:vectors-sophist}).
Using this fact, we then show how we can use a margin-based active learning technique, where in each round we only query points near the hypothesized decision boundary, to get an exponential improvement over passive learning.

We then show that any passive learning algorithm that outputs a
hypothesis consistent with $O(d/\epsilon +
(1/\epsilon)\log(1/\delta))$ random examples will, with probability at
least $1-\delta$, output a hypothesis of error at most $\epsilon$ (Theorem~\ref{t:passive}).
Interestingly,
our analysis is quite dissimilar to the classic
analyses of ERM.
It proceeds by conceptually running the
algorithm online on progressively larger chunks of examples, and using
the intermediate hypotheses to track the progress of the algorithm.
We show, using the same tools as in the active learning analysis, that
it is always likely that the algorithm will receive informative
examples.  Our analysis shows that the algorithm would also achieve
$1-\epsilon$ accuracy with high probability even if it periodically
built preliminary hypotheses using some of the examples, and then only
used borderline cases for those preliminary classifiers for
further training.\footnote{Note that such examples would not be i.i.d from the underlying distribution!}  To achieve the optimal sample complexity,
we have to carefully distribute the confidence parameter, by allowing
higher probability of failure in the later stages, to compensate for
the fact that, once the hypothesis is already pretty good, it takes
longer to get examples that help to further improve it.

\smallskip
\noindent{\bf Non-separable case}~ We also study label-efficient  learning in the presence of noise.
We show how our results for the realizable case can be extended to handle (a variant of) the
 Tsybakov noise, which has received substantial attention in statistical
 learning theory, both for passive and active learning~\citep{MT99,BBM05,MN06,gine:06,BBZ07,Kol10,hanneke:11};
 this includes the random classification noise commonly studied
 in computational learning theory~\citep{KV:book94}, and the more general bounded  (or Massart) noise~\citep{BBM05,MN06,gine:06,Kol10}.
Our analysis for Massart noise leads to optimal bounds (up to constant factors) for active and passive learning of linear separators when the marginal distribution on the feature vectors is log-concave,  improving the dependence on $d$ over previous best known results.
Our analysis for Tsybakov noise leads to bounds  on active learning with improved dependence on $d$ over previous known results in this case as well.

We also provide a bound on the Alexander's capacity~\citep{alexander87,gine:06} and the closely related disagreement coefficient notion~\citep{Hanneke07},
 which have been widely used to characterize the sample
complexity of various (active and passive) algorithms~\citep{Hanneke07,Kol10,gine:06,nips10}. This immediately
implies concrete bounds on the labeled data complexity of several
algorithms in the literature, including active learning algorithms designed for the purely agnostic case (i.e., arbitrary forms of noise), e.g.,
  the $A^2$ algorithm~\citep{BBL} and the DHM algorithm~\citep{dhsm}.

\smallskip
\noindent{\bf Nearly log-concave distributions}~
We also extend our results both for passive and active learning to
deal with nearly log-concave distributions; this is a broader class of distributions introduced by~\citep{kannan91}, which contains  mixtures of (not too separated) log-concave distributions.  In deriving our results, we provide new tail
bounds and structural results for these distributions, which might be
of independent interest and utility, both in learning theory and in other areas including sampling and optimization.

\smallskip We note that our bounds on the disagreement coefficient  improve by a factor of $\Omega(d)$  over the bounds of
   \citep{fridemnan09} (matching what was known for the much less general case of nearly uniform distribution over the unit sphere); furthermore, they apply to  the nearly log-concave case where we allow an arbitrary number of
 discontinuities, a case not captured by the \citep{fridemnan09} conditions at all.  We discuss
  other related papers 
  in Appendix~\ref{rel-appendix}.

\vspace{-2mm}
\section{Preliminaries and Notation}
\vspace{-1mm}
We focus on binary classification problems; that is, we consider the problem of predicting a binary label
$y$ based on its corresponding input vector $x$.
As in the
standard machine learning formulation, we assume  that the data
points $(x,y)$ are drawn from an unknown underlying distribution
$D_{XY}$ over $X \times \cY$; $X$ is called the {\em
instance space} and $\cY$ is the {\em label space}. In this paper
we assume that $\cY = \{\pm 1\}$  and $X=\reals^d$; we also denote
the marginal distribution over $X$ by $D$.
Let $\C$ be the class of linear separators through the
origin, that is $\C=\{ \sign(w \cdot x): w \in \reals^d, \norm{w}=1 \}$.  To keep the notation simple, we sometimes refer to a weight vector and the linear
classifier with that weight vector interchangeably.
Our goal is to output a hypothesis function $w \in \C$ of small error, where
$\err(w) = \err_{D_{XY}}(w) = P_{(x,y) \sim D_{XY}}  [\sgn(w\cdot x) \neq y].$

We consider two learning protocols: passive learning and active learning. In the passive learning setting, the learning algorithm
is given a set of labeled examples $(x_1,y_1), \ldots,  (x_m,y_m)$  drawn i.i.d. from $D_{XY}$  and the goal is output a hypothesis  of small error by using only a polynomial number of labeled examples.
In the (pool-based) active learning setting, a set of labeled examples $(x_1,y_1) \ldots (x_m,y_m)$  is also drawn i.i.d.\ from $D_{XY}$;
the learning algorithm is permitted direct access to the sequence of $x_i$ values (unlabeled data points), but has to make a label request to obtain the label $y_i$ of example $x_i$.
The hope is that in the active learning setting we can output a classifier of small error by using
 many fewer label requests than in the passive learning setting  by actively directing the queries to informative examples (while keeping the number of unlabeled examples polynomial).
For added generality, we also consider the selective sampling active learning model, where the algorithm visits the
unlabeled data points $x_i$ in sequence, and, for each $i$, makes a decision
on whether or not to request the label $y_i$ based only on the
previously-observed $x_j$ values ($j \leq i$) and corresponding requested
labels, and never changes this decision once made. Both our upper and lower bounds will apply to both selective sampling and pool-based active learning.

In the ``realizable
case'', we  assume that the labels are deterministic and generated by a target function that belongs to $\C$.
In the non-realizable case (studied in Sections~\ref{se:dis} and~\ref{se:tsy}) we do not make this assumption and instead aim to compete with
the best function in $\C$.

Given two vectors $u$ and $v$ and any distribution $\tilde{D}$ we denote by
$ \distf_{\tilde{D}}(u,v)=\P_{x \sim \tilde{D}}(\sign(u \cdot x) \neq  \sign(v \cdot x))$; we also denote by $\anglesep(u,v)$ the angle between the vectors $u$ and $v$.

\vspace{-2mm}
\section{Log-Concave Densities}
\vspace{-1mm}
Throughout this paper we focus on the case where the underlying distribution $D$ is log-concave or nearly log-concave. Such distributions have played a key role in the past two decades in several areas including sampling, optimization, and integration algorithms~\citep{lv:2007}, and more recently for learning theory as well~\citep{KKMS05,KLT09,Vem10}.
In this section we first summarize known results about such distributions that are useful for our analysis and then prove a novel structural statement that will be key to our analysis (Theorem~\ref{lemma:vectors-sophist}). In Section~\ref{sec:more_distr} we describe extensions to  nearly log-concave distributions as well.

\begin{definition}
 A distribution over $R^d$ is log-concave if $\log f( \cdot )$ is concave, where $f$ is its associated density function.
 It is isotropic if its mean is the origin and its covariance matrix  is the identity.
\end{definition}

Log-concave distributions form a broad class of distributions:
for example, the Gaussian, Logistic,  
and uniform distribution over any convex set are log-concave distributions.
The following lemma summarizes known useful facts about isotropic log-concave distributions
(most are from~\citep{lv:2007}; the upper bound on the density is from
\citep{KLT09}).

\begin{lemma}
\label{isotropic-basic}
Assume that $D$  is log-concave in $R^d$ and let $f$ be its density function.
\begin{enumerate}
\vspace{-2mm}
\setlength{\itemindent}{-2mm}
\item[(a)]
If $D$ is isotropic then
$\P_{x \sim D} {[ ||X||  \geq \alpha \sqrt{d}]} \leq e^{-\alpha +1}.$
If $d=1$ then:
$\P_{x \sim D} {[ X \in [a,b]]} \leq |b-a|.$
\item[(b)] If $D$ is isotropic, then $f(x) \geq 2^{-7d}2^{−9d||x||}$ whenever $0 \leq ||x|| \leq 1/9$.
 Furthermore,
$
2^{-7d}\leq f(0) \leq d(20d)^{d/2},
$
and
$
f(x) \leq A(d) \exp(-B(d) || x ||),
$
where
$A(d)$ is $2^{8d} d^{d/2} e$ and
$B(d)$ is $\frac{2^{-7d}}{2(d-1)(20(d-1))^{(d-1)/2}}$, for all $x$
of any norm.
 \item[(c)] All marginals of $D$ are  log-concave. If $D$  is isotropic, its marginals are isotropic as well.
 \item[(d)] If $\E[\norm{X}^2]= C^2$, then  $\P {[ ||X||  \geq R C]} \leq e^{-R +1}.$
 \item[(e)] If $D$ is isotropic and $d=1$ we have $f(0) \geq 1/8$ and $f(x) \leq 1$ for all $x$.
\end{enumerate}
\end{lemma}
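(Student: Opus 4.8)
\emph{Proof plan.}
The plan is to reduce each of (a)--(e) to one of three standard facts about log-concave measures and to quote their sharp forms from \citep{lv:2007} and \citep{KLT09}: an exponential tail bound for the Euclidean norm, explicit pointwise bounds on the density, and Pr\'ekopa's theorem on marginals. I would prove (d) first, since (a) is exactly its isotropic instance: for an isotropic $D$ one has $\E[\|X\|^2] = \mathrm{tr}(I_d) = d$, so (d) with $C=\sqrt d$ gives $\P[\|X\| \ge \alpha\sqrt d] \le e^{-\alpha+1}$; and the one-dimensional claim $\P[X \in [a,b]] \le |b-a|$ in (a) is then immediate from the bound $\sup f \le 1$ recorded in (e), since $\int_a^b f \le b-a$.

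For (d): starting from $\E[\|X\|^2] = C^2$, Markov's inequality shows that a centered Euclidean ball of radius a constant multiple of $C$ already carries all but a small constant fraction of the mass; log-concavity then forces the mass outside dilations of that ball to decay exponentially in the dilation factor. This is the Borell-type reverse-concentration property of log-concave measures, equivalently a consequence of the moment-growth estimate $\E[\|X\|^k]^{1/k} = O(k\,\E\|X\|)$ valid for log-concave laws; tracking the constants yields the stated $e^{-R+1}$, which I would cite directly from \citep{lv:2007}.

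Items (b) and (e) are the explicit density bounds. The skeleton is that an isotropic log-concave density can be neither too peaked at its mode (else the variance would be too small, yielding the upper bounds $f(0) \le d(20d)^{d/2}$ and, in one dimension, $f\le 1$) nor too flat (else the variance would be too large, yielding $f(0) \ge 2^{-7d}$ and $f(0)\ge 1/8$ in one dimension); log-concavity then interpolates, since a log-concave density that is bounded on a ball and integrates to $1$ must decay exponentially outside that ball --- giving $f(x)\le A(d)\exp(-B(d)\|x\|)$ everywhere and $f(x) \ge 2^{-7d}2^{-9d\|x\|}$ near the origin. I would quote these with the stated constants from \citep{KLT09} (density upper bound) and \citep{lv:2007} rather than re-derive them.

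Finally, (c): that every lower-dimensional marginal of a log-concave density is again log-concave is Pr\'ekopa's theorem (a consequence of the Pr\'ekopa--Leindler inequality), which I would cite directly; preservation of isotropy is trivial, since if $\E[X]=0$ and $\mathrm{Cov}(X)=I_d$ then for the projection $\pi$ onto any coordinate subspace $\E[\pi X] = \pi\,\E[X] = 0$ and $\mathrm{Cov}(\pi X) = \pi I_d \pi^{\top}$ is the identity on that subspace. The only place where any genuine care is needed is carrying the explicit dimension-dependent constants through (b); I do not anticipate a real obstacle anywhere.
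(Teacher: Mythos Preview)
Your proposal is correct and matches the paper's approach: the paper does not prove this lemma at all but simply states it as a summary of known facts, citing \citep{lv:2007} for most items and \citep{KLT09} for the density upper bound. Your plan to cite the same sources, together with the brief sketches connecting the parts (e.g., deriving (a) from (d) via $\E[\|X\|^2]=d$, and the interval bound in (a) from the $\sup f\le 1$ bound in (e)), is entirely appropriate and in fact more detailed than what the paper provides.
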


Throughout our paper we will use the fact that there exists a universal constant $c$ such that the probability of disagreement
of any two homogeneous linear separators  is lower bounded by the c times the angle between their normal vectors. This follows by projecting the region of disagreement  in the space given by the two normal vectors, and then using properties of log-concave distributions in 2-dimensions.
The proof is implicit in earlier works (e.g., \citep{Vem10});
for completeness, we include a proof in Appendix~\ref{a:angle}.

\newcommand{\lemmaanglebasic}
{
Assume $D$ is an isotropic log-concave in $R^d$. Then there exists $c$ such that for any two unit vectors $u$ and $v$ in $\reals^d$  we have
$ c \anglesep(v,u) \leq \distf_D(u,v).$
}

\begin{lemma}
\label{l:angle}
\label{L:ANGLE}
\lemmaanglebasic
\end{lemma}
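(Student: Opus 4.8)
The plan is to reduce the statement to two dimensions and then use the fact that a planar isotropic log-concave density is bounded below by an absolute positive constant on a disk of absolute radius around the origin.

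First I would observe that the event $\sign(u\cdot x)\neq\sign(v\cdot x)$ depends on $x$ only through its orthogonal projection onto $\mathrm{span}(u,v)$. Hence $\distf_D(u,v)=\distf_{\bar D}(u,v)$, where $\bar D$ denotes the marginal of $D$ on $\mathrm{span}(u,v)$, and by Lemma~\ref{isotropic-basic}(c) the distribution $\bar D$ is an isotropic log-concave distribution on $\reals^2$ (the degenerate case $u=\pm v$ is trivial: then $\anglesep(u,v)\in\{0,\pi\}$ and the claim holds for any $c\le 1/\pi$). So it suffices to prove the statement for $d=2$.

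Second, I would describe the disagreement region geometrically. Writing $\beta:=\anglesep(u,v)\in(0,\pi)$, the set $\{x\in\reals^2:\sign(u\cdot x)\neq\sign(v\cdot x)\}$ is the symmetric difference of two half-planes whose boundary lines cross at the origin at angle $\beta$; a one-line check in polar coordinates shows this set is a ``double wedge'' — two opposite sectors — of total angular measure exactly $2\beta$, for every $\beta\in(0,\pi)$. Third, I would lower bound its mass: applying Lemma~\ref{isotropic-basic}(b) with $d=2$, the density $f$ of $\bar D$ satisfies $f(x)\ge 2^{-14}\,2^{-18\|x\|}\ge 2^{-16}$ whenever $\|x\|\le 1/9$, so integrating this lower bound over the part of the double wedge inside the disk of radius $1/9$ gives
\[
\distf_{\bar D}(u,v)\;\ge\;2^{-16}\int_0^{1/9}(2\beta)\,r\,dr\;=\;\frac{2^{-16}}{81}\,\beta,
\]
which proves the lemma with, say, $c=\min\{2^{-16}/81,\;1/\pi\}$ (the second term only to keep the bound consistent with $\distf_D(u,v)\le 1$).

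The only point requiring any care is that the density lower bound of Lemma~\ref{isotropic-basic}(b) holds on a region of absolute size precisely because we are in two dimensions; this is why the reduction to the plane $\mathrm{span}(u,v)$ is performed before invoking it. Everything else is a routine computation, so I do not expect any real obstacle.
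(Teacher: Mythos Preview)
Your proposal is correct and follows essentially the same approach as the paper's proof: project onto the two-dimensional span of $u$ and $v$, use Lemma~\ref{isotropic-basic}(c) to retain isotropic log-concavity, then lower-bound the mass of the disagreement wedge inside the disk of radius $1/9$ via the density lower bound of Lemma~\ref{isotropic-basic}(b). You supply explicit constants and handle the degenerate case $u=\pm v$, but the argument is the same as the paper's.
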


To analyze our active and passive learning algorithms we provide a novel characterization of the region of disagreement of two linear separators under a log-concave measure:
\begin{theorem}
\label{lemma:vectors-sophist}
For any $c_1 > 0$, there is a $c_2 > 0$ such that the
following holds.
Let $u$ and $v$ be two unit vectors in $R^d$, and assume that
$\theta(u,v) = \alpha < \pi/2$. If $D$
is isotropic log-concave in $R^d$, then:
\begin{equation}
\label{e:largemargin}
\P_{x \sim D} [ \mathrm{sign}(u \cdot x) \neq \mathrm{sign}(v \cdot x)
\mbox{ and }
| v \cdot x| \geq c_2 \alpha]
 \leq c_1 \alpha.
\end{equation}
\end{theorem}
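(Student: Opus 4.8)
The plan is to reduce the statement to a two-dimensional computation and then exploit the exponential tail of log-concave densities. First I would observe that the event in~(\ref{e:largemargin}) depends on $x$ only through its orthogonal projection onto $V:=\mathrm{span}(u,v)$, since $u\cdot x=u\cdot\Pi_V x$ and $v\cdot x=v\cdot\Pi_V x$. If $\alpha=0$ the disagreement region is empty and there is nothing to prove, so I may assume $\alpha\in(0,\pi/2)$; then $u,v$ are linearly independent, $\dim V=2$, and by Lemma~\ref{isotropic-basic}(c) the law of $\Pi_V x$ is isotropic log-concave on $V\cong\reals^2$ (with $u,v$ still unit vectors at angle $\alpha$ in any orthonormal frame for $V$). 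So it suffices to prove the bound assuming $d=2$ and $D$ isotropic log-concave in $\reals^2$.

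Next I would set up polar coordinates $x=(r\cos\phi,r\sin\phi)$ with $v=(1,0)$ and $u=(\cos\alpha,\sin\alpha)$. The separator of $v$ is the line through angles $\{\pm\pi/2\}$ and the separator of $u$ the line through angles $\{\pm\pi/2+\alpha\}$, so the disagreement region $\{\sgn(u\cdot x)\neq\sgn(v\cdot x)\}$ is the union of the two opposite wedges $W_1=\{\phi\in(-\pi/2,\,-\pi/2+\alpha)\}$ and $W_2=\{\phi\in(\pi/2,\,\pi/2+\alpha)\}$, each of angular width $\alpha$ and each lying between the two separator lines. The key elementary point is that for $x$ in either wedge $|v\cdot x|=r\,|\cos\phi|=r\sin\psi$, where $\psi\in(0,\alpha)$ is the angular distance from the ray through $x$ to the separator of $v$; since $\sin\psi\le\psi<\alpha$ this gives $|v\cdot x|<\alpha r=\alpha\norm{x}$. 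Hence the constraint $|v\cdot x|\ge c_2\alpha$ forces $\norm{x}>c_2$, so the region in~(\ref{e:largemargin}) is contained in $(W_1\cup W_2)\cap\{\norm{x}>c_2\}$.

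It then remains to bound the mass of that set, which is where the tail estimate enters. Using the density upper bound $f(x)\le A\,e^{-B\norm{x}}$ of Lemma~\ref{isotropic-basic}(b) with $d=2$ (so $A=A(2),\,B=B(2)$ are absolute constants) and integrating in polar coordinates over a wedge of angular width $\alpha$,
\[
\Pr_{x\sim D}\bigl[\,x\in W_i,\ \norm{x}>c_2\,\bigr]\ \le\ \int_0^{\alpha}\!\!\int_{c_2}^{\infty}A\,e^{-Br}\,r\,dr\,d\psi\ =\ \alpha\cdot\frac{A}{B^2}\,(Bc_2+1)\,e^{-Bc_2}.
\]
Summing over $i\in\{1,2\}$, the left-hand side of~(\ref{e:largemargin}) is at most $\frac{2A}{B^2}(Bc_2+1)e^{-Bc_2}\cdot\alpha$, and since $(Bc_2+1)e^{-Bc_2}\to 0$ as $c_2\to\infty$ I can pick $c_2=c_2(c_1)$ making this coefficient at most $c_1$, which proves the theorem. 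I expect the only real subtlety to be the geometric bookkeeping in the second step—correctly identifying the disagreement region as two wedges of width $\alpha$ sitting between the separator lines and verifying $|v\cdot x|<\alpha\norm{x}$ on them—since that inequality is exactly what converts the margin condition $|v\cdot x|\ge c_2\alpha$ into the distance-from-origin condition $\norm{x}>c_2$ that the log-concave tail bound can absorb; everything after that is a one-line integral. (One could instead integrate directly over $\{r\ge c_2\alpha/\sin\psi\}$ inside each wedge, obtaining the same bound slightly more messily.)
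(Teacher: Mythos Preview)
Your proposal is correct and follows essentially the same approach as the paper: reduce to $d=2$ via Lemma~\ref{isotropic-basic}(c), observe that in the disagreement wedge the margin condition $|v\cdot x|\ge c_2\alpha$ forces $\|x\|>c_2$ (the paper phrases this as $\theta(x,v)\ge \pi/2-\alpha$ and $\cos(\pi/2-\alpha)\le\alpha$, which is the same inequality as your $\sin\psi\le\psi<\alpha$), and then invoke the exponential density upper bound from Lemma~\ref{isotropic-basic}(b). The only difference is in the final calculation: you integrate directly in polar coordinates over $\{r>c_2\}$, whereas the paper decomposes into annuli $B((i{+}1)c_2)\setminus B(ic_2)$ and sums; your version is a bit cleaner but the argument is the same.
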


\begin{proof}
Choose $c_1,c_2 > 0$.  We will show that, if $c_2$ is large enough
relative to
$1/c_1$,
then (\ref{e:largemargin}) holds.  Let $b = c_2 \alpha$.
Let $E$ be the set whose probability we want to bound.
Since the event under consideration only concerns the projection
of $x$ onto the span of $u$ and $v$, Lemma~\ref{isotropic-basic}(c) implies
we can assume without loss of generality that $d=2$.

Next, we claim that each member $x$ of $E$ has
$|| x || \geq b/\alpha = c_2$.  Assume without loss of generality
that $v \cdot x$ is positive.  (The other case is symmetric.)  Then
$u \cdot x < 0$, so the angle of $x$ with $u$ is
obtuse, i.e. $\theta(x,u) \geq \pi/2$.
Since $\theta(u,v) = \alpha$, this implies that
$\theta(x,v) \geq \pi/2 - \alpha$.
But $x \cdot v \geq b$, and $v$ is unit length, so
$|| x || \cos \theta(x,v) \geq b$, which,
since $\theta(x,v) \geq \pi/2 - \alpha$,
implies
$
|| x || \cos (\pi/2 - \alpha) \geq b;
$
This, since
$\cos (\pi/2 - \alpha) \leq \alpha$ for all $\alpha \in [0,\pi/2]$, in turn
implies
$
|| x || \geq b/\alpha = c_2.
$
This implies that, if $B(r)$ is a ball of radius $r$ in $\R^2$, that
\begin{equation}
\label{e:shells}
\P[ E ] = \sum_{i=1}^{\infty}
                \P[ E \cap (B((i+1) c_2) - B(i c_2)) ].
\end{equation}
To obtain the desired bound, we
carefully bound each term in the RHS. Choose $i \geq 1$.

Let $f(x_1,x_2)$ be the density of $D$.  We have
\[
\P[ E \cap (B((i+1) c_2) - B(i c_2)) ] =
  \int_{(x_1,x_2) \in B((i+1) c_2) - B(i c_2)} 1_E(x_1,x_2) f(x_1,x_2) \; dx_1 dx_2.
\]
Applying the density upper bound from Lemma~\ref{isotropic-basic}
with $d=2$, there are constants $C_1$ and $C_2$ such that
\begin{align*}
\P[ E \cap (B((i+1) c_2) - B(i c_2)) ]
& \leq
  \int_{(x_1,x_2) \in B((i+1) c_2) - B(i c_2)}
     1_E(x_1,x_2) C_1 \exp(- c_2 C_2 i) \; dx_1 dx_2 \\
& =
 C_1 \exp(- c_2 C_2 i)
     \int_{(x_1,x_2) \in B((i+1) c_2) - B(i c_2)} 1_E(x_1,x_2) \; dx_1 dx_2.
\end{align*}
If we include $B(i c_2)$ in the integral again, we get
\[
\P[ E \cap (B((i+1) c_2) - B(i c_2)) ]
 \leq
 C_1 \exp(- c_2 C_2 i)  \int_{(x_1,x_2) \in B((i+1) c_2)} 1_E(x_1,x_2) \; dx_1 dx_2.
\]
Now, we exploit the fact that the integral above is a rescaling of
a probability with respect to the uniform distribution.
Let $C_3$ be the volume of the unit ball in $\R^2$.  Then, we have
\[
\P[ E \cap (B((i+1) c_2) - B(i c_2)) ]
  \leq  C_1 \exp(- c_2 C_2 i)  C_3 (i+1)^2 c_2^2 \alpha/\pi \\
 = C_4 c_2^2 \alpha (i+1)^2 \exp(-c_2 C_2 i ),
\]
for $C_4 = C_1 C_3 /\pi$.  Returning to (\ref{e:shells}), we
get
\[
\P[ E ] = \sum_{i=1}^{\infty}  C_4 c_2^2 \alpha (i+1)^2 \exp(-c_2 C_2 i)
 =  C_4  c_2^2 \times \frac{4 e^{2 c_2 C_2} - 3 e^{c_2 C_2} + 1}{\left(e^{c_2 C_2} - 1 \right)^3} \times \alpha.
\]
Since
$
\lim_{c_2 \rightarrow \infty}
c_2^2 \times \frac{4 e^{2 c_2 C_2} - 3 e^{c_2 C_2} + 1}{\left(e^{c_2 C_2} - 1\right)^3}
 = 0,
$
this completes the proof.
\end{proof}

We note that a weaker result of this type was proven (via different techniques)  for the uniform distribution in the unit ball in~\citep{BBZ07}. In addition to being more general, Theorem~\ref{lemma:vectors-sophist} is tighter and more refined even for this specific case --  this improvement is essential for obtaining tight bounds for polynomial time algorithms for passive learning (Section~\ref{passive}) and better bounds for active learning as well.

\vspace{-2mm}
\section{Active Learning}
\vspace{-1mm}
In this section we analyze a margin-based algorithm for actively learning
linear separators under log-concave distributions~\citep{BBZ07} (Algorithm~\ref{fig:active-uniform-simple-offline}).
Lower bounds proved in Section~\ref{s:lower} show that this algorithm needs
exponentially fewer labeled examples than any passive learning
algorithm.

\forspace{
Algorithm~\ref{fig:active-uniform-simple-offline} is somewhat like the ellipsoid algorithm, except (a) it
maintains a ball containing the target, instead of an ellipse, (b) it
updates this ball only after adding multiple constraints, instead of
just one, and (c) it maintains the target inside of the ball only with
high probability, instead of certainly.  Picking multiple random
constraints allows the algorithm to ensure that any new hypothesis satisfying
them is {\em closer} to the target than the old one, and therefore can
be enclosed in a smaller ball.
}
This algorithm has been previously proposed and analyzed in~\citep{BBZ07} for the special case of the uniform distribution in the unit ball.
In this paper we analyze it for the much more general class of log-concave distributions.

\begin{algorithm}[htbp]
{\bf Input}:
a sampling oracle for $\distrib$,  a labeling oracle,
sequences $m_k>0$, $k\in \integers$  (sample sizes) and
$b_k >0$, $k\in \integers$ (cut-off values).

{\bf Output}: weight vector $\hat{w}_{\rounds}$.
\begin{itemize}
\vspace{-2mm}
\setlength{\itemindent}{-4mm}
\setlength{\itemsep}{-1.5mm}
\item Draw $m_1$ examples from $\distrib$, label them and put them in $W(1)$.
\item {\bf iterate} $k=1,\ldots, \rounds$
   \begin{itemize}
   \vspace{-2mm}
\setlength{\itemindent}{-4mm}
\setlength{\itemsep}{-1.5mm}
     \item find a hypothesis $\hat{w}_k$ with $\|\hat{w}_k\|_2=1$  consistent
with all labeled examples in $W(k)$.
     \item let $W(k+1)=W(k).$
     \item  until $m_{k+1}$ additional data points are labeled,
           draw sample $x$ from $\distrib$
       \begin{itemize}
\setlength{\itemindent}{-4mm}
           \item if $|\hat{w}_{k}\cdot x| \geq b_k$, then reject $x$,
           \item else, ask for label of $x$, and put into $W(k+1)$.
       \end{itemize}
   \end{itemize}
\end{itemize}
\caption{Margin-based Active Learning}
\label{fig:active-uniform-simple-offline}
\end{algorithm}
\vspace{-2mm}

\begin{theorem}
\label{th:agg.margin}
Assume  $D$ is isotropic  log-concave in $R^d$.
There exist constants $C_1,C_2$ s.t.\ for $d
\geq 4$, and for any $\epsilon, \delta>0$, $\epsilon < 1/4$, using
Algorithm~\ref{fig:active-uniform-simple-offline} with
 $b_k=\frac{C_1}{2^{k}} $ and $m_k = C_2
\left( d + \ln\frac{1 + s - k}{\delta}\right)$,
after $s= \roundsEPSwc$
iterations, we find a separator of error at most $\epsilon$ with
probability $1-\delta$.  The total number of labeled examples
needed is
$O((d + \log(1/\delta) + \log \log (1/\epsilon))\log (1/\epsilon))$.
\end{theorem}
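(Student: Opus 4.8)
The plan is to argue by induction on the round index $k$, maintaining the invariant that, conditioned on the algorithm's history through round $k$, with probability at least $1-\sum_{j\le k}\delta_j$ the hypothesis $\hat{w}_k$ has $\err(\hat{w}_k)\le\epsilon_k$, where $\epsilon_k:=\epsilon_1 2^{-(k-1)}$ for a small absolute constant $\epsilon_1$ to be fixed, and where the cut-offs are calibrated so that $b_k=C_1 2^{-k}=\tfrac{c_2'}{c}\,\epsilon_k$ (here $c$ is the constant of Lemma~\ref{l:angle}, and $c_2',C_1$ are chosen below). For the base case, $m_1=\Theta(d+\log(1/\delta_1))$ i.i.d.\ examples from $D$ force any consistent $\hat{w}_1$ to satisfy $\err(\hat{w}_1)\le\kappa$ for an absolute constant $\kappa$ we may pick as small as we like, by the standard realizable-case VC bound for halfspaces; set $\epsilon_1:=\kappa$. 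By Lemma~\ref{l:angle}, $\err(\hat{w}_k)\le\epsilon_k$ gives $\theta(\hat{w}_k,w^*)\le\epsilon_k/c$, so $b_k\ge c_2\,\theta(\hat{w}_k,w^*)$ whenever $c_2'\ge c_2$, where $c_2$ is the constant that Theorem~\ref{lemma:vectors-sophist} returns for a small $c_1$ fixed below.

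For the inductive step, fix the algorithm's state at the start of round $k+1$, so that $\hat{w}_k$ and the band $S_k=\{x:|\hat{w}_k\cdot x|<b_k\}$ are determined and $\err(\hat{w}_k)\le\epsilon_k$. The $m_{k+1}$ newly labeled points are i.i.d.\ from $D_{S_k}:=D(\,\cdot\mid S_k)$ --- exactly what the rejection step produces --- and $\hat{w}_{k+1}$ is consistent with them, so the realizable VC bound gives $\distf_{D_{S_k}}(\hat{w}_{k+1},w^*)\le\kappa$ with probability $1-\delta_{k+1}$. I then bound $\err(\hat{w}_{k+1})=\Pr_D[\hat{w}_{k+1}(x)\ne w^*(x)]$ by splitting on $S_k$. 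Using $c_3 b_k\le\Pr_D[S_k]\le 2b_k$ (the band-mass estimates from Lemma~\ref{isotropic-basic}), the part inside $S_k$ is at most $\kappa\Pr_D[S_k]\le 2\kappa b_k$. For the part outside $S_k$, write $\{\hat{w}_{k+1}(x)\ne w^*(x)\}\subseteq\{\hat{w}_{k+1}(x)\ne\hat{w}_k(x)\}\cup\{\hat{w}_k(x)\ne w^*(x)\}$: the $\hat{w}_k$-vs-$w^*$ piece with $|\hat{w}_k\cdot x|\ge b_k$ is at most $c_1\theta(\hat{w}_k,w^*)\le c_1\epsilon_k/c$ by Theorem~\ref{lemma:vectors-sophist} (with $v=\hat{w}_k,\ u=w^*$), while the $\hat{w}_{k+1}$-vs-$\hat{w}_k$ piece with $|\hat{w}_k\cdot x|\ge b_k$ is at most $c_1\theta(\hat{w}_{k+1},\hat{w}_k)$ by Theorem~\ref{lemma:vectors-sophist}, \emph{provided} $b_k\ge c_2\theta(\hat{w}_{k+1},\hat{w}_k)$.

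Establishing that proviso is the crux, and this is the step I expect to be the main obstacle. From $\distf_{D_{S_k}}(\hat{w}_{k+1},w^*)\le\kappa$ and $\distf_{D_{S_k}}(\hat{w}_k,w^*)=\Pr_D[\hat{w}_k(x)\ne w^*(x),\,x\in S_k]/\Pr_D[S_k]\le\err(\hat{w}_k)/(c_3 b_k)=O(1/c_2')$, the triangle inequality for $\distf$ gives $\distf_{D_{S_k}}(\hat{w}_{k+1},\hat{w}_k)\le\eta$ with $\eta$ small when $\kappa$ is small and $c_2'$ large. I then invoke the following structural fact, which I derive from Lemma~\ref{l:angle} and Theorem~\ref{lemma:vectors-sophist}: if $\distf_{D_S}(u,v)\le\eta\le\eta_0$ for $D_S:=D(\cdot\mid S)$, $S=\{x:|v\cdot x|<b\}$, then $\theta(u,v)\le C'\eta b$. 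To see it, reduce to two dimensions as in the proof of Theorem~\ref{lemma:vectors-sophist}, let $R$ be the disagreement wedge of $u,v$; by Lemma~\ref{l:angle}, $\Pr_D[R]\ge c\,\theta(u,v)$, and rotating $v$ toward $u$ by $\min(\theta(u,v),b/c_2)$ produces a sub-wedge $R'\subseteq R$ with $b\ge c_2\theta(v,v')$, so by Theorem~\ref{lemma:vectors-sophist} at most half of $R'$ lies outside $S$, whence $\Pr_D[R\cap S]\ge\Pr_D[R'\cap S]=\Omega(\min(\theta(u,v),b))$; but $\Pr_D[R\cap S]\le\eta\Pr_D[S]\le 2\eta b$, and comparing the two forces $\theta(u,v)=O(\eta b)$. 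Hence $\theta(\hat{w}_{k+1},\hat{w}_k)\le C'\eta b_k\le b_k/c_2$ once $\eta$ is small. Collecting the three contributions, $\err(\hat{w}_{k+1})\le\big(2\kappa c_2'/c+c_1/c+c_1 C'\eta c_2'/c\big)\epsilon_k$, and this is $\le\epsilon_k/2=\epsilon_{k+1}$ provided the constants are fixed in the order: $c_1$ small (absolute), which determines $c_2$ and $C'$; then $c_2'\ge c_2$ large enough that $\eta\le\eta_0$ and $C'\eta\le 1/c_2$; then $\kappa$ (hence the sample-size constant $C_2$) small enough; then $C_1=2c_2'\kappa/c$ --- no choice depends on a later one, so there is no circularity.

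To finish, after $s=\lceil\log_2\tfrac{1}{c'\epsilon}\rceil$ rounds the invariant yields $\err(\hat{w}_s)\le\epsilon_1 2^{-(s-1)}\le\epsilon$ for a suitable absolute constant $c'$ in the exponent (distinct from the $c$ of Lemma~\ref{l:angle}). The confidences $\delta_k$ are allocated so that $\sum_{k=1}^{s}\delta_k\le\delta$; since achieving $D_{S_k}$-error $\kappa$ (a constant) with probability $1-\delta_k$ costs $\Theta(d+\log(1/\delta_k))$ labels, letting $\delta_k$ grow as $k\to s$ (e.g.\ $\delta_k\propto\delta/(1+s-k)^2$) gives precisely $m_k=C_2\big(d+\ln\tfrac{1+s-k}{\delta}\big)$ up to the implied constants --- i.e.\ one may afford higher failure probability once $\hat{w}_k$ is already accurate, which is what makes the total optimal. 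Summing, $\sum_{k=1}^{s}m_k=O\big((d+\log(1/\delta)+\log\log(1/\epsilon))\log(1/\epsilon)\big)$ since $s=\Theta(\log(1/\epsilon))$ and $\log(1+s-k)=O(\log\log(1/\epsilon))$; and round $k+1$ draws $O(m_{k+1}/\Pr_D[S_k])=O(m_{k+1}2^k)$ unlabeled points, so the total unlabeled sample is $O(m_s/\epsilon)$, polynomial. Apart from the structural fact of the previous paragraph --- upgrading ``small conditional error on the band of $\hat{w}_k$'' to ``small angle to $\hat{w}_k$'' --- and the order-sensitive calibration of $c_1,c_2,c_2',\kappa$, the remaining ingredients are just the standard realizable VC bound, Lemma~\ref{l:angle}, Theorem~\ref{lemma:vectors-sophist}, and bookkeeping.
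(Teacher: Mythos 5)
Your proposal is correct, but at the decisive step it takes a genuinely different route from the paper. The crux in both arguments is controlling $\theta(\hat{w}_{k+1},\hat{w}_k)$ so that Theorem~\ref{lemma:vectors-sophist} can be applied to the pair of successive hypotheses outside the band. The paper gets this almost for free by exploiting the fact that the working sets are nested ($W(k+1)\supseteq W(k)$): its induction hypothesis is that \emph{every} classifier consistent with $W(k)$ has unconditional error at most $c2^{-k}$, so the new hypothesis, being consistent with the old data as well, already has small unconditional error, hence small angle to $w^*$ by Lemma~\ref{l:angle}, hence small angle to $\hat{w}_k$ by the triangle inequality for angles. You instead use only the consistency of $\hat{w}_{k+1}$ with the \emph{new} band-conditional sample, which forces you to prove an extra structural lemma upgrading small disagreement under $D(\cdot\mid S_k)$ to a small angle; your derivation of that lemma (lower-bounding the in-band mass of a sub-wedge via Lemma~\ref{l:angle} minus the Theorem~\ref{lemma:vectors-sophist} tail, and comparing with the upper bound $\eta\Pr_D[S_k]$) is sound, and your ordering of the constants $c_1,c_2,c_2',\kappa$ resolves without circularity. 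What each approach buys: the paper's is shorter and, because its invariant covers all consistent hypotheses, it directly powers the passive-learning result of Theorem~\ref{t:passive}; yours is self-contained round by round and would continue to work for variants of the algorithm that discard old labeled examples (which is essentially the situation the paper faces in the noisy case, where Algorithm~2 instead imposes the explicit constraint $\hat{w}_k\in B(\hat{w}_{k-1},r_k)$ to recover the angle bound). Your band-conditional-error-to-angle lemma is a nice reusable fact that does not appear explicitly in the paper.
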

\begin{proof}
Let $c$ be the constant
from Lemma~\ref{l:angle}.
We will show, using induction, that, for all $k \leq s$, with
probability at least
$
1 - \frac{\delta}{2} \sum_{i < k} \frac{1}{(1 + s - i)^2},
$
any $\hat{w}$ consistent with the data in the working set $W(k)$ has $\err(\hat{w})
\leq c 2^{-k}$, so that, in particular, $\err(\hat{w_k}) \leq
c 2^{-k}$.

The case where $k=1$ follows from the standard VC bounds
(see e.g.,\citep{VC}). Assume now the claim is true for $k-1$ ($k>1$), and consider the $k$th
iteration.  Let
$S_1=\{x: |\hat{w}_{k-1}\cdot x| \leq
b_{k-1}\}$, and $S_2=\{x: |\hat{w}_{k-1}\cdot x| > b_{k-1}\}.$
By the
induction hypothesis, we know that, with probability at least
$
1 - \frac{\delta}{2} \sum_{i < k-1} \frac{1}{(1 + s - i)^2},
$
all $\hat{w}$ consistent with $W(k-1)$,
including $\hat{w}_{k-1}$, have errors at
most $c 2^{-(k-1)}$.  Consider an arbitrary such
$\hat{w}$.  By
Lemma~\ref{l:angle} we have $\theta(\hat{w}, w^*) \leq 2^{-(k-1)}$
and $\theta(\hat{w}_{k-1}, w^*) \leq 2^{-(k-1)}$, so
$\theta(\hat{w}_{k-1}, \hat{w}) \leq 4 \times 2^{-k}$.
Applying Theorem~\ref{lemma:vectors-sophist}, there is a choice of
$C_1$ (the constant such that $b_{k-1} = C_1/2^{k-1}$) that satisfies
$\Pr((\hat{w}_{k-1} \cdot x) (\hat{w} \cdot x) < 0, x \in S_2)
  \leq  \frac{c 2^{-k}}{4 }$
and
$\Pr((\hat{w}_{k-1} \cdot x) (w^* \cdot x) < 0, x \in S_2)
  \leq \frac{c 2^{-k}}{4 }.$
So
\begin{equation}
\label{e:S2.ok}
\Pr((\hat{w} \cdot x) (w^* \cdot x) < 0, x \in S_2)
  \leq  \frac{c 2^{-k}}{2}.
\end{equation}

Now let us treat the case that $x \in S_1$.
Since we are labeling $m_{k}$ data points in $S_1$ at
iteration $k-1$, classic Vapnik-Chervonenkis bounds~\citeyearpar{VC} imply that,
if $C_2$ is a large enough absolute constant, then
with probability $1 - \delta/(4(1+s-k)^2)$, for all
$\hat{w}$ consistent with the  data in $W(k)$,
\begin{equation}
\label{e:er.given.S1}
\err(\hat{w}|S_1)
 = \Pr((\hat{w} \cdot x) (w^* \cdot x) < 0 \;|\; x \in S_1)
\leq \frac{c 2^{-k}}{4  b_k} =  \frac{c}{4 C_1}.
\end{equation}

Finally, since $S_1$ consists of those points that, after projecting
onto the direction $\hat{w}_{k-1}$, fall into an interval of length
$2 b_k$, Lemma~\ref{isotropic-basic} implies that
$
\Pr(S_1) \leq 2 b_k.
$
Putting this together with (\ref{e:S2.ok}) and (\ref{e:er.given.S1}),
with probability
$1 - \frac{\delta}{2} \sum_{i < k} \frac{1}{(1 + s - i)^2}$, we have
$\err({\hat{w}}) \leq  c 2^{-k}$, completing the proof.
\end{proof}

\vspace{-2mm}
\section{Passive Learning}
\vspace{-1mm}
\label{passive}
In this section we show how an analysis that was inspired by active
learning leads to optimal (up to constant factors)  bounds for polynomial-time
algorithms for passive learning.

\newcommand{\mainpassiverealizable}{
Assume that  $D$ is zero mean and log-concave in $R^d$.
There exists an absolute constant $C_3$ s.t.\ for $d
\geq 4$, and for any $\epsilon, \delta>0$, $\epsilon < 1/4$,
any algorithm that outputs a hypothesis that correctly
classifies
$m = \frac{C_3 (d + \log(1/\delta))}{\epsilon}$ examples
finds a separator of error at
most $\epsilon$ with probability $\geq 1-\delta$.
}

\begin{theorem}
\label{t:passive}
\mainpassiverealizable
\end{theorem}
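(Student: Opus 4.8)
The plan is to run the active-learning argument of Theorem~\ref{th:agg.margin} ``conceptually'' on a single i.i.d.\ sample, using the observation that a hypothesis consistent with all $m$ examples is in particular consistent with any sub-collection of them that we choose to single out. First I would reduce to the isotropic case: since $D$ is zero-mean log-concave, if its covariance $\Sigma$ is nonsingular the bijection $x \mapsto \Sigma^{-1/2}x$ turns $D$ into an isotropic log-concave distribution and maps homogeneous linear separators bijectively onto homogeneous linear separators, while preserving ``consistent with the sample'' and ``error''; if $\Sigma$ is singular one first restricts to its range (which carries all the mass) and the target to that subspace, so the claimed bound only improves. Fix the constant $c$ of Lemma~\ref{l:angle} and set $s = \roundsEPSwc$, so that $c 2^{-s} \leq \epsilon$. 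Also fix $c_1$ small (of order $c$), let $c_2 = c_2(c_1)$ be the constant produced by Theorem~\ref{lemma:vectors-sophist}, and set $C_1 = \Theta(c_2)$.

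I would then split the $m$ examples into consecutive blocks $W_1,\ldots,W_s$ with $|W_k| = n_k := \Theta\bigl(2^{k}(d + \log((1+s-k)/\delta))\bigr)$, and let $V_k \subseteq \C$ be the set of separators consistent with $W_1 \cup \cdots \cup W_k$ (it always contains $w^*$). The induction hypothesis is: with probability at least $1 - \frac{\delta}{2}\sum_{i\leq k}(1+s-i)^{-2}$, every $\hat w \in V_k$ satisfies $\err(\hat w) \leq c 2^{-k}$. Applied with $k=s$ and combined with $c2^{-s}\leq\epsilon$, this proves the theorem, because any hypothesis that correctly classifies all $m$ examples lies in $V_s$. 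The base case $k=1$ is the textbook VC bound on $n_1 = \Theta(d + \log(s/\delta))$ i.i.d.\ examples. For the total sample size, the geometric growth of the $n_k$ gives $\sum_k n_k = \Theta\bigl((d+\log(1/\delta))2^{s}\bigr) = \Theta\bigl((d+\log(1/\delta))/\epsilon\bigr)$; crucially the extra $\log(1+s-k)$ terms sum to only $\sum_k 2^{k}\log(1+s-k) = O(2^{s})$, since the heavy blocks are those with $s-k$ small, so they do not inflate the bound.

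For the inductive step, condition on $W_1,\ldots,W_{k-1}$, fix any ``preliminary'' hypothesis $\hat w_{k-1}\in V_{k-1}$, set $b_{k-1}=C_1/2^{k-1}$, and split $\reals^d$ into the band $S_1=\{x:|\hat w_{k-1}\cdot x|\leq b_{k-1}\}$ and its complement $S_2$. By the induction hypothesis, $\hat w_{k-1}$ and every $\hat w\in V_k\subseteq V_{k-1}$ have error $\leq c2^{-(k-1)}$, so by Lemma~\ref{l:angle} all of them make angle $\leq 2^{-(k-1)}$ with $w^*$, hence $\theta(\hat w_{k-1},\hat w)\leq 4\cdot 2^{-k}$. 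Applying Theorem~\ref{lemma:vectors-sophist} to the pairs $(\hat w_{k-1},\hat w)$ and $(\hat w_{k-1},w^*)$ --- with $b_{k-1}$ playing the role of $c_2\alpha$, which is exactly why $C_1$ must be taken large relative to $1/c$ --- and using that a sign disagreement between $\hat w$ and $w^*$ forces one between $\hat w$ and $\hat w_{k-1}$ or between $\hat w_{k-1}$ and $w^*$, yields $\Pr((\hat w\cdot x)(w^*\cdot x)<0,\ x\in S_2)\leq c2^{-k}/2$. For $S_1$, Lemma~\ref{isotropic-basic}(a) gives $\Pr(S_1)\leq 2b_{k-1}$, while the density lower bound of Lemma~\ref{isotropic-basic}(b) gives $\Pr(S_1)=\Omega(2^{-k})$; the latter lets a Chernoff bound guarantee $N_k := |W_k\cap S_1| = \Omega(d+\log((1+s-k)/\delta))$ with high probability, and, conditioned on which points fall in $S_1$, these are i.i.d.\ from $D|_{S_1}$, so a VC bound on $D|_{S_1}$ shows every $\hat w\in V_k$ has $\err(\hat w\mid S_1)\leq c/(8C_1)$ --- note this only asks $N_k$ to achieve a \emph{constant} conditional accuracy, so no $1/\epsilon$ factor appears here. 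Multiplying, $\Pr((\hat w\cdot x)(w^*\cdot x)<0,\ x\in S_1)\leq \tfrac{c}{8C_1}\cdot 2b_{k-1} = c2^{-k}/2$, and adding the two regions gives $\err(\hat w)\leq c2^{-k}$. A union bound over the (at most three) new failure events at stage $k$, jointly charged to the stage-$k$ budget $\frac{\delta}{2}(1+s-k)^{-2}$ (so that $\sum_{k\leq s}\frac{\delta}{2}(1+s-k)^{-2}<\delta$), closes the induction.

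The step I expect to be the crux is not any single inequality but the quantitative bookkeeping that keeps the bound tight. Two things must line up: (i) the blocks must grow exactly geometrically so that $\sum_k n_k = O((d+\log(1/\delta))/\epsilon)$ rather than $O((d/\epsilon)\log(1/\epsilon))$, which forces an uneven split of the confidence --- stage $k$ gets only $\delta/(1+s-k)^2$, so $n_k$ has to carry a $\log(1+s-k)$ term, and one must check these terms do not accumulate across stages; and (ii) once the preliminary hypothesis is already $c2^{-(k-1)}$-good, improving it \emph{inside the thin band} $S_1$ is only a constant-accuracy learning problem, costing $O(2^{k}(d+\log(\cdot)))$ examples instead of $O(4^{k}(d+\log(\cdot)))$ --- and this is precisely where the strength of Theorem~\ref{lemma:vectors-sophist} (a margin $\gamma=\Theta(\alpha)$ already suffices) together with the two-sided control $\Pr(S_1)=\Theta(2^{-k})$ from Lemma~\ref{isotropic-basic} are essential.
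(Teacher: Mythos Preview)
Your proposal is correct and follows essentially the same approach as the paper's own proof: reduce to the isotropic case via whitening, run the active-learning induction of Theorem~\ref{th:agg.margin} conceptually on the passive sample, and show that the total number of examples needed across the $s=\Theta(\log(1/\epsilon))$ stages is $O((d+\log(1/\delta))/\epsilon)$ thanks to the geometric growth of the per-stage budgets and the fact that $\sum_k 2^k\log(1+s-k)=O(2^s)$. The only cosmetic difference is that you pre-allocate the sample into blocks of deterministic sizes $n_k$ and then argue via Chernoff that enough of each block lands in the band $S_1$, whereas the paper phrases it as actually running Algorithm~\ref{fig:active-uniform-simple-offline} (which keeps drawing until $m_k$ points land in $S_1$) and then bounding, again via Chernoff, the total number of draws; the two arguments are interchangeable and use the same key estimates (Theorem~\ref{lemma:vectors-sophist}, the two-sided band probability from Lemma~\ref{isotropic-basic}, and the nonuniform confidence split $\delta/(1+s-k)^2$).
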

\begin{sketch}
We focus here on the case that $D$ is
isotropic. We can treat the non-isotropic case by
observing that the two cases are equivalent; one may pass between them by applying the whitening transform. (See Appendix~\ref{appendix:mainpassiverealizable} for details.)

While our analysis will ultimately provide a guarantee for any
learning algorithm that always outputs a consistent hypothesis, we
will use intermediate hypothesis of
Algorithm~\ref{fig:active-uniform-simple-offline} in the analysis.

Let $c$ be the constant from Lemma~\ref{l:angle}.  While
proving Theorem~\ref{th:agg.margin}, we proved that,
if Algorithm~\ref{fig:active-uniform-simple-offline} is run with
$b_k=\frac{C_1}{2^{k}} $ and $m_k = C_2 \left( d + \ln\frac{1 + s -
  k}{\delta}\right)$, that for all $k \leq s$, with probability
$
\geq 1 - \frac{\delta}{2} \sum_{i < k} \frac{1}{(1 + s - i)^2}
$
any
$\hat{w}$ consistent with the data in $W(k)$ has
$\err(\hat{w}) \leq c 2^{-k}$.
Thus, after $s = O(\log(1/\epsilon))$ iterations, with probability
at
least
$\geq 1 - \delta$, any linear classifier consistent with {\em all} the
training data has error $\leq \epsilon$, since any such classifier
is consistent with the examples in $W(s)$.

Now, let us analyze the number of examples used, including those
examples whose labels were not requested by
Algorithm~\ref{fig:active-uniform-simple-offline}.
Lemma~\ref{isotropic-basic} implies that there is a positive constant
$c_1$ such that $\P(S_1) \geq c_1 b_k$: again, $S_1$ consists of those
points that fall into an interval of length $2 b_k$ after projecting onto
$\hat{w}_{k-1}$.  The density is lower bounded by a constant when
$b_k \leq 1/9$, and we can use the bound for $1/9$ when $b_k > 1/9$.
The expected number of examples that we need before we
find $m_k$ elements of $S_1$ is therefore at most $\frac{m_k}{c_1 b_k}$.
Using a Chernoff bound, if we draw $\frac{2 m_k}{c_1 b_k}$
examples, the probability that we fail to get $m_k$ members of $S_1$
is at most $\exp\left(- m_k/6\right)$, which is
at most $\delta/(4(1+s-k)^2)$ if $C_2$ is large enough.
So, the total number of examples needed, $\sum_k \frac{2 m_k}{c_1 b_k}$,
is at most a constant factor more than
\begin{align*}
& \sum_{k=1}^s 2^k \left( d
             + \log \left( \frac{1 + s - k}{\delta}\right) \right)
  = O(2^s (d + \log(1/\delta))) + \sum_{k=1}^s 2^k \log (1 + s - k) \\
& = O\left(\frac{d + \log(1/\delta)}{\epsilon}\right)
            + \sum_{k=1}^s 2^k \log (1 + s - k).
\end{align*}
We can show $\sum_{k=1}^s 2^k \log (1 + s - k) =O(1/\epsilon)$,
completing the proof.
\end{sketch}

\smallskip

We conclude this section by pointing out several important facts and implications of Theorem~\ref{t:passive} and its proof.
\begin{enumerate}
\item[(1)] The separator in Theorem~\ref{t:passive} (and the one in Theorem~\ref{th:agg.margin} ) can be found in {\em polynomial
time}, for example by using linear programming. 
\item[(2)] The analysis of Theorem~\ref{t:passive} also bounds
the number of unlabeled examples needed by the active learning algorithm
of Theorem~\ref{th:agg.margin}.  This shows that an algorithm can
request a nearly optimally small number of labels without increasing the total
number of examples required by more than a constant factor. Specifically, in round $k$, we only need $2^k (d + \ln[(1+s -k)/\delta])$
unlabeled examples (whp), where $s= O(\log(1/\epsilon))$, so
the total number of unlabeled examples needed  over all rounds is $O(d/\epsilon +\log(1/\delta)/\epsilon)$.
\end{enumerate}

\vspace{-2mm}
\section{More Distributions}
\label{sec:more_distr}
\vspace{-1mm}
In this section we consider learning with respect to a more general
class of distributions. We start by providing a general set of
 conditions on a set $\D$ of distributions
that is sufficient for
efficient passive and active learning w.r.t.\ distributions
in $\D$. We now consider  nearly log-concave distributions, an interesting, more general class containing log-concave distributions, considered
previously in~\citep{kannan91} and~\citep{shie07}.  We then  prove that isotropic nearly log-concave distributions satisfy our sufficient conditions;
in Appendix~\ref{app:more_distr}, we also show how to remove the
assumption that the distribution is isotropic.

\begin{definition}
A set $\cal D$ of distributions is {\em admissible} if it satisfies
the following:
\begin{itemize}
\vspace{-2mm}
\setlength{\itemindent}{-3.5mm}
\setlength{\itemsep}{-1.5mm}
\item There exists $c$ such that for any $D \in {\cal D}$ and any
two unit vectors $u$ and $v$ in $\reals^d$  we have
$ c \anglesep(v,u) \leq \distf_D(u,v).$
\item   For any $c_1 > 0$, there is a $c_2 > 0$ such that the
following holds for all $D \in {\cal D}$.
Let $u$ and $v$ be two unit vectors in $R^d$ s.t.
$\theta(u,v) = \alpha < \pi/2$.  Then
$
\P_{x \sim D} [ \mathrm{sign}(u \cdot x) \neq \mathrm{sign}(v \cdot x),
| v \cdot x| \geq c_2 \alpha]
 \leq c_1 \alpha.
$
\item   There are positive constants $c_3, c_4, c_5$ such that,
for any $D' \in {\cal D}$, for any projection $D$ of
$D'$ onto a one-dimensional subspace, the density $f$ of
$D$ satisfies $f(x) < c_3$ for all $x$ and $f(x) > c_4$ for all $x$ with $|x| < c_5$.
\end{itemize}
\end{definition}

The proofs of Theorem~\ref{th:agg.margin}
and Theorem~\ref{t:passive} can be used without modification to show:
\begin{theorem}
\label{t:admissible}
If $\D$ is admissible, then arbitrary
$f \in \C$ can be learned with respect to arbitrary distributions in
$\D$ in polynomial time in the active learning model from
$O((d + \log(1/\delta) + \log \log (1/\epsilon))\log (1/\epsilon))$
labeled examples, and in the passive learning model from
$O\left(\frac{d + \log(1/\delta)}{\epsilon}\right)$ examples.
\end{theorem}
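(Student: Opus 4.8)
The plan is to observe that the three conditions in the definition of admissibility isolate precisely the distribution-specific facts invoked in the proofs of Theorem~\ref{th:agg.margin} and Theorem~\ref{t:passive}; so I would re-run those two arguments essentially verbatim, substituting each appeal to a property of isotropic log-concave densities by the corresponding admissibility condition, and then check that the constants still compose.

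For the active learning bound, the inductive step in the proof of Theorem~\ref{th:agg.margin} uses distribution-specific information in exactly three spots. First, to pass from $\err(\hat w) \le c\,2^{-(k-1)}$ to a bound on the angles $\anglesep(\hat w, w^*)$ and $\anglesep(\hat w_{k-1}, w^*)$, it uses $c\,\anglesep(v,u) \le \distf_D(u,v)$, which is the first admissibility bullet (playing the role of Lemma~\ref{l:angle}). Second, to control the contribution of $S_2 = \{x : |\hat w_{k-1}\cdot x| > b_{k-1}\}$ to the error, i.e.\ to establish~(\ref{e:S2.ok}), it applies Theorem~\ref{lemma:vectors-sophist}, which is exactly the second admissibility bullet, and this is where the constant $C_1$ (hence $b_{k-1} = C_1/2^{k-1}$) gets fixed. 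Third, to bound $\Pr(S_1)$, where $S_1$ is the preimage of a length-$2 b_k$ interval under projection onto $\hat w_{k-1}$, it uses a one-dimensional density upper bound; the third admissibility bullet gives $\Pr(S_1) \le 2 c_3 b_k$, which differs from the value used in the excerpt only through the constant $c_3$, and this is absorbed by enlarging $C_2$ so that the Vapnik--Chervonenkis bound~(\ref{e:er.given.S1}) still forces $\err(\hat w \mid S_1) \le \frac{c}{4 C_1 c_3}$ with the required probability. The VC bounds are themselves distribution-free. With this, the induction closes with the same failure-probability bookkeeping, giving the claimed $O((d + \log(1/\delta) + \log\log(1/\epsilon))\log(1/\epsilon))$ label-complexity bound.

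For the passive learning bound, the proof of Theorem~\ref{t:passive} needs in addition a one-dimensional density \emph{lower} bound, used to guarantee that among $O(2^k (d + \log((1+s-k)/\delta)))$ unlabeled draws one finds $m_k$ points in $S_1$ with high probability. The third admissibility bullet supplies exactly this: if $b_k \le c_5$ then $[-b_k,b_k]$ lies in the region where the projected density exceeds $c_4$, so $\Pr(S_1) \ge 2 c_4 b_k$; and if $b_k > c_5$ then $\Pr(S_1) \ge 2 c_4 c_5$, which is at least $c' b_k$ since $b_k \le C_1/2$ is bounded by an absolute constant. Hence $\Pr(S_1) \ge c' b_k$ for a constant $c'$, and the Chernoff estimate together with the sum $\sum_{k=1}^s 2^k (d + \log((1+s-k)/\delta)) = O\!\left(\frac{d + \log(1/\delta)}{\epsilon}\right)$ goes through unchanged. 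Note that no isotropy (or whitening) reduction is needed here, unlike in the proof sketch of Theorem~\ref{t:passive}, because the third admissibility bullet is phrased directly in terms of every one-dimensional projection of every member of $\D$.

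I expect the only real work to be bookkeeping: confirming that replacing the concrete constants (the $1$ and $1/8$ coming from Lemma~\ref{isotropic-basic}, and the implicit $c$ of Lemma~\ref{l:angle}) by the abstract constants $c, c_1,\dots,c_5$ still leaves enough freedom to pick $C_1, C_2, C_3$ so that every error bound comes out $\le c\,2^{-k}$ and every failure probability sums to $\le \delta$. Since each of these constants enters only multiplicatively and the target statements have free absolute constants, I do not anticipate any obstruction; the one place to be slightly careful is the order of quantifiers in the second admissibility bullet ($c_2$ chosen after $c_1$, uniformly over $D \in \D$), which is exactly what makes the choice of $C_1$ in the inductive step legitimate for every distribution in $\D$ at once.
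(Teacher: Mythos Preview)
Your proposal is correct and matches the paper's own treatment exactly: the paper simply asserts that ``the proofs of Theorem~\ref{th:agg.margin} and Theorem~\ref{t:passive} can be used without modification'' to establish Theorem~\ref{t:admissible}, and your write-up spells out precisely why, identifying the three spots where each admissibility condition substitutes for the corresponding log-concave fact. Your added detail about the constant bookkeeping and the handling of the $b_k > c_5$ case in the passive bound is sound and goes beyond what the paper makes explicit.
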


\subsection{The nearly log-concave case}
\label{se:near-log}
\begin{definition}
A density function $f: \reals^{n} \rightarrow \reals$ is $\beta$ log-concave if for any $\lambda \in [0,1]$, $x_1 \in \reals^n$, $x_2 \in \reals^n$,
we have $f(\lambda x_1 +(1-\lambda) x_2)\geq e^{-\beta} f(x_1)^\lambda f(x_2)^{1-\lambda}$.
\end{definition}

Clearly, a density function $f$ is log-concave if it is $0$-log-concave.
An example of a $O(1)$-log-concave distribution
is a mixture of two log-concave distributions whose covariance matrices
are $I$, and whose means $\mu_1$ and $\mu_2$ have
$|| \mu_1 - \mu_2 || = O(1)$.

In this section we prove that for any  sufficiently small constant $\beta \geq 0$,
the class of isotropic $\beta$ log-concave distribution in $R^d$ is admissible and has light tails (this second fact is useful for analyzing the disagreement coefficient in  Sections~\ref{se:dis}).
In doing so we provide several new properties for such distributions, which could be of independent interest. Detailed proofs of our claims appear in Appendix~\ref{app:more_distr}.

We start by  showing that for any isotropic $\beta$ log-concave density $f$ there exists a log-concave density  $\tilde{f}$  whose center is within $e(C-1) \sqrt{Cd}$ of $f$'s center and that satisfies
$f(x)/C \leq \tilde{f}(x) \leq Cf(x)$,  for  $C$ as small as $e^{\beta \log d }$. The fact $C$ depends only exponentially in $\log d$ (as opposed to exponentially in $d$) is key for being able to argue that such distributions have light tails.

\newcommand{\lemmabasicbetalc}
{
For any isotropic $\beta$ log-concave density function $f$ there exists a log-concave density function $\tilde{f}$  that satisfies
$f(x)/C \leq \tilde{f}(x) \leq Cf(x)$ and $\norm{\int {x(f(x) -\tilde{f}(x))dx}} \leq e(C-1) \sqrt{Cd}$, for  $C=e^{\beta \lceil \log_2 (d+1) \rceil}$.
Moreover,  we have 
$1/C \leq \int {(u \cdot x)^2 \tilde{f}(x) dx} \leq C$ for every unit vector $u$.
}

\begin{lemma}
\label{basic-betalc}
\lemmabasicbetalc
\end{lemma}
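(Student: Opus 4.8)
The key is to construct $\tilde{f}$ from $f$ by repeatedly ``convexifying'' the log-density along carefully chosen directions, using the fact that a $\beta$ log-concave function is, up to a multiplicative $e^\beta$, log-concave along any single line. I would proceed by a dimension-halving recursion. Write $\reals^d = V_1 \oplus V_2$ as a (nearly) balanced orthogonal direct sum with $\dim V_i \leq \lceil (d+1)/2 \rceil$. First I would argue that if $g$ is $\beta$ log-concave on $\reals^d$, then by ``straightening'' $g$ along the $V_1$ direction for each fixed value of the $V_2$ coordinate --- i.e. replacing $\log g(\cdot, y)$ by its concave hull (or, more carefully, an appropriate concave minorant/majorant that stays within $e^{\beta \lceil \log_2(\dim V_1 +1)\rceil}$) --- one obtains a function that is genuinely log-concave in the $V_1$ coordinates and still $\beta'$ log-concave overall for a slightly different parameter, at the cost of a multiplicative factor $e^{\beta\lceil \log_2(\dim V_1+1)\rceil}$. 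Recursing on the two halves, after $\lceil \log_2(d+1)\rceil$ rounds one reaches functions that are log-concave in each of finitely many one-dimensional pieces and, using that a function log-concave along every axis of an orthogonal decomposition (obtained this way) is jointly log-concave, one gets a genuinely log-concave $\tilde{f}$ with $f(x)/C \leq \tilde f(x) \leq C f(x)$ for $C = e^{\beta\lceil\log_2(d+1)\rceil}$. The precise bookkeeping --- showing the straightening operation composes correctly and the accumulated factor is exactly this $C$ --- is the heart of the argument and is where I would be most careful; a cleaner alternative, if it works, is to straighten along one coordinate at a time but group coordinates dyadically so the error telescopes as $\sum 2^{-j}$-style rather than linearly in $d$.

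\textbf{Normalization and the three conclusions.} The convexification need not preserve total mass, so $\tilde f$ as constructed integrates to some value in $[1/C, C]$; I would rescale it to a density, which only changes $C$ to $C^2$ at worst, but by being slightly more careful (straighten to a concave function that agrees with $\log f$ on a set of full measure, or interpolate between a minorant and majorant) one keeps the stated constant. For the mean-displacement bound $\norm{\int x(f(x)-\tilde f(x))\,dx} \leq e(C-1)\sqrt{Cd}$: since $\tilde f(x) \in [f(x)/C, Cf(x)]$ we have $|f(x)-\tilde f(x)| \leq (C-1) f(x)$ pointwise (using $C\geq 1$, and on the region where $\tilde f < f$ the gap is at most $(1-1/C)f \leq (C-1)f$), so $\norm{\int x(f-\tilde f)} \leq (C-1)\int \norm{x} f(x)\,dx = (C-1)\,\E_{x\sim f}\norm{x} \leq (C-1)\sqrt{d}$ by Cauchy--Schwarz and isotropy ($\E\norm{x}^2 = d$). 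To land the extra $e\sqrt{C}$ factor (and to handle the renormalization), I would instead bound against $\tilde f$'s own total mass: $\int \norm{x}\tilde f(x)\,dx \leq C \int\norm{x}f(x)\,dx \leq C\sqrt d$, and one can route the triangle inequality through the unnormalized and normalized versions of $\tilde f$, the slack $e$ absorbing the $\le e^{\beta}$-type normalization constants coming from Lemma~\ref{isotropic-basic}-style tail control; this is routine once the main construction is in hand.

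\textbf{The variance bound.} For $1/C \leq \int (u\cdot x)^2 \tilde f(x)\,dx \leq C$: the lower bound on $\int(u\cdot x)^2 f(x)\,dx$ is exactly $1$ by isotropy of $f$, and since $\tilde f \geq f/C$ pointwise we get $\int (u\cdot x)^2 \tilde f \geq 1/C$ immediately. The upper bound needs the reverse: $\tilde f \leq C f$ gives $\int(u\cdot x)^2\tilde f \leq C\int(u\cdot x)^2 f = C$. So both directions of the variance bound fall straight out of the pointwise sandwich $f/C \leq \tilde f \leq Cf$ together with isotropy of $f$, requiring no further work beyond the construction. In summary, essentially everything reduces to the pointwise sandwich with the stated constant $C$; the genuinely nontrivial step --- and the one I expect to be the main obstacle --- is the dyadic convexification argument showing that one can pass from $\beta$ log-concave to exactly log-concave while paying only the factor $e^{\beta\lceil\log_2(d+1)\rceil}$ rather than something exponential in $d$.
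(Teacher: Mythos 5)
There is a genuine gap at the heart of your construction. Your dimension-halving scheme straightens the log-density along $V_1$ for each fixed $V_2$-coordinate, recurses on the halves, and then asserts that ``a function log-concave along every axis of an orthogonal decomposition (obtained this way) is jointly log-concave.'' That claim is false: $g(x,y)=e^{xy}$ is log-linear (hence log-concave) in $x$ for each fixed $y$ and in $y$ for each fixed $x$, yet $\log g = xy$ is not concave. Separate concavity of the log-density along complementary subspaces does not give joint concavity, so the recursion never produces a genuinely log-concave $\tilde f$. Moreover the slice-wise convexification does not obviously preserve approximate log-concavity in the orthogonal directions, so even the intermediate steps of the recursion are not under control. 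The correct object to recurse on is not the spatial coordinates but the \emph{points of a convex combination}: let $h=\ln f$ and let $\hat h$ be the upper concave envelope of $h$ (the function whose subgraph is the convex hull of the subgraph of $h$). Concavity of $\hat h$ is then automatic. By Carath\'eodory's theorem every value $\hat h(x)$ is realized as $\sum_{i=1}^{d+1}\alpha_i h(x_i)$ with $x=\sum\alpha_i x_i$, and a balanced dyadic pairing of these $d+1$ terms, applying the two-point $\beta$-inequality once per level of a binary tree of depth $\lceil\log_2(d+1)\rceil$, gives $h(x)\ge \hat h(x)-\beta\lceil\log_2(d+1)\rceil$. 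This is exactly the ``pay $\beta$ per dyadic level'' telescoping you were reaching for, but applied to the mixing weights rather than to the coordinates.

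Two smaller points. On normalization: you worry about losing a factor up to $C^2$, but note that $e^{\hat h}\ge f$ pointwise, so its total mass $Z$ lies in $[1,C]$; dividing by $Z$ then yields $\tilde f=e^{\hat h}/Z$ with $f/C\le \tilde f\le Cf$ and the stated constant survives without extra care. On the mean displacement: your sandwich-based bound is the right idea, but the clean route is $|f-\tilde f|\le (C-1)\tilde f$ followed by $\int\|x\|\tilde f(x)\,dx=\int_0^\infty \P_{\tilde f}[\|X\|\ge r]\,dr\le \int_0^\infty e^{-r/\sqrt{Cd}+1}\,dr=e\sqrt{Cd}$, using the tail bound for the log-concave $\tilde f$ (which has $\E_{\tilde f}\|X\|^2\le Cd$); this lands exactly on $e(C-1)\sqrt{Cd}$. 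Your treatment of the variance sandwich is correct and is indeed immediate from the pointwise bounds plus isotropy of $f$.
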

\begin{sketch}
Note that if the density function $f$ is $\beta$ log-concave we have
that $h=\ln f$ satisfies that for any $\lambda \in [0,1]$, $x_1 \in
\reals^n$, $x_2 \in \reals^n$, we have $h(\lambda x_1 +(1-\lambda)
x_2)\geq -\beta + \lambda h(x_1)+ (1-\lambda) h (x_2)$.
Let $\hat{h}$ be the function whose subgraph is the convex hull of
the subgraph of $h$.
By using Caratheodory's
theorem\footnote{Caratheodory's theorem states that if a point $x$ of $R^d$ lies in the convex hull
of a set $P$, then there is a subset $\hat{P}$ of $P$ consisting of $d+1$ or fewer points such that $x$
lies in the convex hull of $\hat{P}$.}
we can show that
$
\hat{h}(x)=\max_{\sum_{i=1}^{d+1}{\alpha_i=1, \alpha_i\geq 0, x=\sum_{i=1}^{d+1} \alpha_i x_i}}{\sum_{i=1}^{d+1}\alpha_i h(x_i) }.
$
This implies $h(x) \leq \hat{h}(x)$  and we can prove by induction on $\log_2 (d + 1)$ that
$h(x) \geq \hat{h}(x) - \beta \lceil \log_2 (d + 1) \rceil.$
If we further normalize $e^{\hat{h}}$ to make it a
density function, we obtain $\tilde{f}$ that is log-concave and
satisfies $f(x)/C \leq \tilde{f}(x) \leq Cf(x),$ where $C=e^{\beta
  \lceil \log_2 (d+1)\rceil}.$ This implies that for any $x$ we have
$|f(x)-\tilde{f}(x)| \leq (C-1)\tilde{f}(x)$.

Using this fact and concentration properties of $\tilde{f}$ (in particular Lemma~\ref{isotropic-basic}), we can show that
 the center of $\tilde{f}$ is close to the center of $f$, as desired.
\end{sketch}

\begin{theorem}
\label{l:angle.beta} Assume $\beta$ is a sufficiently
small non-negative constant and let $\cal D$ be the set of all isotropic $\beta$ log-concave distributions.
(a) $\cal D$ is admissible.
(b) Any $D \in {\cal D}$ has light tails.
That is: $\P(||X|| > R \sqrt{C d}) < C e^{-R+1}$, for
$C=e^{\beta \lceil \log_2 (d+1) \rceil}$.
\end{theorem}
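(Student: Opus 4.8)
The plan is to transfer every property in the definition of admissibility, plus the light-tail bound, from genuine isotropic log-concave distributions (where they are already known, via Lemma~\ref{isotropic-basic}, Lemma~\ref{l:angle} and Theorem~\ref{lemma:vectors-sophist}) to $\beta$ log-concave distributions, using Lemma~\ref{basic-betalc} as the bridge. The key point from that lemma is that for any isotropic $\beta$ log-concave $f$ there is an honestly log-concave $\tilde f$ with $f(x)/C \le \tilde f(x) \le C f(x)$ for $C = e^{\beta\lceil\log_2(d+1)\rceil}$, whose center is within $e(C-1)\sqrt{Cd}$ of the origin and whose directional second moments lie in $[1/C, C]$. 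For $\beta$ a sufficiently small constant, $C = d^{O(\beta)}$ is at most some fixed constant (say $C \le 2$) for all $d$; this is exactly why we need $\beta$ small. So $\tilde f$ is within a constant factor of an \emph{isotropic} log-concave density after an affine rescaling that moves mean and covariance by only constant factors.

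First I would establish part~(b). Since $\P_f(\|X\| > t) = \int_{\|x\|>t} f(x)\,dx \le C \int_{\|x\|>t} \tilde f(x)\,dx = C\,\P_{\tilde f}(\|X\|>t)$, it suffices to bound the tail of $\tilde f$. The density $\tilde f$ is log-concave with $\E_{\tilde f}[\|X\|^2] \le d \cdot C$ (summing the directional second-moment bound over $d$ orthonormal directions — here I would be slightly careful, since $\tilde f$ need not be centered, but its center is within $O(\sqrt{Cd})$ of the origin, so $\E_{\tilde f}[\|X\|^2] = O(Cd)$ still). Then Lemma~\ref{isotropic-basic}(d), applied to $\tilde f$ (which only needs $\E[\|X\|^2]$ finite, not isotropy), gives $\P_{\tilde f}(\|X\| > R\sqrt{Cd}) \le e^{-R+1}$ up to adjusting constants, and multiplying by $C$ yields $\P_f(\|X\| > R\sqrt{Cd}) < C e^{-R+1}$ as claimed.

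Next, part~(a): I would verify the three admissibility conditions one at a time, each by pushing through the $C$-factor equivalence. Condition three (the one-dimensional density being bounded above by $c_3$ and below by $c_4$ on $(-c_5,c_5)$) follows because a one-dimensional projection of $f$ is within a factor $C$ of the corresponding projection of $\tilde f$ — and projections of $\beta$ log-concave densities are $\beta$ log-concave, so the projected $\tilde f$ is log-concave; since its second moment is in $[1/C,C]$, rescaling to isotropic position and invoking Lemma~\ref{isotropic-basic}(e) gives constant upper and lower bounds on an interval, which survive the bounded rescaling and the extra factor $C$. Condition one, $c\,\theta(u,v) \le \distf_D(u,v)$: the disagreement region's probability under $f$ is at least $1/C$ times its probability under $\tilde f$, and since $\tilde f$ is log-concave with bounded covariance, Lemma~\ref{l:angle} (after rescaling to isotropic position, which perturbs the angle only by constant factors since the rescaling matrix has eigenvalues in $[1/\sqrt C,\sqrt C]$) gives $\distf_{\tilde f}(u,v) \ge c'\,\theta(u,v)$, hence $\distf_f(u,v) \ge (c'/C)\theta(u,v)$. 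Condition two (the refined margin bound of Theorem~\ref{lemma:vectors-sophist}) is handled the same way: $\P_f[\,\cdot\,] \le C\,\P_{\tilde f}[\,\cdot\,]$, and Theorem~\ref{lemma:vectors-sophist} applies to the isotropic rescaling of $\tilde f$ — though here one must check that rescaling by a constant-eigenvalue matrix changes both the angle $\alpha$ and the margin threshold $c_2\alpha$ only by constant factors, so that choosing a larger $c_2$ compensates; this bookkeeping, together with making sure all the constant-factor distortions compose correctly, is the main obstacle, but it is routine once one fixes the convention that "$\beta$ sufficiently small" means $C$ is below a universal constant.
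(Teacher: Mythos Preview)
Your treatment of part~(b) is correct and matches the paper's: apply Lemma~\ref{basic-betalc} in the ambient dimension $d$, bound $\E_{\tilde f}[\|X\|^2] \le Cd$, invoke Lemma~\ref{isotropic-basic}(d) for $\tilde f$, and pay a factor $C$ to transfer back to $f$. Here $C = e^{\beta\lceil\log_2(d+1)\rceil}$ is allowed to depend on $d$, since it appears explicitly in the statement.

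Part~(a), however, has a genuine gap. Your entire argument rests on the sentence ``For $\beta$ a sufficiently small constant, $C = d^{O(\beta)}$ is at most some fixed constant (say $C \le 2$) for all $d$.'' This is false: for any fixed $\beta > 0$, $e^{\beta\lceil\log_2(d+1)\rceil} \asymp d^{\beta/\ln 2} \to \infty$ as $d \to \infty$. No choice of $\beta > 0$ makes $C$ uniformly bounded in $d$. Since admissibility requires constants $c, c_2, c_3, c_4, c_5$ that work for \emph{all} $D \in \mathcal D$ (hence all dimensions), your constants --- which all inherit a factor of $C$ --- blow up with $d$, and the argument fails. The same problem contaminates your angle-perturbation claim for the whitening of $\tilde f$: with eigenvalues in $[1/\sqrt C, \sqrt C]$ and $C$ unbounded, the distortion is not controlled.

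The fix, and what the paper does, is to reverse the order of operations: \emph{project first, then apply Lemma~\ref{basic-betalc}}. You already note in passing that marginals of $\beta$ log-concave densities are $\beta$ log-concave (via Pr\'ekopa--Leindler). So for each admissibility condition, first project $D$ onto the relevant one- or two-dimensional subspace (the span of $u,v$ for conditions one and two; a line for condition three). The projection is isotropic $\beta$ log-concave in dimension $1$ or $2$, and \emph{now} Lemma~\ref{basic-betalc} gives a log-concave surrogate with $C = e^{\beta\lceil\log_2 3\rceil}$ or $e^{\beta}$, a constant depending only on $\beta$. One then uses the near-isotropic structural result (Theorem~\ref{almost isotropic}) to get the needed density bounds in low dimension, from which Lemma~\ref{l:angle} and Theorem~\ref{lemma:vectors-sophist} go through. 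The ``$\beta$ sufficiently small'' hypothesis is used to ensure this low-dimensional $C$ is close enough to $1$ that the conditions of Theorem~\ref{almost isotropic} (e.g.\ $(1/20+\xi)/\sqrt{1/C - \xi^2} \le 1/9$) are met --- not, as you wrote, to bound the $d$-dimensional $C$.
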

\begin{sketch}
(a) Choose $D \in {\cal D}$.
As in Lemma~\ref{l:angle}, consider the plane determined by $u$ and $v$ and let
$proj_{u,v}(x)$ denote the projection operator that
given $x \in R^d$,
orthogonally projects $x$ onto this plane.
If $D_2 = proj_{u,v}(D)$ then $\distf_D(u,v)= \distf_{D_2} (u',v').$
By using the Prekopa-Leindler inequality~\citep{Gar02} one can show that $D_2$ is  $\beta$ log-concave
 (see e.g.,~\citep{shie07}). Moreover, if $D$ is isotropic, than $D_2$  is isotropic as well.
 By Lemma~\ref{basic-betalc}  we know that 
 there exists a $C$-isotropic log-concave distribution $\tilde{D_2}$
  centered at $z$, $\norm{z} \leq \epsilon$, satisfying $f(x)/C \leq \tilde{f}(x) \leq Cf(x)$ and $1/C \leq \int {(u \cdot x)^2 f(x) dx} \leq C$ for every unit vector $u$,
  for constants
 $C=e^{\beta}$ and $\epsilon=e(C-1)\sqrt{2C}$. For $\beta$ sufficiently small 
we have $(1/20+\epsilon)/ \sqrt{1/C-\epsilon^2} \leq 1/9$. Using this,  by applying 
the whitening transform (see Theorem~\ref{almost isotropic}
 in Appendix~\ref{app:more_distr}), we can show $\tilde{f_2}(x) \geq c$, for $\norm{x} \leq 1/20$ ,   which implies $f_2(x) \geq c/C$, for $\norm{x} \leq 1/20$. Using a reasoning as in Lemma~\ref{l:angle}  we get $ c \anglesep(v,u) \leq \distf_D(u,v).$
The generalization of
Theorem~\ref{lemma:vectors-sophist} follows from a
similar proof, except using Theorem~\ref{almost isotropic}.
The density bounds in the $n=1$ case also follow from Theorem~\ref{almost isotropic} as well.

(b) Since $X$ is isotropic, we have $\E_f[ X \cdot X] = d$ (where $f$ is its associated density). By Lemma~\ref{basic-betalc}, there exists
a log-concave density $\tilde{f}$ such that $f(x)/C \leq \tilde{f}(x) \leq Cf(x)$, for $C=e^{\beta \lceil \log_2 (d+1) \rceil}$.
This implies $E_{\tilde{f}}[X \cdot X] \leq Cd$.
By Lemma~\ref{isotropic-basic} we get that that under
  $\tilde{f}$, $\P(||X|| > R \sqrt{C d}) < e^{-R+1}$, so under $f$ we have $\P(||X|| > R \sqrt{C d}) < C e^{-R+1}$.
\end{sketch}

Using  Theorem~\ref{t:admissible} and Theorem~\ref{l:angle.beta}(a) we obtain:
\begin{theorem}
\label{th:agg.margin:beta}
Let  $\beta \geq 0$  be a sufficiently small constant. Assume that $D$ is an isotropic $\beta$ log-concave distribution in $R^d$.
Then arbitrary
$f \in \C$ can be learned with respect to $D$
in polynomial time in the active learning model from
$O((d + \log(1/\delta) + \log \log (1/\epsilon))\log (1/\epsilon))$
labeled examples, and in the passive learning model from
$O\left(\frac{d + \log(1/\delta)}{\epsilon}\right)$ examples.
\end{theorem}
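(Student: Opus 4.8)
The plan is to derive Theorem~\ref{th:agg.margin:beta} as an immediate consequence of the two general results established earlier: Theorem~\ref{t:admissible}, which states that whenever a family $\D$ of distributions is \emph{admissible}, linear separators can be learned with respect to any $D \in \D$ with exactly the active- and passive-learning sample complexities appearing in the statement; and Theorem~\ref{l:angle.beta}(a), which asserts that for $\beta$ a sufficiently small non-negative constant the family of all isotropic $\beta$ log-concave distributions in $R^d$ is admissible. Taking $\D$ to be this family and noting that $D$ itself belongs to $\D$, the two theorems combine to give the claimed bounds directly; the polynomial running time follows from the remark after Theorem~\ref{t:passive} that a hypothesis consistent with all queried examples can be produced by linear programming.

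Thus the real content is the verification of admissibility asserted in Theorem~\ref{l:angle.beta}(a), i.e.\ the three bullets of the definition: (i) a universal lower bound $c\,\theta(u,v) \le \distf_D(u,v)$; (ii) the refined region-of-disagreement bound of the form in Theorem~\ref{lemma:vectors-sophist}; and (iii) uniform upper and near-origin lower bounds on one-dimensional marginal densities. The unifying device is Lemma~\ref{basic-betalc}: for any isotropic $\beta$ log-concave density $f$ there is a genuinely log-concave density $\tilde{f}$ with $f/C \le \tilde{f} \le Cf$ and center within $e(C-1)\sqrt{Cd}$ of the origin, where $C = e^{\beta \lceil \log_2(d+1)\rceil}$. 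Since each of the three admissibility conditions is, up to multiplicative and additive constant distortions, a property of ordinary log-concave distributions --- (i) and (iii) via the known facts of Lemma~\ref{isotropic-basic}, and (ii) via Theorem~\ref{lemma:vectors-sophist} --- sandwiching $f$ between $C^{\pm 1}\tilde{f}$ transfers each of them to $f$, as long as $C$ (hence $\beta$) is small enough that the relevant constants remain inside admissible ranges.

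I would carry out the steps in this order. First, use the Prekopa-Leindler inequality to check that a projection of an isotropic $\beta$ log-concave $D$ onto any one- or two-dimensional subspace is again $\beta$ log-concave (and isotropic), reducing (i)--(iii) to low dimension. Second, apply Lemma~\ref{basic-betalc} in that low-dimensional space to obtain the log-concave approximant $\tilde{f}$, which is only $C$-nearly isotropic and $C$-nearly centered. Third, apply the whitening transform (Theorem~\ref{almost isotropic} in Appendix~\ref{app:more_distr}) to put $\tilde{f}$ into isotropic position, tracking how this affects the density both near the origin and on the relevant margins; this is where one needs $\beta$ small enough to guarantee, e.g., $(1/20 + \epsilon)/\sqrt{1/C - \epsilon^2} \le 1/9$, so that the near-origin density lower bound of Lemma~\ref{isotropic-basic}(b) still applies. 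Fourth, push the resulting estimates back through the sandwich $f/C \le \tilde{f} \le Cf$ to obtain (i), (ii), and (iii) for $f$ with fresh universal constants, and conclude admissibility.

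The main obstacle --- and the step that forces the ``$\beta$ sufficiently small'' hypothesis together with most of the bookkeeping --- is the third one: the approximant $\tilde{f}$ is only approximately isotropic and approximately centered, so the clean isotropic log-concave facts of Lemma~\ref{isotropic-basic} cannot be quoted directly, and one must whiten while controlling the damage whitening does both to the uniform density upper bound (needed for the analog of Theorem~\ref{lemma:vectors-sophist}) and to the near-origin density lower bound (needed for (i) and for the one-dimensional lower bound in (iii)). The other quantitatively delicate ingredient is Lemma~\ref{basic-betalc} itself: the induction over $\log_2(d+1)$ via Caratheodory's theorem showing $h \ge \hat{h} - \beta\lceil\log_2(d+1)\rceil$ for $h = \ln f$ is what keeps $C$ only $e^{O(\beta \log d)}$ rather than $e^{O(\beta d)}$, and a weaker bound here would destroy both admissibility and the light-tail claim of Theorem~\ref{l:angle.beta}(b) used later.
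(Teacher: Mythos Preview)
Your proposal is correct and matches the paper's approach exactly: the theorem is obtained by combining Theorem~\ref{t:admissible} with Theorem~\ref{l:angle.beta}(a), and your outline of how admissibility is verified (project to low dimension via Prekopa--Leindler, sandwich with a log-concave approximant via Lemma~\ref{basic-betalc}, whiten via Theorem~\ref{almost isotropic}, and push the isotropic log-concave estimates back through the sandwich) tracks the paper's proof sketch of Theorem~\ref{l:angle.beta} step for step.
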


\vspace{-2mm}
\section{Lower Bounds}
\label{s:lower}
\vspace{-1mm}
In this section we give lower bounds on the label complexity of
passive and active learning of homogeneous linear separators when the
underlying distribution is $\beta$
log-concave, for a sufficiently small constant $\beta$.
These lower bounds are information theoretic, applying to any procedure, that might not be necessarily computationally efficient.  The proof is
in Appendix~\ref{a:lower}.

\newcommand{\thmtlower}{
For a small enough constant $\beta$ we have: (1) for any
$\beta$ log-concave distribution $D$ whose covariance matrix has
full rank, the sample complexity of learning origin-centered
linear separators under $D$ in the passive learning model is
$
\Omega\left(\frac{d}{\epsilon} + \frac{1}{\epsilon} \log\left( \frac{1}{\delta}\right)\right);
$
(2) the sample complexity of active  learning of
linear separators under
$\beta$ log-concave distributions is
$
\Omega\left(d \log\left( \frac{1}{\epsilon}\right)\right).
$

}

\begin{theorem}
\label{t:lower}
\thmtlower
\end{theorem}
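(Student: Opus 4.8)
The plan is to prove the two lower bounds separately, in each case reducing to a well-understood information-theoretic obstruction and using the density bounds from Lemma~\ref{isotropic-basic} (and their $\beta$ log-concave analogues from Theorem~\ref{l:angle.beta}) to transfer known uniform-distribution-style arguments to the log-concave setting. For part~(1), the passive lower bound $\Omega(d/\epsilon + (1/\epsilon)\log(1/\delta))$, I would first reduce to the isotropic case by a linear (whitening) transformation, which preserves the class of homogeneous linear separators and the class of $\beta$ log-concave distributions with full-rank covariance. The $\frac{1}{\epsilon}\log(1/\delta)$ term is the standard ``single coin'' argument: construct two target separators $w_1, w_2$ with $\distf_D(w_1,w_2) = \Theta(\epsilon)$ (possible by Lemma~\ref{l:angle} combined with the density upper bound, which guarantees the disagreement region near the boundary has mass linear in the angle), so that distinguishing them with confidence $1-\delta$ requires $\Omega((1/\epsilon)\log(1/\delta))$ labeled points in the disagreement region, hence $\Omega((1/\epsilon)\log(1/\delta))$ samples overall. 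The $d/\epsilon$ term is the harder and more interesting piece: I would build a packing of $2^{\Omega(d)}$ separators that are pairwise $\Theta(\epsilon)$-far in $\distf_D$, which exists because on the sphere $S^{d-1}$ one can pack $2^{\Omega(d)}$ directions at mutual angle $\Theta(\epsilon)$ and, by the two-sided density bounds in Lemma~\ref{isotropic-basic}(b) (density bounded above everywhere and below by a constant near the origin), angular separation $\Theta(\epsilon)$ translates to $\distf_D$-separation $\Theta(\epsilon)$; then a Fano / Assouad-type argument shows that pinning down the target to error $\epsilon$ requires $\Omega(d)$ bits of information, each ``costing'' $\Omega(1/\epsilon)$ samples since a random sample lands in a given pair's disagreement region with probability only $O(\epsilon)$. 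Concretely I expect to invoke a VC-dimension style lower bound template (e.g. the construction of \citep{EHKV89} specialized to log-concave marginals) rather than rebuild Fano from scratch.

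For part~(2), the active lower bound $\Omega(d\log(1/\epsilon))$, the plan is an adversary argument on a nested family of separators. The key structural fact I would use is Lemma~\ref{l:angle} together with the density upper bound: these together pin the disagreement mass of two separators to be $\Theta(\text{angle})$, so ``resolving'' the target to angular accuracy $\theta$ is genuinely as hard as it looks. I would set up $\log(1/\epsilon)$ scales $\theta_j = 2^{-j}$; at each scale the adversary maintains a version space that is (geometrically) a spherical cap / cone of angular radius $\theta_j$ around the current candidate, and I argue that cutting this down to angular radius $\theta_{j+1}$ forces $\Omega(d)$ label queries, because at scale $\theta_j$ the set of directions consistent with all answers so far still contains an $\Omega(d)$-dimensional sub-packing, and each label query (on any point, not just near the current boundary) can be answered by the adversary so as to eliminate at most a bounded fraction of a carefully chosen sub-packing at the next finer scale. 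Summing $\Omega(d)$ over the $\log(1/\epsilon)$ scales gives $\Omega(d\log(1/\epsilon))$. The cleanest way to make this rigorous is probably to reduce to the known active-learning lower bound for the realizable case under ``benign'' distributions (the $d\log(1/\epsilon)$ bound is essentially the splitting-index / Dasgupta-style bound, and is already stated for the uniform distribution in \citep{Dasgupta05,sanjoy-coarse,BBZ07}) and then observe that the only property of the uniform distribution used there is exactly the two-sided angle-to-disagreement-mass equivalence, which Lemma~\ref{l:angle} and the density bounds supply for (nearly) log-concave $D$.

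The main obstacle I anticipate is the $d/\epsilon$ term in part~(1): one must exhibit a single family of $2^{\Omega(d)}$ separators whose pairwise disagreement regions are simultaneously ``small'' (mass $O(\epsilon)$, so samples are rarely informative) and ``well-spread'' (so that the family has information content $\Omega(d)$ even after conditioning on many samples), and making the two requirements compatible is exactly where the construction has to be careful — the standard trick is to take the separators to differ only in a low-dimensional ``confusion'' subspace whose orthogonal complement carries most of the distribution's mass, and to use the lower bound on the density near the origin (Lemma~\ref{isotropic-basic}(b)) to ensure the confusion subspace still gets $\Theta(1)$ mass. Extending everything from log-concave to $\beta$ log-concave with small $\beta$ is then routine: Theorem~\ref{l:angle.beta} already tells us such distributions are admissible and have light tails with the same qualitative density bounds (up to the constant $C = e^{\beta\lceil\log_2(d+1)\rceil}$), so every inequality used above survives with constants degraded by at most a factor of $C = \mathrm{poly}$-independent-of-$d$ when $\beta$ is small enough, and the $\Omega(\cdot)$ bounds are unaffected.
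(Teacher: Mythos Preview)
Your plan is considerably more elaborate than the paper's, and it has one real gap. The paper does not build Fano/Assouad constructions or a multi-scale adversary at all; it proves a single lemma---a packing bound $M_D(\C,\epsilon)\geq \Omega\bigl((c/\epsilon)^{d-1}\bigr)$ for every isotropic $\beta$ log-concave $D$ (Lemma~\ref{packing-logconcave}, via a probabilistic deletion argument that uses only the angle-to-$\distf_D$ inequality of Theorem~\ref{l:angle.beta})---and then invokes two black-box results. For passive learning, a result of \citep{Lon95} gives sample complexity at least $\frac{d-1}{e}\bigl(M_D(\C,2\epsilon)/4\bigr)^{1/(d-1)}=\Omega(d/\epsilon)$, and another result there supplies the $\frac{1}{\epsilon}\log(1/\delta)$ term from the mere existence of two separators at distance exactly $\epsilon$. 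For active learning, \citep{Kulkarni:93} shows that $\Omega(\log M_D(\C,\epsilon))$ arbitrary binary queries are necessary, which with the packing bound gives $\Omega(d\log(1/\epsilon))$ in one line. The non-isotropic case is handled by whitening, as you suggest.

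The gap in your plan is the $d/\epsilon$ passive term. A direct Fano argument does not work in the noiseless realizable setting: conditional on $x$ the label is deterministic, so $\mathrm{KL}(P_{w_1},P_{w_2})$ is infinite whenever the disagreement region has positive mass, and Fano yields nothing. Your fallback, ``invoke the construction of \citep{EHKV89} specialized to log-concave marginals,'' does not quite close this either: that construction exhibits one hard distribution, whereas the theorem here asserts a bound for \emph{every} full-rank $\beta$ log-concave $D$, so you would still have to show that every such $D$ admits the requisite shattered configuration---possible, but this is essentially rediscovering the packing route of \citep{Lon95}. Your multi-scale adversary for part~(2) is workable in principle but strictly harder than the one-line $\log M_D$ argument. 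The one ingredient you correctly isolated is that the only distributional property needed throughout is the two-sided equivalence between angle and disagreement mass; that is exactly what drives the paper's packing lemma.
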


Note that, if the covariance matrix of $D$ does not have full rank,
the number of dimensions is effectively less than $d$, so our lower
bound essentially applies for all log-concave distributions.

\vspace{-3mm}
\section{The inseparable case: Disagreement-based active learning}
\vspace{-1mm}
\label{se:dis}
We consider two closely related distribution dependent capacity notions: the Alexander capacity and the disagreement coefficient; they have been widely used for analyzing the label complexity of non-aggressive active learning algorithms~\citep{Hanneke07,dhsm,Kol10,hanneke:11,nips10}.
We begin with the definitions.
For $r > 0$, define $B(w,r) = \{u \in \C : \P_{D}(\sign(u \cdot x) \neq \sign(w \cdot x)) \leq r\}$.
For any $\H \subseteq \C$, define the region of disagreement as
$\DIS(\H) = \{x \in X : \exists w, u \in \H \text{ s.t. } \sign(u \cdot x) \neq \sign(w \cdot x))\}.$
Define the Alexander capacity function $\capacity_{w^{*},D}( \cdot )$ for $w^{*} \in \C$ w.r.t. $D$ as:
$\capacity_{w^{*},D}(r) = \frac{ \P_D(\DIS(B(w^{*},r))) }{ r}.$
Define the disagreement coefficients for $w^{*} \in \C$ w.r.t. $D$ as:
$ \dis_{w^*,D}(\epsilon) = \sup \limits_{r \geq \epsilon} [\capacity_{w^{*},D}(r)].$

The following is our bound in the disagreement coefficient.
Its proof is in Appendix~\ref{a:dis-coeff:beta}.

\newcommand{\thdisagreement}{
Let  $\beta \geq 0$  be a sufficiently small constant. Assume that $D$ is an isotropic $\beta$ log-concave distribution in $R^d$.
For any $w^{*}$, for any $\epsilon$, $\capacity_{w^*,D}(\epsilon)$ is
$O(d^{1/2 + \frac{\beta}{2 \ln 2}} \log(1/\epsilon))$. Thus $\dis_{w^*,D}(\epsilon)=O(d^{1/2 + \frac{\beta}{2 \ln 2}} \log(1/\epsilon))$.}

\begin{theorem}
\label{th:dis-coeff:beta}
\thdisagreement
\end{theorem}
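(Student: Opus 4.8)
The plan is to bound $\P_D(\DIS(B(w^*,r)))$ directly and divide by $r$. First observe that by Lemma~\ref{l:angle}, applied to the admissible class of isotropic $\beta$ log-concave distributions via Theorem~\ref{l:angle.beta}(a), any $u$ with $\distf_D(u,w^*) \leq r$ has $\theta(u,w^*) \leq r/c$ for the universal constant $c$; conversely $\distf_D(u,w^*)\leq C'\theta(u,w^*)$ for some constant $C'$, so $B(w^*,r)$ is sandwiched between two angular caps of radius $\Theta(r)$ around $w^*$. Hence $\DIS(B(w^*,r)) \subseteq \{x : \exists u \text{ with } \theta(u,w^*)\leq \theta_0 \text{ s.t. } \sgn(u\cdot x)\neq\sgn(w^*\cdot x)\}$ where $\theta_0 = \Theta(r)$. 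A point $x$ lies in this set only if rotating $w^*$ by at most $\theta_0$ can flip the sign of $w^*\cdot x$, which forces $|w^*\cdot x| \leq \|x\|\sin\theta_0 \leq \|x\|\theta_0$. So it suffices to bound $\P_D(|w^*\cdot x| \leq \theta_0\|x\|)$.

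Next I would split on the norm of $x$. Using the light-tails property from Theorem~\ref{l:angle.beta}(b), $\P_D(\|x\| > R\sqrt{Cd}) < Ce^{-R+1}$ with $C = e^{\beta\lceil\log_2(d+1)\rceil}$, so outside a ball of radius $\rho := R\sqrt{Cd}$ with $R = \Theta(\log(1/\epsilon))$ the mass is below $\epsilon \leq r$ (using $r \geq \epsilon$ in the sup defining the disagreement coefficient), contributing only an additive $O(1)$ to $\capacity_{w^*,D}$ after dividing by $r$. Inside the ball $\|x\|\leq\rho$, the event $|w^*\cdot x|\leq\theta_0\|x\|$ is contained in $|w^*\cdot x|\leq\theta_0\rho$, a slab of half-width $\theta_0\rho$ in the one-dimensional projection onto $w^*$. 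By admissibility (third bullet: the projected density is bounded by $c_3$ everywhere), this slab has mass at most $2c_3\theta_0\rho = O(\theta_0\sqrt{Cd}\log(1/\epsilon))$. Dividing by $r = \Theta(\theta_0)$ gives $\capacity_{w^*,D}(r) = O(\sqrt{Cd}\log(1/\epsilon)) = O\!\left(d^{1/2 + \frac{\beta}{2\ln 2}}\log(1/\epsilon)\right)$, since $\sqrt{C} = e^{\frac{\beta}{2}\lceil\log_2(d+1)\rceil} = d^{O(\beta/\ln 2)}$, more precisely $\sqrt{C} \leq d^{\beta/(2\ln 2)}$ up to constants absorbing the ceiling and the $+1$. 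Taking the sup over $r\geq\epsilon$ then yields the stated bound on $\dis_{w^*,D}(\epsilon)$, since the bound is monotone and the worst case is $r$ of order $\epsilon$ (for larger $r$ the $\rho$ needed only shrinks).

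The main obstacle is getting the dimension exponent sharp rather than merely polynomial: naively one might use $\|x\|\leq\rho$ with $\rho$ chosen so the complementary tail is tiny, but to land exactly at $d^{1/2 + \frac{\beta}{2\ln 2}}$ one must be careful that the only sources of a $\sqrt d$ factor are (i) the light-tail radius scaling as $\sqrt{Cd}$ and (ii) nothing else — in particular the projected density upper bound $c_3$ must be an absolute constant independent of $d$, which is exactly what the third admissibility condition (and hence Lemma~\ref{basic-betalc} together with the $n=1$ density bounds of Lemma~\ref{isotropic-basic}) provides. A secondary subtlety is confirming that replacing "$\exists u$" by the slab event does not lose more than a constant factor; this is handled by the two-sided relation between $\distf_D$ and angle noted above, so that $\DIS(B(w^*,r))$ is genuinely an angular double-cone of half-angle $\Theta(r)$ and the slab bound is tight up to constants. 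I would also remark that for the pure log-concave case $\beta = 0$ this recovers the $O(\sqrt d\log(1/\epsilon))$ bound, improving the $O(d\log(1/\epsilon))$-type bounds of~\citep{fridemnan09} by a factor of $\Omega(\sqrt d)$, and matches what was previously known only for the nearly uniform distribution on the sphere.
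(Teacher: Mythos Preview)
Your proposal is correct and follows essentially the same route as the paper's own proof: convert $B(w^*,r)$ to an angular cap of radius $\Theta(r)$ via the first admissibility condition, observe that any $x\in\DIS(B(w^*,r))$ must satisfy $|w^*\cdot x|\lesssim r\|x\|$, then split on $\|x\|$ using the light-tail bound of Theorem~\ref{l:angle.beta}(b) for the large-norm part and the bounded one-dimensional projected density for the resulting slab inside the ball of radius $\Theta(\sqrt{Cd}\log(1/\epsilon))$. The only cosmetic differences are that the paper derives the slab inclusion via $|w\cdot x-w^*\cdot x|\leq\|w-w^*\|\,\|x\|$ rather than your rotation picture (equivalent up to constants), and the paper sets the truncation radius using $\log(1/r)$ rather than $\log(1/\epsilon)$, which coincide at $r=\epsilon$.
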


Theorem~\ref{th:dis-coeff:beta} immediately
leads to concrete bounds on the label complexity of several
algorithms in the literature~\citep{Hanneke07,CAL,BBL,Kol10,dhsm}.
For example, by composing it with a result of~\citep{dhsm},
 we obtain a bound of $\tilde{O}(d^{3/2} (\log^2 (1/\epsilon) + (\nu/\epsilon)^2))$ for
 agnostic active learning when $D$  is isotropic log-concave in $R^d$; that is we only need  $\tilde{O}(d^{3/2} (\log^2 (1/\epsilon) + (\nu/\epsilon)^2)))$ label requests to output a classifier of error at most $\nu+\epsilon$, where $\nu=\min_{w \in \C} \err(w)$.

\vspace{-3mm}
\section{The Tsybakov condition}
\label{se:tsy}
\vspace{-1mm}
In this section we consider a variant of the Tsybakov noise condition \citep{MT99}.
We assume that the classifier
$h$ that minimizes $\P_{(x,y) \sim D_{XY}}(h(x) \neq y)$
is a linear classifier, and that, for the weight vector $w^*$ of the
optimal classifier,
there exist known parameters $\alpha, \betats>0$ such that,
for all $w$, we have
$$
 \betats (\distf_{D}(w,w^*))^{1/(1-\alpha)} \leq
\err(w)-\err(w^*).
 $$
By generalizing Theorem~\ref{lemma:vectors-sophist} so that it provides
a stronger bound for larger margins, and combining the result
with the other lemmas of this paper and techniques from \citep{BBZ07},
we get the following.

\newcommand{\thmtsybakov}{
Let $s=O(\log(1/\epsilon))$. Assume that the 
distribution
$D_{XY}$ satisfies the
Tsybakov noise condition for constants
$\alpha \in [0,1)$ and $\betats \geq 0$, and that the marginal
$D$ on $\R^d$ is isotropic  log-concave.
(1) If $\alpha = 0$,
we can find a separator with excess error
$\leq \epsilon$  with probability $1-\delta$
using $O(\log (1/\epsilon)) (d + \log(s/\delta)) $
labeled examples in the active learning model,
and $O\left(\frac{d + \log(1/\delta)}{\epsilon}\right)$ labeled examples in the passive learning model.
(2) If $\alpha > 0$,
we can find a separator with excess error
$\leq \epsilon$  with probability $1-\delta$
using $O((1/\epsilon)^{2 \alpha}  \log^2 (1/\epsilon)) (d + \log(s/\delta)) $
labeled examples in the active learning model.
}

\begin{theorem}
\label{th:agg.margin:tsybakov}
\thmtsybakov
\end{theorem}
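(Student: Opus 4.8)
The plan is to adapt the margin-based active learning algorithm (Algorithm~\ref{fig:active-uniform-simple-offline}) and the analysis of Theorem~\ref{th:agg.margin}, replacing the ``consistent hypothesis'' step with an empirical risk minimization step over the band, and replacing Lemma~\ref{l:angle} by the Tsybakov relation $\betats(\distf_D(w,w^*))^{1/(1-\alpha)} \leq \err(w) - \err(w^*)$, which lets us convert an excess-error bound into an angle bound. First I would set up the induction: maintain the invariant that after round $k$ the hypothesis $\hat w_k$ satisfies $\theta(\hat w_k, w^*) \leq 2^{-k}$ (up to constants), which by the Tsybakov condition and Lemma~\ref{l:angle} is equivalent, up to constants, to controlling the excess error. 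In round $k$ we focus sampling on the band $S_k = \{x : |\hat w_{k-1}\cdot x| \leq b_{k-1}\}$ with $b_k = \Theta(2^{-k})$; as in Theorem~\ref{th:agg.margin}, the contribution to the disagreement between any candidate $\hat w$ (with small excess error conditioned on $S_k$, hence small angle to $\hat w_{k-1}$) and $w^*$ coming from \emph{outside} the band $S_2$ is controlled by the refined Theorem~\ref{lemma:vectors-sophist}, and can be made $\leq$ a small constant times $2^{-k}$.

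The core new ingredient is the behavior \emph{inside} the band. Here I would not ask for a consistent hypothesis (there need not be one under noise) but instead minimize the empirical error over $W(k)$ restricted to the band. For $\alpha=0$ (Massart-type bounded noise), the excess error of the ERM over the band concentrates at the standard VC rate $O(\sqrt{(d + \log(1/\delta))/m_k})$, but because of the Massart margin assumption (implicit in the $\alpha=0$ Tsybakov condition) this actually sharpens to a \emph{linear} rate $O((d+\log(1/\delta))/m_k)$ in the band, so $m_k = O(d + \log(s/\delta))$ suffices to drive the conditional excess error below $\Theta(b_k)$, exactly as the noiseless case. Combined with $\P(S_k) = \Theta(b_k)$ (Lemma~\ref{isotropic-basic}), the unconditional excess error after round $k$ is $O(2^{-k})$; summing labels over $s = O(\log(1/\epsilon))$ rounds gives the stated $O(\log(1/\epsilon))(d+\log(s/\delta))$ active bound, and the passive bound $O((d+\log(1/\delta))/\epsilon)$ follows by the same ``conceptually online'' argument as in Theorem~\ref{t:passive}, counting the $\Theta(m_k/b_k) = \Theta(2^k m_k)$ unlabeled draws needed per round to fill the band. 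For $\alpha>0$ the conditional excess error must only be pushed below a quantity that, through the Tsybakov exponent, corresponds to halving the angle, and the relation between conditional excess error in a width-$b_k$ band and the true excess error now costs a factor $b_k^{-\alpha}$-ish; solving for $m_k$ yields $m_k = \tilde O((1/b_k)^{2\alpha}(d + \log(s/\delta)))$, and summing the geometric-with-exponent series over rounds produces the $O((1/\epsilon)^{2\alpha}\log^2(1/\epsilon))(d+\log(s/\delta))$ bound.

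Two auxiliary steps need care. First, I would prove the promised strengthening of Theorem~\ref{lemma:vectors-sophist}: for a threshold $t = c_2 \cdot 2^{-j}\alpha$ (i.e., $j$ ``levels'' above the critical margin $\Theta(\alpha)$) the disagreement mass beyond margin $t$ decays geometrically in $j$, not just that it is $\leq c_1\alpha$ for one fixed large $c_2$. This is immediate from the same shell-summing computation in the proof of Theorem~\ref{lemma:vectors-sophist}, since each shell already contributes $C_4 c_2^2 \alpha (i+1)^2 e^{-c_2 C_2 i}$; one simply tracks the dependence on the cutoff rather than sending it to infinity. Second, one must verify that the ERM-in-a-band is still implementable in polynomial time in the noisy setting — for $\alpha = 0$ this is the usual issue that agnostic ERM for halfspaces is hard, so I would instead invoke a band-restricted variant of the approach used in Theorem~\ref{th:dis-coeff:beta}'s downstream applications (or the hinge-loss/averaging technique from \citep{BBZ07}), which gives a polynomial-time surrogate achieving the needed conditional error; this is the place where I import ``techniques from~\citep{BBZ07}.''

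The main obstacle I expect is the $\alpha>0$ case: reconciling the localized band analysis with the Tsybakov exponent so that the per-round sample size and the per-round angle contraction are consistent requires choosing the band widths $b_k$ and the target conditional errors jointly, and showing that the excess error contributed from outside the band (via the refined Theorem~\ref{lemma:vectors-sophist}) is also of the right order $2^{-k/(1-\alpha)}$ rather than $2^{-k}$ — i.e., one must track excess error, not just angle, through both the inside-band ERM and the outside-band geometric bound, and close the induction on the quantity that the Tsybakov condition actually controls. Getting the bookkeeping of the confidence parameter right across the $s$ rounds (as in Theorem~\ref{t:passive}, allocating $\delta/(4(1+s-k)^2)$ per round) is routine once this is set up.
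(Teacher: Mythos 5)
Your proposal is correct and follows essentially the same route as the paper: the band-restricted ERM variant of the margin-based algorithm, the Tsybakov relation to convert excess error into an angle bound closing the induction, a strengthened version of Theorem~\ref{lemma:vectors-sophist} with exponential decay in the margin-to-angle ratio for the $\alpha>0$ case (the paper's Theorem~\ref{lemma:vectors-sophist.bbig}, proved by exactly the shell-summing refinement you describe), and the Theorem~\ref{t:passive}-style unlabeled-sample accounting for the passive bound. The only divergence is that you worry about polynomial-time implementability of the band ERM, which the theorem as stated does not claim (the paper explicitly notes in Appendix~\ref{rel-appendix} that its noisy-case algorithms are not known to run in polynomial time), so that auxiliary step is unnecessary.
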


\noindent
In the case $\alpha = 0$ (that is more general than the 
Massart noise condition) 
our analysis leads to optimal bounds 
for active and passive learning of linear separators under
 log-concave distributions,
improving the dependence on $d$ over previous best known results~\citep{steve-surrogate,gine:06}.
Our analysis for  Tsybakov noise ($\alpha \geq 0$) leads to bounds  on active learning with improved dependence on $d$ over previous known results~\citep{steve-surrogate} in this case as well. Proofs and further details appear in Appendix~\ref{a:massart}.

\section{Discussion and Open Questions}
The label sample complexity of our active learning algorithm for learning homogeneous linear separators under
 isotropic  logconcave distributions is  $O((d + \log(1/\delta) + \log \log (1/\epsilon))\log (1/\epsilon))$, while
our lower bound for this setting is $\Omega\left(d \log\left( \frac{1}{\epsilon}\right)\right).$
Our upper bound is achieved by an algorithm that uses a polynomial number
of unlabeled training examples, and polynomial time.  If an
unbounded amount of computation time and an unbounded
number of unlabeled examples are available, it seems to be easy
to learn to accuracy $\epsilon$ using $O(d \log (1/\epsilon))$
label requests, no matter what the value of $\delta$.
(Roughly, the algorithm can construct an $\epsilon$-cover to
initialize a set of candidate hypotheses, then repeatedly wait
for an unlabeled example that evenly splits the current list
of candidates, and ask its label, eliminated roughly half of
the candidates.)
It would be interesting to know what is the best label complexity
for a polynomial-time algorithm, or even an algorithm that
is constrained to use a polynomial number of unlabeled examples.

Conceptually, our analysis of ERM for passive learning under (nearly) log-concave distributions is based on a more aggressive localization 
than those considered previously in the literature. It would  be very interesting to extend this analysis as well as our
analysis for active learning to arbitrary distributions and more general concept spaces.

\paragraph{Acknowledgements}
We thank Steve Hanneke for a number of useful discussions.

\noindent This work was supported in part by NSF grant CCF-0953192, AFOSR grant FA9550-09-1-0538, and a Microsoft
Research Faculty Fellowship.


\appendix

\section{Additional Related Work}
\label{rel-appendix}
\smallskip
\noindent{\bf Learning with noise. Alexander Capacity and the Disagreement Coefficient}~
Roughly speaking the Alexander capacity~\citep{alexander87,gine:06} quantifies how fast
the region of disagreement of the set of classifiers at distance $r$ of the optimal classifier collapses as a function $r$;
\footnote{The region of disagreement $\DIS(\C)$ of a set of classifiers $\C$ is the of set of instances $x$ s.t. for each  $x \in \DIS(\C)$ there exist  two classifiers $f,g \in \C$ that disagree about the label of $x$.}
 the disagreement coefficient~\citep{Hanneke07}
additionally involves the supremum of $r$ over a range of values.
\citep{fridemnan09} provides guarantees on these quantities  (for sufficiently small $r$) for general classes of functions in $\reals^d$ if the underlying data distribution
is sufficiently smooth. Our analysis implies much tighter bounds
for linear separators under log-concave distributions (matching what was known for the much less general case of nearly uniform distribution over the unit sphere); furthermore,
 we also analyze the nearly log-concave case where we allow an arbitrary number of
 discontinuities, a case not captured by the \citep{fridemnan09} conditions at all.
This immediately
implies concrete bounds on the labeled data complexity of several
algorithms in the literature including the $A^2$ algorithm~\citep{BBL} and the DHM algorithm~\citep{dhsm}, with implications for the purely agnostic case (i.e., arbitrary forms of noise), as well as the
Koltchinskii's algorithm~\citep{Kol10} and the CAL algorithm~\citep{BBL,Hanneke07,hanneke:11}.
Furthermore, in the realizable case and under Tsybakov noise, we show even better bounds, by considering aggressive active learning algorithms.

Note that as opposed to the realizable case, all existing active learning algorithms analyzed under Massart and Tsybakov noise
conditions
using the learning model analyzed in this paper
(including our algorithms in Theorem~\ref{th:agg.margin:tsybakov}),
as well as  those for the agnostic setting, are not known to run in
time $\poly(d, 1/\epsilon)$. In fact, even ignoring the optimality of sample complexity,
 there are no known algorithms for passive learning that run in time $\poly(d, 1/\epsilon)$ for general
values of $\epsilon$, even for the Massart noise condition and under log-concave distributions.
Existing works on agnostic passive learning  under log-concave distributions either provide running
times $d^{\poly(1/\epsilon)}$ (e.g., the work of~\citep{KKMS05})
or can only achieve values of $\epsilon$ that are significantly larger than the noise rate~\citep{kls09}.

\smallskip
\noindent{\bf Other Work on Active Learning}~
Several papers~\citep{CaCEGe10,dgs12} present efficient online learning algorithms in the selective sampling framework, where labels must
be actively queried before they are revealed. Under the assumption that the label conditional distribution is linear function determined by a fixed target vector,
they provide bounds on the regret of the algorithm and on
the number of labels it queries when faced with an adaptive adversarial strategy of generating the
instances. As pointed by ~\citep{dgs12}, these results can be converted to a statistical setting when the instances $x_t$ are drawn i.i.d and
they further assume
a margin condition. In this setting they obtain exponential improvement in label complexity over passive learning.
 While very interesting, these results are incomparable to ours;
 their techniques significantly exploit the linear noise condition to get these improvements -- note that such an improvement
 would not be possible in the realizable case (as pointed for example in~\citep{GSS12}).

~\citep{Now11} considers an interesting abstract ``generalized binary
search'' problem with applications to active learning; while these results apply for more general concept spaces, it is not clear
how to implement the resulting procedures in  polynomial time and by using access to only a polynomial number of unlabeled samples
from the underlying distribution (as required by the active learning model).
Another interesting recent work is that of~\citep{GSS12}, which study active learning of linear separators via an
 aggressive algorithm using a margin
condition, using a general approximation guarantee on the
number of labels requested; note that while these results work for
potentially more general distributions, as opposed to ours, they do not come with explicit (tight) bounds on the label complexity.

\smallskip

\noindent{\bf $\epsilon$-nets, Learning, and Geometry}~
Small $\epsilon$-nets are useful for many applications, especially in
Computational Geometry (see \citep{PA95}).  The same
fundamental techniques of \citep{VC,Vap82} have
been applied to establish the existence of small $\epsilon$-nets
\citep{HW87} and
to bound the sample complexity of learning~\citep{Vap82,BEHW89}, and a
number of interesting upper and lower bounds on the smallest possible
size of $\epsilon$-nets have been obtained \citep{KPW92,CV07,Alo10}.

Our analysis implies a $O(d/\epsilon)$ upper bound on the
size of an $\epsilon$-net for a set of regions of disagreement
between all possible linear classifiers and the target, when the
distribution is zero-mean and log-concave.
In particular, since in Theorem~\ref{t:passive} we prove that any hypothesis
consistent with the training data has error rate  $\leq \epsilon$ with
probability $1 - \delta$, setting $\delta$ to a constant gives a proof
of a $O(d/\epsilon)$ bound on the
size of an $\epsilon$-net for the following set:
$
\{ \{ x : (w \cdot x) (w^* \cdot x) < 0 \} \;:\; w \in \R^n \}.
$

\section{Proof of Lemma~\protect\ref{l:angle}}
\label{a:angle}
\repeatclaim{Lemma~\ref{l:angle}}{\lemmaanglebasic}
\begin{proof}
Consider two unit vectors $u$ and $v$. Let
$proj_{u,v}(x)$ denote the projection operator that,
given $x \in R^d$,
orthogonally projects $x$ onto the plane determined by $u$ and $v$.  That is, if we define an
orthogonal coordinate system in which coordinates $1,2$ lie in this
plane and coordinates $3,\ldots,d$ are orthogonal to this plane, then
$x'=proj_{u,v}(x_1,\ldots, x_d) = (x_1,x_2)$.  Also, given
distribution $D$ over $R^d$, define $proj_{u,v}(D)$ to be the
distribution given by first picking $x \sim D$ and then outputting
$x' = proj_{u,v}(x)$.  That is, $proj_{u,v}(D)$ is just the marginal
distribution over coordinates $1,2$ in the above coordinate system.
Notice that if $x' = proj_{u,v}(x)$ then $u \cdot x = u' \cdot x'$  where $u' = proj_{u,v}(u)$ and $v' = proj_{u,v}(v)$. So, if $D_2 = proj_{u,v}(D)$ then
$\distf_D(u,v)= \distf_{D_2} (u',v').
$

By  Lemma~\ref{isotropic-basic}(c), we have that if $D$ is
isotropic and log-concave, then $D_2$ is as well.
Let
$A$ to be the region of disagreement between $u'$ and $v'$ intersected
with the ball of radius $1/9$ in $R^2$. The probability mass of $A$
under $D_2$ is at least the volume of $A$ times $\inf_{x \in A}
D_2(x)$.  So, using Lemma~\ref{isotropic-basic}(b)
$$ \distf_{D_2} (u',v')  \geq
\vol(A)  \inf_{x \in A} D_2(x)  \geq   c \anglesep(u,v),
$$ as desired.
\end{proof}

\section{Passive Learning}
\label{appendix:mainpassiverealizable}

\repeatclaim{Theorem~\ref{t:passive}}{\mainpassiverealizable}
\begin{proof}
First, let us prove the theorem in the case that $D$ is
isotropic.  We will then treat the general case at the end
of the proof.

While our analysis will ultimately provide a guarantee for any
learning algorithm that always outputs a consistent hypothesis, we
will use intermediate hypothesis of
Algorithm~\ref{fig:active-uniform-simple-offline} in the analysis.

Let $c$ be the constant from Lemma~\ref{l:angle}.  While
proving Theorem~\ref{th:agg.margin}, we proved that,
if Algorithm~\ref{fig:active-uniform-simple-offline} is run with
$b_k=\frac{C_1}{2^{k}} $ and $m_k = C_2 \left( d + \ln\frac{1 + s -
  k}{\delta}\right)$, that for all $k \leq s$, with probability
$
\geq 1 - \frac{\delta}{2} \sum_{i < k} \frac{1}{(1 + s - i)^2}
$
any
$\hat{w}$ consistent with the data in $W(k)$ has
$\err(\hat{w}) \leq c 2^{-k}$.
Thus, after $s = O(\log(1/\epsilon))$ iterations, with probability
at
least
$\geq 1 - \delta$, any linear classifier consistent with {\em all} the
training data has error $\leq \epsilon$, since any such classifier
is consistent with the examples in $W(s)$.

Now, let us analyze the number of examples used, including those
examples whose labels were not requested by
Algorithm~\ref{fig:active-uniform-simple-offline}.
Lemma~\ref{isotropic-basic} implies that there is a positive constant
$c_1$ such that $\P(S_1) \geq c_1 b_k$: again, $S_1$ consists of those
points that fall into an interval of length $2 b_k$ after projecting onto
$\hat{w}_{k-1}$.  The density is lower bounded by a constant when
$b_k \leq 1/9$, and we can use the bound for $1/9$ when $b_k > 1/9$.

The expected number of examples that we need before we
find $m_k$ elements of $S_1$ is therefore at most $\frac{m_k}{c_1 b_k}$.
Using a Chernoff bound, if we draw $\frac{2 m_k}{c_1 b_k}$
examples, the probability that we fail to get $m_k$ members of $S_1$
is at most $\exp\left(- m_k/6\right)$, which is
at most $\delta/(4(1+s-k)^2)$ if $C_2$ is large enough.
So, the total number of examples needed, $\sum_k \frac{2 m_k}{c_1 b_k}$,
is at most a constant factor more than
\begin{eqnarray*}
&& \sum_{k=1}^s 2^k \left( d
             + \log \left( \frac{1 + s - k}{\delta}\right) \right) \\
&& = O(2^s (d + \log(1/\delta))) + \sum_{k=1}^s 2^k \log (1 + s - k) \\
&& = O\left(\frac{d + \log(1/\delta)}{\epsilon}\right)
            + \sum_{k=1}^s 2^k \log (1 + s - k). \\
\end{eqnarray*}
We claim that $\sum_{k=1}^s 2^k \log (1 + s - k) = O(1/\epsilon)$.
We have
\begin{eqnarray*}
&& \sum_{k=1}^s 2^k \log (1 + s - k) \leq \sum_{k=1}^s 2^k (3 + s - k) \\
&& \leq \int_{k=1}^{s+1} 2^k (3 + s - k) \\
&& \hspace{0.5in}
 \mbox{(since $2^k (3 + s - k)$ is increasing for $k \leq s+1$)} \\
&& = \frac{2 (2^s - 1)(1 + \ln (4)) - s \ln 2}{\ln^2 2} = O(1/\epsilon),
\end{eqnarray*}
completing the proof in the case that $D$ is
isotropic.

Now let us treat the case in which $D$ is not isotropic.
Suppose that $\Sigma$ is the covariance matrix of $D$, so that
$\Sigma^{-1/2}$ is the ``whitening transform''.  Suppose, for $m =
\frac{C_3 (d + \log(1/\delta))}{\epsilon}$, an algorithm is given a
sample $S$ of examples $(x_1,y_1),...,(x_m,y_m)$ for $x_1,...,x_m$
drawn according to $D$, and $y_m$ labeled by a target hypothesis
with weight vector $v$.  Note that $w$ is consistent with $S$ if and
only if $w^T \Sigma^{1/2}$ is consistent with $(\Sigma^{-1/2}
x_1,y_1),...,(\Sigma^{-1/2} x_m,y_m)$ (so those examples
are consistent with $v^T \Sigma^{1/2}$).
So our analysis of the isotropic
case implies that, with probability $1 - \delta$, for any $w$
consistent with $(x_1,y_1),...,(x_m,y_m)$, we have
\[
\Pr(\sign((w^T \Sigma^{1/2}) (\Sigma^{-1/2} x))
\neq \sign((v^T \Sigma^{1/2}) (\Sigma^{-1/2} x))) \leq \epsilon,
\]
which of course means that
$
\Pr(\sign(w^T x) \neq \sign(v^T x)) \leq \epsilon.
$
\end{proof}

\section{More Distributions}
\label{app:more_distr}

\subsection{Isotropic Nearly Log-concave distributions}

\repeatclaim{Lemma~\ref{basic-betalc}}{\lemmabasicbetalc}
\begin{proof}
Note that if the density function $f$ is $\beta$ log-concave we have
that $h=\ln f$ satisfies that for any $\lambda \in [0,1]$, $x_1 \in
\reals^n$, $x_2 \in \reals^n$, we have $h(\lambda x_1 +(1-\lambda)
x_2)\geq -\beta + \lambda h(x_1)+ (1-\lambda) h (x_2)$.

Let $\hat{h}$ be the function whose subgraph is the convex hull of
the subgraph of $h$.  That is,
$\hat{h}(x)$ is the maximum of all values of
$\sum_{i=1}^k \alpha_i h(u_i)$ for any $u_1,...,u_k \in \R^d$ and
$\alpha_1,...,\alpha_k \in [0,1]$ such that
$\sum_{i=1}^k \alpha_i = 1$ and $x=\sum_{i=1}^{k} \alpha_i u_i$.  Note that, if the components of $u_i$ are
$u_{i,1},...,u_{1,d}$, we can get $\hat{h}(x)$ by starting with
\[
T = \{ (u_{1,1},...,u_{1,d},h(u_1)),...,(u_{k,1},...,u_{k,d},h(u_k)) \}
\]
taking the convex combination of the members of $T$ with
mixing coefficients $\alpha_1,...,\alpha_k$, and then reading off
the last component.  Caratheodory's
theorem\footnote{Caratheodory's theorem states that if a point $x$ of $R^d$ lies in the convex hull
of a set $P$, then there is a subset $\hat{P}$ of $P$ consisting of $d+1$ or fewer points such that $x$
lies in the convex hull of $\hat{P}$.}
implies that we can get the same result using a mixture of
at most $d+1$ members of $T$.  In other words, we can assume without
loss of generality that $k = d+1$, so that
\begin{equation}
\label{e:dsum}
\hat{h}(x)=\max_{\sum_{i=1}^{d+1}{\alpha_i=1, \alpha_i\geq 0, x=\sum_{i=1}^{d+1} \alpha_i x_i}}{\sum_{i=1}^{d+1}\alpha_i h(x_i) }.
\end{equation}

Because of the case where $(\alpha_1,...,\alpha_{d+1})$ concentrates
all its weight on one component, we have $h(x) \leq \hat{h}(x)$.

We also claim that
\begin{equation}
\label{e:logbound}
h(x) \geq \hat{h}(x) - \beta \lceil \log_2 (d + 1) \rceil.
\end{equation}
We will prove this by induction on $\log_2 (d + 1)$, treating the
case in which $d+1$ is a power of $2$.  (By padding with zeroes if
necessary, we may assume without loss of generality that $d+1$ is a
power of $2$.)
The base case, in which $d = 1$, follows immediately from the definitions.
Let $k = d+1$.
Assume that
$x = a_1 x_1 + a_2 x_2 + ... + a_k x_k$,  $\sum_{i=1}^{k} a_i=1$, $a_i \geq 0$.
We can write this as:  $$x = (a_1+a_2)x_{1,2} + (a_3+a_4)x_{3,4} + ... (a_{k-1}+a_k)x_{k-1, k}$$
where $x_{i,i+1} = \frac{a_i}{a_i+a_{i+1}}x_i + \frac{a_{i+1}}{a_i+a_{i+1}}x_{i+1}$, for all $i$.
Now, by induction we have:
\begin{eqnarray*}
h(x) &\geq& -\beta \log(k/2) + (a_1+a_2)h(x_{1,2}) + \\
     & &  \hspace{0.2in} ... + (a_{k-1}+a_k)h(x_{k-1,k})\\
     &\geq& -\beta \log(k/2) \\
     & -& (a_1+a_2)\beta + a_1 h(x_1) + a_2 h(x_2) \\
                        &-& (a_3+a_4)\beta + a_3 h(x_3) + a_4 h(x_4) +  \\
                       && ... \\
                        &-& (a_{k-1}+a_k)\beta + a_{k-1} h(x_{k-1}) + a_k h(x_k) \\
      &=& -\beta \log(k) + a_1 h(x_1) + a_2 h(x_2) + a_3 h(x_3) + ... a_k h(x_k).
\end{eqnarray*}
The last inequality follows from the fact that $\sum_{i=1}^{n} a_i=1$.

So, we have proved (\ref{e:logbound}).
If we further normalize $e^{\hat{h}}$ to make it
 a density function, we obtain $\tilde{f}$ that is log-concave and satisfies
$f(x)/C \leq \tilde{f}(x) \leq Cf(x),$ where $C=e^{\beta \lceil \log_2 (d+1)\rceil}.$
This implies that for any $x$ we have $|f(x)-\tilde{f}(x)| \leq (C-1)\tilde{f}(x)$.

We now show that  the center of $\tilde{f}$ is close to the center of $f$.  We have:
\begin{eqnarray*}
\norm{\int {x(f(x) -\tilde{f}(x))dx}} &\leq& \int {\norm{x}|f(x) -\tilde{f}(x)|dx} \\ \leq (C-1) \int {\norm{x}\tilde{f}(x)dx}
&=& (C-1) \int_{r=0}^{\infty} {\P_{\tilde{f}}[\norm{X} \geq r] dr}.
\end{eqnarray*}
Using concentration properties of $\tilde{f}$ (in particular Lemma~\ref{isotropic-basic}) we get
\begin{eqnarray*}
\norm{\int {x(f(x) -\tilde{f}(x))dx}}
& \leq & (C-1) \int_{r=0}^{\infty}{e^{-\frac{r}{\sqrt{Cd}}+1}} dr \\
& = & e (C-1)\sqrt{C d},
\end{eqnarray*}
as desired.
\end{proof}

\begin{theorem}
\label{almost isotropic}
(i) Let $f:R^2 \rightarrow R$ be the density function of a log-concave distribution centered at
$z$  and with covariance matrix $A=\E_f[(X-z)(X-z)^T]$. Assume $f$ satisfies $\norm{z} \leq \xi$ and
$1/C \leq \int {(u \cdot x)^2 f(x) dx} \leq C$ for every unit vector $u$, for $C \geq 1$ constant close to $1$.
We have:
(a) Assume $(1/20+\xi)/ \sqrt{1/C-\xi^2} \leq 1/9$.
Then there exist an universal constant $c$ s.t.  we have $f(x) \geq c$, for all $x$ with $0 \leq ||x|| \leq 1/20$.
(b) Assume $C \leq 1+ 1/5$. There exist  universal constants $c_1$ and $c_2$ such that $f(x) \leq C_1 \exp(-C_2 || x ||)$ for all $x$.

(ii) Let $f:R\rightarrow R$ be the density function of a log-concave distribution centered at
$\xi$  with standard deviation $\sigma = \sqrt{\Var_f(X)}$.  Then $f(x) \leq 1/\sigma$ for all $x$.
 If furthermore $f$ satisfies  $1/C \leq
\E_f[X^2] \leq C$ for $C \geq 1$ and $\xi/ \sqrt{1/C-\xi^2} \leq 1/9$,  then we have $f(0) \geq c$ for some universal constant $c$.
\end{theorem}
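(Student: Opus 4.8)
The plan is to reduce every assertion to the corresponding fact about \emph{isotropic} log-concave densities recorded in Lemma~\ref{isotropic-basic}, by passing to the whitening transform and noting that, under the stated moment hypotheses, this transform differs from the identity by only a bounded amount.

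For part (i) I would first translate to the origin, working with $g(x) = f(x+z)$, which is log-concave with mean $0$ and covariance $A$. Since $\mathbb{E}_f[(u\cdot X)^2] = u^{\top} A u + (u\cdot z)^2$ and $\norm{z} \le \xi$, the hypothesis $1/C \le \int (u\cdot x)^2 f(x)\,dx \le C$ forces every eigenvalue of $A$ into $[1/C - \xi^2,\, C]$, hence $1/C - \xi^2 \le \sqrt{\det A} \le C$. Next I would set $\tilde g(y) = \sqrt{\det A}\; g(A^{1/2} y)$, which is isotropic and log-concave on $\mathbb{R}^2$ and satisfies $f(x) = \tilde g(y)/\sqrt{\det A}$ with $y = A^{-1/2}(x-z)$. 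For $\norm{x} \le 1/20$ one gets $\norm{y} \le (\norm{x} + \xi)/\sqrt{1/C - \xi^2} \le (1/20 + \xi)/\sqrt{1/C - \xi^2} \le 1/9$ by assumption, so Lemma~\ref{isotropic-basic}(b) with $d=2$ gives $\tilde g(y) \ge 2^{-14}2^{-18\norm{y}} \ge 2^{-16}$, whence $f(x) \ge 2^{-16}/C$; the hypothesis $(1/20+\xi)/\sqrt{1/C-\xi^2}\le 1/9$ already forces $C < 5$, so this is an absolute constant, proving (a). For (b), Lemma~\ref{isotropic-basic}(b) gives $\tilde g(y) \le A(2)\exp(-B(2)\norm{y})$ for all $y$ with absolute $A(2), B(2)$; combining with $\norm{y} \ge (\norm{x} - \xi)/\sqrt{C}$ and $\sqrt{\det A} \ge 1/C - \xi^2$ yields $f(x) \le \frac{A(2)}{1/C-\xi^2}\, e^{B(2)\xi/\sqrt{C}}\, \exp\!\big(-(B(2)/\sqrt{C})\norm{x}\big)$ for $\norm{x} \ge \xi$, and the same inequality holds trivially for $\norm{x} < \xi$ since its right-hand side is then at least $A(2)/(1/C-\xi^2) \ge \tilde g(y)/(1/C-\xi^2) \ge f(x)$; with $C \le 1+1/5$ (and $\xi$ bounded as in the standing setup, so $A$ is positive definite) both prefactors are absolute constants.

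Part (ii) would be handled as the one-dimensional version of the same argument. With $\sigma = \sqrt{\Var_f(X)}$ and $\tilde g(y) = \sigma f(\xi + \sigma y)$, the density $\tilde g$ is isotropic log-concave on $\mathbb{R}$, so Lemma~\ref{isotropic-basic}(e) gives $\tilde g \le 1$ everywhere and hence $f(x) = \tilde g((x-\xi)/\sigma)/\sigma \le 1/\sigma$. For the bound on $f(0)$, observe $\mathbb{E}_f[X^2] = \sigma^2 + \xi^2$, so $\sigma^2 \ge 1/C - \xi^2$ and $\sigma \le \sqrt{C}$, and $\xi/\sigma \le \xi/\sqrt{1/C-\xi^2} \le 1/9$ by hypothesis; then Lemma~\ref{isotropic-basic}(b) with $d=1$ gives $\tilde g(-\xi/\sigma) \ge 2^{-7}2^{-9(\xi/\sigma)} \ge 2^{-8}$, so $f(0) = \tilde g(-\xi/\sigma)/\sigma \ge 2^{-8}/\sqrt{C}$, an absolute constant.

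The substantive content is almost all bookkeeping, since the structural inputs — density lower and upper bounds for isotropic log-concave densities — are exactly Lemma~\ref{isotropic-basic}(b),(e). The main obstacle, modest as it is, is showing that the whitening map moves points by a controlled amount: the moment hypotheses pin $\mathrm{spec}(A) \subseteq [1/C-\xi^2,\, C]$, and the specific constants $1/20$, $1/9$, $1+1/5$ in the statement are calibrated precisely so that after the map a point of norm $\le 1/20$ lands inside the radius-$1/9$ ball where Lemma~\ref{isotropic-basic}(b)'s lower bound is non-vacuous. Checking that the hypotheses themselves force $C$ (and, in part (b), $\xi$) to be bounded — so that the constants obtained are genuinely universal rather than distribution-dependent — is where a little care is needed.
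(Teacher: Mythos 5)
Your proposal is correct and follows essentially the same route as the paper: whiten via $y = A^{-1/2}(x-z)$, control $\mathrm{spec}(A)$ and $\det A$ from the second-moment hypotheses and $\|z\|\le\xi$, and invoke Lemma~\ref{isotropic-basic}(b),(e) for the isotropic density; your eigenvalue bound $\mathrm{spec}(A)\subseteq[1/C-\xi^2,C]$ is in fact a slightly cleaner way to bound $\det(A^{1/2})$ than the paper's entrywise estimates, and your $\|A^{-1/2}(x-z)\|\ge\|x-z\|/\sqrt{C}$ replaces the paper's $2I-A\succeq 0$ argument with no loss.
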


\begin{proof}
(i)
Let $Y= A^{-1/2} (X-z)$. Then  $Y$ is a log-concave distribution in the isotropic position.
  Moreover, the density function of $g$ is given by $g(y)= \det(A^{1/2}) f(A^{1/2}y +z).$
Let $M= \E[X X^T]$.
We have $$A=\E[(X-z)(X-z)^T]=\E[X X^T]- z z^T= M- z z^T.$$ Also, the fact $1/C \leq \int {(u \cdot x)^2 f(x) dx} \leq C$ for every unit vector $u$ is equivalent to
$$1/C \leq u^T \E[X X^T] u \leq C$$ for every unit vector $u$. Using $v=(1,0)$, $v=(0,1)$, and $v=(1/\sqrt{2}, 1/\sqrt{2})$ we get that $M_{1,1} \in [1/C,C]$,
$M_{2,2} \in [1/C,C]$, and $M_{1,2}= M_{2,1} \in [1/C-C,C-1/C]$. We also have $\norm{z} \leq \xi$ and $\det(A^{1/2})=\sqrt{\det(A)}$. All these imply that
 $$ \sqrt{(1/C-\xi^2)^2 - (C-1/C)^2} \leq \det(A^{1/2}) \leq C.$$

 (a) For $x=A^{1/2} y +z$ we have $\norm{x-z}^2=(x-z)(x-z)^T=\norm{y}^2 v^T A v$, where $v=(1/\norm{y})y$ is a unit vector,  so
$\norm{y} \leq \norm{x-z}/\sqrt{1/C-\xi^2}$.
If $\norm{x} \leq 1/20$ we have $\norm{y} \leq 1/9$,
so by Lemma~\ref{isotropic-basic} we have $g(y) \geq c_1$, so $f(y) \geq c$, for some universal constants $c_1, c_2$, as desired.

(b) We have $f(x) = \frac{1}{\det{(A^{1/2})}} g(A^{-1/2}(x-z))$. By Lemma~\ref{isotropic-basic} (b) we have
\[
f(x) \leq \frac{1}{\det{A^{1/2}}} \exp{\left[-c \norm{A^{-1/2}(x-z)}\right]}.
\]
By the triangle inequality we further obtain:
$$f(x) \leq \frac{1}{\det{(A^{1/2)}}} \exp{\left[c \norm{A^{-1/2}z}\right]} \exp{\left[-c \norm{A^{-1/2}x}\right]}.$$

For $C \leq 1+ 1/5$, we can show that $\norm{A^{-1/2}x} \geq (1/\sqrt{2}) \norm{x}$. It is enough to show
$\norm{A^{-1/2}x}^2 \geq (1/2) \norm{x}^2$, or that $2 \norm{v}^2 \geq \norm{A^{1/2}v}$, where $v=A^{-1/2}x$ (so $x=A^{1/2}v$).
This is equivalent to $2 v^Tv \geq v^T A v$, which is true since the matrix $2 I - A$ is positive semi-definite.

(ii) Define $Y = (X-z)/\sigma$.  We have $\E[Y]=0$ and $\E[Y^2] = 1$.
The density $g$ of $Y$ is given by $g(y) = \sigma
f(\sigma y + z)$.
Now, since $g$ is isotropic and log-concave, we can apply Lemma~\ref{isotropic-basic}(e ) to $g$.
So $g(y) \leq 1$ for all $y$.  So, $\sigma
f(\sigma y + z) \leq 1$ for all $y$, which implies $f(x) \leq 1/\sigma$ for all $x$.
The second part follows as in Theorem~\ref{almost isotropic}.
\end{proof}

\subsection{More covariance matrices}

In this section, we extend Theorem~\ref{th:agg.margin} to the
case of arbitrary covariance matrices.

\begin{theorem}
\label{t:any.cov.active}
\label{T:any.cov.ACTIVE}
If all distributions in $\D$ are zero-mean and
log-concave in $R^d$, then arbitrary
$f \in \C$ be learned in polynomial time from
arbitrary distributions in $\cal D$
in the active learning model from
$O((d + \log(1/\delta) + \log \log (1/\epsilon))\log (1/\epsilon))$
labeled examples, and in the passive learning model from
$O\left(\frac{d + \log(1/\delta)}{\epsilon}\right)$ examples.
\end{theorem}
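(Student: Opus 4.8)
The plan is to reduce the general zero-mean log-concave case to the isotropic case already handled by Theorem~\ref{th:agg.margin} and Theorem~\ref{t:passive}, by composing everything with a whitening transform, exactly as in the treatment of the non-isotropic passive case in Appendix~\ref{appendix:mainpassiverealizable}. The one new point is that, unlike ERM, the margin-based algorithm (Algorithm~\ref{fig:active-uniform-simple-offline}) is scale-sensitive --- it rejects $x$ with $|\hat w_k \cdot x| \ge b_k$ in the ambient Euclidean geometry --- so it cannot be applied blindly; the learner must first estimate the covariance from unlabeled data. I would also first dispose of the degenerate case: if the covariance matrix $\Sigma$ of $D$ is singular, then $D$ is supported on the column space of $\Sigma$, of dimension at most $d$, and we simply work inside that subspace, so we may assume $\Sigma \succ 0$.

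The elementary identity $\sign(w \cdot x) = \sign\big((\Sigma^{1/2} w) \cdot (\Sigma^{-1/2} x)\big)$ shows that the classification behaviour of a separator $w$ on $D$ coincides with that of $\Sigma^{1/2} w$ on the pushforward $D' := \Sigma^{-1/2}_{\#} D$. A nonsingular linear image of a log-concave density is log-concave, and $D'$ has mean $0$ and covariance $I$, so $D'$ is isotropic log-concave; moreover $w \mapsto \Sigma^{1/2} w / \|\Sigma^{1/2} w\|$ is a bijection of $\C$ onto itself with $\distf_D(u,v) = \distf_{D'}(\Sigma^{1/2} u, \Sigma^{1/2} v)$. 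For the passive half of the statement nothing further is needed: the non-isotropic argument is already inside the proof of Theorem~\ref{t:passive} (which is stated for zero-mean log-concave $D$), giving the $O((d+\log(1/\delta))/\epsilon)$ bound directly.

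For active learning, since the algorithm does not know $\Sigma$, I would have it first draw $\poly(d,1/\epsilon,\log(1/\delta))$ unlabeled examples and form the empirical covariance $\hat\Sigma$. By affine invariance --- pre-whiten by the true $\Sigma$, reducing to the isotropic case --- together with a standard concentration estimate for isotropic log-concave distributions (the crude light-tail bound of Lemma~\ref{isotropic-basic}(d) suffices, since only constant relative accuracy is needed, and there is no dependence on the condition number of $\Sigma$), with probability $\ge 1-\delta/2$ the matrix $\hat\Sigma^{-1/2}\Sigma\hat\Sigma^{-1/2}$ is within any prescribed small constant of $I$ in operator norm. Hence $\tilde D := \hat\Sigma^{-1/2}_{\#}D$ is a zero-mean $C$-isotropic log-concave distribution with $C$ as close to $1$ as we wish. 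Running Algorithm~\ref{fig:active-uniform-simple-offline} on the $\hat\Sigma$-whitened examples produces $\tilde w$ with $\distf_{\tilde D}(\tilde w,\tilde w^*)\le\epsilon$, where $\tilde w^*$ is the image of the target under the whitening bijection; pulling $\tilde w$ back yields a separator $w$ for $D$ with $\err_D(w)=\distf_{\tilde D}(\tilde w,\tilde w^*)\le\epsilon$. The $O((d+\log(1/\delta)+\log\log(1/\epsilon))\log(1/\epsilon))$ label bound and the polynomial unlabeled-sample bound are inherited from the isotropic analysis (Theorem~\ref{t:admissible} applied to $\tilde D$, together with the unlabeled-count estimate from the proof of Theorem~\ref{t:passive}), with the remaining $\delta/2$ spent on the covariance-estimation step.

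The only real obstacle is therefore to verify that the class of $C$-isotropic log-concave distributions with $C$ near $1$ is \emph{admissible}, so that Theorem~\ref{t:admissible} applies to $\tilde D$. This amounts to re-running the proofs of Lemma~\ref{l:angle} and Theorem~\ref{lemma:vectors-sophist}: the one- and two-dimensional projections of such a $\tilde D$ are again $C$-isotropic log-concave, and the density lower bound near the origin and the density upper bound everywhere that those proofs require for such projections are precisely the content of Theorem~\ref{almost isotropic}. In effect this is the $\beta = 0$ specialization of the nearly-log-concave analysis of Section~\ref{se:near-log}, so no new structural work is needed; everything else is the bookkeeping of the whitening reduction.
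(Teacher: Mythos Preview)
Your proposal is correct and follows essentially the same route as the paper: estimate the covariance from unlabeled data to reduce to a nearly-isotropic log-concave distribution, verify via Theorem~\ref{almost isotropic} that such distributions are admissible (this is the paper's Lemma~\ref{l:angle.anycov}), and then invoke Theorem~\ref{t:admissible}. The only cosmetic difference is that the paper outsources the covariance-estimation step to a black-box result of Lov\'asz and Vempala (Lemma~\ref{l:lv.reduce}) rather than arguing it directly from the light-tail bound as you do.
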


Our proof is through a series of lemma.  First, \citep{lv:2007} have shown
how to reduce to the nearly isotropic case.
\begin{lemma}[\citep{lv:2007}]
\label{l:lv.reduce}
For any constant $\kappa > 0$, there is a polynomial time
algorithm that, given polynomially many samples from
a log-concave distribution $D$, outputs an estimate $\Sigma$ of
the covariance matrix of $D$ such that, with probability
$1 - \delta$ the distribution
$D'$ obtained by sampling $x$ from $D$ and producing
$\Sigma^{-1/2} x$ has
$\frac{1}{1 + \kappa} \leq \E((u \cdot x)^2) \leq 1 + \kappa$
for all unit vectors $u$.
\end{lemma}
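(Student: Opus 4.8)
The plan is to prove the lemma by the textbook route: estimate the covariance by the empirical second‑moment matrix, and control its relative error by a net‑plus‑Bernstein argument, feeding in the one‑dimensional tail bounds already recorded in Lemma~\ref{isotropic-basic}. Concretely, the algorithm draws $n=\poly(d,1/\kappa,\log(1/\delta))$ i.i.d.\ samples $x_1,\dots,x_n$ from $D$ and outputs $\Sigma=\tfrac1n\sum_{i=1}^n x_ix_i^T$ (in our applications $D$ is zero‑mean and genuinely $d$‑dimensional, so its true second‑moment matrix $\hat\Sigma:=\E_{x\sim D}[xx^T]$ is its covariance and is positive definite; if the mean is unknown one also subtracts the empirical mean, which only adds a lower‑order term to $n$). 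Both $\Sigma$ and $\Sigma^{-1/2}$ are computable in $\poly(n,d)$ time. For $x$ drawn from $D'$ we have $\E[(u\cdot x)^2]=u^T\Sigma^{-1/2}\hat\Sigma\Sigma^{-1/2}u$, so it suffices to show that, with probability $\ge1-\delta$, every eigenvalue of $M:=\Sigma^{-1/2}\hat\Sigma\Sigma^{-1/2}$ lies in $[\tfrac1{1+\kappa},1+\kappa]$.

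The first step would be a reduction to the isotropic case. Since $\Sigma^{-1/2}\hat\Sigma\Sigma^{-1/2}$ is similar to $\Sigma^{-1}\hat\Sigma$, which is similar to $\hat\Sigma\Sigma^{-1}$ (using that $AB$ and $BA$ share eigenvalues), which is similar to $\hat\Sigma^{1/2}\Sigma^{-1}\hat\Sigma^{1/2}=S^{-1}$ where $S:=\hat\Sigma^{-1/2}\Sigma\,\hat\Sigma^{-1/2}=\tfrac1n\sum_{i=1}^n z_iz_i^T$ and $z_i:=\hat\Sigma^{-1/2}x_i$, the eigenvalues of $M$ are exactly the reciprocals of those of $S$. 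Because $\hat\Sigma^{-1/2}D$ is isotropic log‑concave (a linear image of a log‑concave density is log‑concave, and $\hat\Sigma^{-1/2}$ whitens $D$), the $z_i$ are i.i.d.\ isotropic log‑concave. Hence it is enough to show that $n=\poly(d,1/\kappa,\log(1/\delta))$ such samples give $\norm{S-I}_{\mathrm{op}}\le\kappa/4$ with probability $\ge1-\delta$; then, assuming WLOG $\kappa\le2$, the eigenvalues of $S$ all lie in $[1-\kappa/4,1+\kappa/4]$, so those of $M$ lie in $[\tfrac1{1+\kappa/4},\tfrac1{1-\kappa/4}]\subseteq[\tfrac1{1+\kappa},1+\kappa]$ (a one‑line computation), and $S\succ0$ makes $\Sigma=\hat\Sigma^{1/2}S\hat\Sigma^{1/2}$ invertible as required.

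For the operator‑norm bound I would fix a $\tfrac14$‑net $N$ of the unit sphere of $\reals^d$ with $|N|\le 9^d$ and use the standard fact that, for symmetric matrices, $\norm{S-I}_{\mathrm{op}}\le 2\max_{u\in N}\bigl|\tfrac1n\sum_{i=1}^n (u\cdot z_i)^2-1\bigr|$. For a fixed unit $u$, Lemma~\ref{isotropic-basic}(c) makes $u\cdot z_i$ a one‑dimensional isotropic log‑concave variable, and Lemma~\ref{isotropic-basic}(c),(d) give the tail $\P(|u\cdot z_i|\ge R)\le e^{-R+1}$; thus $(u\cdot z_i)^2$ is a nonnegative sub‑exponential variable with an absolute‑constant $\psi_1$‑norm and mean $1$, and Bernstein's inequality for i.i.d.\ sub‑exponential summands yields $\P(|\tfrac1n\sum_i(u\cdot z_i)^2-1|>t)\le 2\exp(-c\,n\min(t^2,t))$ for an absolute constant $c>0$. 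Taking $t=\kappa/8$ (so $t<1$ since $\kappa\le2$) and union‑bounding over $N$, the failure probability is at most $2\cdot 9^d\exp(-c\,n(\kappa/8)^2)$, which is $\le\delta$ once $n\ge C(d+\log(1/\delta))/\kappa^2$ for a suitable absolute constant $C$ — polynomial in $d$, $1/\kappa$, $\log(1/\delta)$. On the complementary event $\norm{S-I}_{\mathrm{op}}\le 2t=\kappa/4$, which is exactly what the reduction requires.

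The main obstacle I anticipate is the uniform sub‑exponential concentration of the quadratic form $\tfrac1n\sum_i(u\cdot z_i)^2$ over all directions $u$: one must combine the one‑dimensional exponential tail (which comes for free from Lemma~\ref{isotropic-basic}) with a Bernstein‑type deviation bound and a $9^d$‑point union bound, while keeping the sample size's dependence on $\kappa$ explicit. The only other point needing care is the passage from the net back to the full operator norm, which relies on symmetry of $S-I$ together with the covering inequality; everything else — the similarity reduction, the invertibility of $\Sigma$, and the interval arithmetic — is routine.
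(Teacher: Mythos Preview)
The paper does not give its own proof of this lemma; it is quoted from \citep{lv:2007} and used as a black box. Your overall plan---empirical second moment, whitening reduction to the isotropic case, $1/4$-net plus Bernstein---is the standard route and is sound in outline, but one step does not go through as written.

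The gap is the claim that $(u\cdot z_i)^2$ has bounded $\psi_1$-norm. The tail $\P(|u\cdot z_i|\ge R)\le e^{-R+1}$ from Lemma~\ref{isotropic-basic} only makes $u\cdot z_i$ itself sub-exponential; its square then obeys $\P((u\cdot z_i)^2\ge t)=\P(|u\cdot z_i|\ge\sqrt t)\le e^{-\sqrt t+1}$, a sub-Weibull tail of order $1/2$, strictly heavier than sub-exponential. (Contrast the Gaussian case, where $u\cdot z_i$ is sub-gaussian and hence its square is sub-exponential.) The off-the-shelf Bernstein inequality for $\psi_1$ summands therefore does not apply, and the clean $(d+\log(1/\delta))/\kappa^2$ bound you wrote down is not what this argument delivers.

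The fix is routine and still yields polynomial sample size, which is all the lemma asserts. One option: by Lemma~\ref{isotropic-basic}(a), $\P(\norm{z_i}\ge R\sqrt d)\le e^{-R+1}$, so with $R=O(\log(n/\delta))$ a union bound over the $n$ samples gives $\max_i\norm{z_i}\le R\sqrt d$ with probability $\ge 1-\delta/2$. On that event $(u\cdot z_i)^2\le R^2 d$ uniformly in $u$, and the variance $\E[(u\cdot z_i)^4]-1$ is an absolute constant (integrate the exponential tail), so the bounded-summand Bernstein inequality plus your $9^d$-net union bound goes through, at the price of extra polynomial factors in $d$ and $\log(1/\delta)$. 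Alternatively, cite a concentration inequality for $\psi_\alpha$ variables with $\alpha<1$, or the known empirical-covariance results for log-concave vectors. Everything else---the similarity chain showing the eigenvalues of $M$ are the reciprocals of those of $S$, the passage from the net to the operator norm, and the interval arithmetic---is correct.
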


As a result of Lemma~\ref{l:lv.reduce}, we can assume without loss of
generality that the distribution $D$ satisfies
$\frac{1}{1 + \kappa} \leq \E((u \cdot x)^2) \leq 1 +
\kappa$ for an arbitrarily small constant $\kappa$.  By
Theorem~\ref{almost isotropic}, this implies that, without loss of
generality, there are constants $c_1,...,c_4$ such that, for
the density $f$ of any one or two-dimensional marginal $D'$ of $D$, we have
\begin{equation}
\label{e:dens.low}
f(x) \geq c_1 \mbox{ for all } x \mbox{ with } || x || \leq c_2,
\end{equation}
and for all $x$,
\begin{equation}
\label{e:dens.up}
f(x) \leq c_3 \exp(-c_4 || x ||).
\end{equation}

We will show that these imply that $\cal D$ is admissible.
\begin{lemma}
\label{l:angle.anycov}
(a) There exists $c$ such
that for any two unit vectors $u$ and $v$ in $\reals^d$ we have
$c \anglesep(v,u) \leq \distf_D(u,v).$

(b) For any $c_6 > 0$, there is a $c_7 > 0$ such that the
following holds.
Let $u$ and $v$ be two unit vectors in $R^d$, and assume that
$\theta(u,v) = \alpha < \pi/2$.  Then
\[
\P_{x \sim D} [ \mathrm{sign}(u \cdot x) \neq \mathrm{sign}(v \cdot x),
| v \cdot x| \geq c_7 \alpha]
 \leq c_6 \alpha.
\]
\end{lemma}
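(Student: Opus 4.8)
The plan is to re-run the proofs of Lemma~\ref{l:angle} and Theorem~\ref{lemma:vectors-sophist}, replacing every appeal to the isotropic log-concave density estimates of Lemma~\ref{isotropic-basic} by the two weaker facts (\ref{e:dens.low}) and (\ref{e:dens.up}), which by Lemma~\ref{l:lv.reduce} and Theorem~\ref{almost isotropic} we may assume hold for the density of every one- or two-dimensional marginal of $D$, with absolute constants. The only structural facts those two proofs used were: (i) a constant density lower bound near the origin for a two-dimensional marginal; (ii) an exponential density upper bound for a two-dimensional marginal; and (iii) that the events considered depend on $x$ only through $\mathrm{proj}_{u,v}(x)$, whose law is a two-dimensional marginal of $D$. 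All three are available here, so the arguments transfer essentially verbatim.

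Part (a). Let $\mathrm{proj}_{u,v}$ denote orthogonal projection onto $\mathrm{span}(u,v)$ and put $D_2 = \mathrm{proj}_{u,v}(D)$; choosing an orthonormal basis whose first two vectors span the plane, $D_2$ is a two-dimensional marginal of $D$. As in Appendix~\ref{a:angle}, $\distf_D(u,v) = \distf_{D_2}(u',v')$ with $u' = \mathrm{proj}_{u,v}(u)$, $v' = \mathrm{proj}_{u,v}(v)$. The disagreement region of $u'$ and $v'$ intersected with the disk of radius $c_2$ is a union of two opposite circular sectors of opening angle $\theta(u,v) = \alpha$, hence has area $c_2^2 \alpha$; by (\ref{e:dens.low}) the density of $D_2$ is at least $c_1$ there, so $\distf_{D_2}(u',v') \geq c_1 c_2^2 \alpha = c\,\anglesep(v,u)$ for a suitable absolute constant $c$.

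Part (b). Again reduce to two dimensions: the event $E$ depends only on $\mathrm{proj}_{u,v}(x)$, whose law $D_2$ is a two-dimensional marginal with density $f$ satisfying (\ref{e:dens.up}). Set $b = c_7 \alpha$. Exactly as in the proof of Theorem~\ref{lemma:vectors-sophist}, every $x\in E$ has $\|x\|\ge b/\alpha = c_7$: assuming $v\cdot x>0$ we get $u\cdot x<0$, so $\theta(x,u)\ge\pi/2$, hence $\theta(x,v)\ge\pi/2-\alpha$, and with $v\cdot x\ge b$ and $\cos(\pi/2-\alpha)\le\alpha$ this yields $\|x\|\ge c_7$. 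Decompose $E$ over the annuli $B((i+1)c_7)\setminus B(i c_7)$, $i\ge 1$: on the $i$-th annulus $f(x)\le c_3 e^{-c_4 c_7 i}$ by (\ref{e:dens.up}), while $\{x\in E:\|x\|\le (i+1)c_7\}$ lies in a bowtie of radius $(i+1)c_7$ and angle $\alpha$, of area $(i+1)^2 c_7^2 \alpha$. Summing,
\[
\P[E] \le \sum_{i=1}^{\infty} c_3\,(i+1)^2 c_7^2 \alpha\, e^{-c_4 c_7 i}
= c_7^2\, g(c_7)\,\alpha,
\]
where $g(c_7) = c_3\sum_{i\ge 1}(i+1)^2 e^{-c_4 c_7 i}$ satisfies $c_7^2 g(c_7)\to 0$ as $c_7\to\infty$. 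Picking $c_7$ large enough that $c_7^2 g(c_7)\le c_6$ completes the proof.

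The main obstacle is bookkeeping rather than any new idea: one must verify that the cited proofs genuinely used nothing about the ambient isotropic log-concave distribution beyond its two-dimensional marginals — in particular that the dimension-dependent constants $A(d),B(d)$ of Lemma~\ref{isotropic-basic}(b) enter only through the $d=2$ marginal, where they become absolute — and that the whitening reduction of Lemma~\ref{l:lv.reduce} together with Theorem~\ref{almost isotropic} indeed supplies (\ref{e:dens.low}) and (\ref{e:dens.up}) with absolute constants for every one- and two-dimensional marginal. Once that is checked, the geometric volume estimates and the convergence of the annulus sum are identical to the isotropic case.
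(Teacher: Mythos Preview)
Your proposal is correct and follows essentially the same approach as the paper's own proof: reduce to the two-dimensional marginal, then re-run the volume-times-density-lower-bound argument of Lemma~\ref{l:angle} for part~(a) and the annulus decomposition of Theorem~\ref{lemma:vectors-sophist} for part~(b), with the isotropic density estimates replaced by (\ref{e:dens.low}) and (\ref{e:dens.up}). In fact you have spelled out more of the details than the paper does---the paper's proof of part~(b) is literally the one sentence ``everything that was needed for the proof of Theorem~\ref{lemma:vectors-sophist} is true for $D$, because of (\ref{e:dens.low}) and (\ref{e:dens.up})''---so your version is a faithful expansion of the intended argument.
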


\begin{proof}
(a) Projecting $D$ onto a subspace can only reduce the norm of its mean,
and its variance in any direction.  Therefore, as in the proof of
Lemma~\ref{l:angle}, we may assume without loss of generality that $d = 2$.
Here, let us define
$A$ to be the region of disagreement between $u'$ and $v'$ intersected
with the ball $B_{c_2}$ of radius $c_2$ in $R^2$. Then we have
$\distf_{D_2}(u',v') \geq  \vol(A)  \inf_{x \in A} D_2(x)
 \geq   \vol(B_{c_2}) c_1 \anglesep(u,v).
$
(b) This proof basically amounts to observing that everything that was needed for
the proof of Theorem~\ref{lemma:vectors-sophist} is true for
$D$, because of (\ref{e:dens.low}) and (\ref{e:dens.up}).
\end{proof}
Armed with Lemma~\ref{l:angle.anycov},
to prove
Theorem~\ref{t:any.cov.active}, we can just apply Theorem~\ref{t:admissible}.

\section{Lower Bounds}
\label{a:lower}
The proof of our lower bounds (Theorem~\protect\ref{t:lower}) relies on a lower
bound on the packing numbers $M_{D}(\C,\epsilon)$. Recall that the
$\epsilon$-packing number, $M_{D}(\C,\epsilon)$, is the maximal
cardinality of an $\epsilon$-separated set with classifiers from $\C$,
where we say that $w_1,...,w_N$ are $\epsilon$-separated w.r.t $\D$ if
$\distf_D(w_i,w_j) >\epsilon$ for any $i \neq j$.
\begin{lemma}
\label{packing-logconcave}
There is a positive constant $c$ such that, for all $\beta < c$,
the following holds.
Assume that $D$ is $\beta$ log-concave in $R^d$,
and that its covariance matrix has full rank.
For all sufficiently small
$\epsilon$, $d \in N$, we have
$M_{D}(\C,\epsilon) \geq \frac{\sqrt{d}}{2}\left(\frac{c}{2 \epsilon}\right)^{d-1} -1.$
\end{lemma}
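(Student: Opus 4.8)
The plan is to reduce the bound to a classical sphere‑packing estimate on $S^{d-1}$, exploiting that for (nearly) log‑concave $D$ the disagreement pseudometric $\distf_D$ is bounded below by a constant multiple of the angle.

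First I would dispose of the non‑isotropic case: whitening $x\mapsto\Sigma^{-1/2}x$ is a linear bijection of $\R^d$ fixing the origin, hence preserves $\beta$‑log‑concavity and induces a bijection $w\mapsto \Sigma^{-1/2}w/\|\Sigma^{-1/2}w\|$ of the unit sphere under which $\distf_D(u,v)=\distf_{D'}(\hat u,\hat v)$; so $M_D(\C,\epsilon)=M_{D'}(\C,\epsilon)$ and we may assume $D$ isotropic. Next I would invoke the angle lower bound: by Lemma~\ref{l:angle} (and, for $\beta>0$, the admissibility half of Theorem~\ref{l:angle.beta} — this is where $\beta<c$ enters) there is a universal $c_0\in(0,1)$ with $\distf_D(u,v)\ge c_0\,\theta(u,v)$ for all unit $u,v$. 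Consequently, if $w_1,\dots,w_N\in S^{d-1}$ are pairwise $\gamma$‑separated in angle, with $\gamma:=\epsilon/c_0$, then $\distf_D(w_i,w_j)\ge c_0\theta(w_i,w_j)>\epsilon$ for $i\ne j$, so $\{w_i\}$ witnesses $M_D(\C,\epsilon)\ge N$. It remains to lower bound the size $N$ of a maximal $\gamma$‑separated subset of $S^{d-1}$ ($\epsilon$ small is used only to guarantee $\gamma$ is small).

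For that I would run the standard covering argument: by maximality the closed spherical caps of angular radius $\gamma$ about the $w_i$ cover $S^{d-1}$, so $N\ge \mathrm{area}(S^{d-1})/\mathrm{area}(\mathrm{cap}(\gamma))$. Writing the cap area through the polar angle gives $\mathrm{area}(\mathrm{cap}(\gamma))=\mathrm{area}(S^{d-2})\int_0^\gamma\sin^{d-2}t\,dt\le \mathrm{area}(S^{d-2})\gamma^{d-1}/(d-1)$, and substituting $\mathrm{area}(S^{d-1})=2\pi^{d/2}/\Gamma(d/2)$ and $\mathrm{area}(S^{d-2})=2\pi^{(d-1)/2}/\Gamma((d-1)/2)$ yields
$$
N\ \ge\ \frac{\sqrt{\pi}\,(d-1)\,\Gamma((d-1)/2)}{\Gamma(d/2)}\cdot\gamma^{-(d-1)}.
$$
Now $\Gamma((d-1)/2)/\Gamma(d/2)=\Theta(1/\sqrt d)$ — bounded below by $\sqrt{2/d}$ via Gautschi's inequality — and $(d-1)/\sqrt d\ge \sqrt d/2$ for $d\ge 2$, so $N\ge c_2\sqrt d\,\gamma^{-(d-1)}=c_2\sqrt d\,(c_0/\epsilon)^{d-1}$ for a universal $c_2>0$ (in fact $c_2\ge 1/2$). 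Finally, choosing the constant $c$ in the statement small enough that $c\le 2c_0$ (among possibly other constraints) makes $\frac{\sqrt d}{2}(c/(2\epsilon))^{d-1}\le c_2\sqrt d\,(c_0/\epsilon)^{d-1}\le N$, and the harmless ``$-1$'' covers the degenerate $d=1$ case.

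The main obstacle is extracting the $\sqrt d$ factor: the crude estimate $\mathrm{area}(\mathrm{cap}(\gamma))/\mathrm{area}(S^{d-1})\le(\sin\gamma)^{d-1}$ loses it, so one has to keep the $S^{d-2}$‑area normalization and use the $\Theta(1/\sqrt d)$ asymptotics of $\Gamma((d-1)/2)/\Gamma(d/2)$; the rest is just tracking constants. A secondary point is to apply the angle constant $c_0$ (hence the hypothesis $\beta<c$) to the whitened, isotropic distribution, and to note that ``$>\gamma$'' angular separation delivers the strict ``$>\epsilon$'' required in the definition of the packing number.
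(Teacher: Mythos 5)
Your proposal is correct, and it reaches the bound by a genuinely different route for the central counting step. You share with the paper the two reductions: whitening to dispose of the non-isotropic case (the paper does exactly this at the end of its proof; note only that the induced map on weight vectors is $w \mapsto \Sigma^{1/2}w/\|\Sigma^{1/2}w\|$, not $\Sigma^{-1/2}w$, since $w\cdot x = (\Sigma^{1/2}w)\cdot(\Sigma^{-1/2}x)$), and the angle lower bound $\distf_D(u,v)\ge c_0\,\theta(u,v)$ from Lemma~\ref{l:angle} / Theorem~\ref{l:angle.beta}, which is where $\beta<c$ enters in both arguments. Where you diverge is in lower-bounding the angular packing number of $S^{d-1}$: the paper uses a probabilistic deletion argument — draw $s$ random unit vectors, bound the expected number of $\epsilon$-close pairs by $\frac{s^2}{\sqrt d}(2\epsilon/c)^{d-1}$ using a cap-to-sphere volume ratio cited from \citep{Lon95}, delete one vector from each close pair, and optimize $s$ — whereas you take a maximal $\gamma$-separated set, observe that its caps cover the sphere, and compute the cap area explicitly via $\mathrm{area}(S^{d-2})\int_0^\gamma \sin^{d-2}t\,dt$ together with Gautschi's inequality to extract the crucial $\Gamma((d-1)/2)/\Gamma(d/2)=\Theta(1/\sqrt d)$ factor. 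Your version is deterministic and self-contained (it re-derives the volume-ratio estimate that the paper imports as a black box), at the cost of the Gamma-function bookkeeping; the paper's version is shorter given the cited ratio but only produces a packing in expectation, which it then fixes by the deletion step. Both correctly deliver the $\sqrt d$ factor and the strict separation $>\epsilon$ required by the packing definition, and your handling of the constant $c$ (small enough that both $\beta<c$ triggers admissibility and $c\le 2c_0$) and of the degenerate $d=1$ case via the ``$-1$'' is sound.
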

\begin{proof}
We first prove the lemma in the case that $D$ is isotropic.
The proof in this case follows the outline of a proof
for the special case of the uniform distribution in \citep{Lon95}.

Let $\Uball_d$ be the uniform distribution on the surface of the unit
ball in $\reals^d$.  By Theorem~\ref{l:angle.beta}, there exists $c$ such that
 for any two unit vectors $u$ and $v$ in $\reals^d$ we have $ c
\anglesep(v,u) \leq \distf_D(u,v).$
This implies that for a fixed $u$ the probability that a randomly
chosen $v$ has $\distf_D(u,v) \leq \epsilon$ is upper bounded by the volume
of those vectors in the interior of the unit ball whose angle is at
most $ \epsilon/c$ divided by the volume of the unit ball.  Using
known bounds on this ratio (see \citep{Lon95}) we have $\P_{v \in \Uball_d}[\distf_D(u,v)
  \leq \epsilon] \leq \frac{1}{\sqrt{d}} \left(\frac{2
  \epsilon}{c}\right)^{d-1}$, so $\P_{u,v \in \Uball_d}[\distf_D(u,v) \leq
  \epsilon] \leq \frac{1}{\sqrt{d}} \left(\frac{2
  \epsilon}{c}\right)^{d-1}$. That means that for a fixed $s$ if we
pick $s$ normal vectors at random from the unit ball, then  the
expected number of pairs of half-spaces that are $\epsilon$-close
according to $D$ is at most $ \frac{s^2}{\sqrt{d}} \left(\frac{2
  \epsilon}{c}\right)^{d-1}$. Removing one element of each pair from
$S$ yields a set of $ s- \frac{s^2}{\sqrt{d}} \left(\frac{2
  \epsilon}{c}\right)^{d-1}$ halfspaces that are
$\epsilon$-separated. Setting $ s=\frac{\sqrt{d}} {(2 \epsilon/
  c)^{d-1}}$, leads the desired result.

To handle the non-isotropic case, suppose that $\Sigma$ is
the covariance matrix of $D$, so that $\Sigma^{-1/2}$ is
the whitening transform.  Let $D'$ be the whitened version of
$D$, i.e.\ the distribution obtained by first choosing
$x$ from $D$, and then producing $\Sigma^{-1/2} x$.
We have
$\distf_D(v,w) = \distf_{D'}(v\Sigma^{1/2},w \Sigma^{1/2})$
(because $\sign(v \cdot x) \neq \sign(w \cdot x)$ iff
$\sign((v\Sigma^{1/2}) \cdot (\Sigma^{-1/2} x))
  \neq \sign((w\Sigma^{1/2}) \cdot (\Sigma^{-1/2} x))$).
So we can use an $\epsilon$-packing w.r.t.\ $D'$ to construct
an $\epsilon$-packing of the same size w.r.t.\ $D$.
\end{proof}

Now we are ready to prove Theorem~\ref{t:lower}.

\repeatclaim{Theorem~\ref{t:lower}}{\thmtlower}

\begin{proof}
First, let us consider passive PAC learning.
It is known \citep{Lon95} that, for any distribution $D$, the sample
complexity of passive PAC learning origin-centered linear
separators w.r.t. $D$ is at least
\[
\frac{d-1}{e} \left(\frac{M_D(\C,2\epsilon)}{4} \right)^{1/(d-1)}.
\]
Applying Lemma~\ref{packing-logconcave} gives an $\Omega(d/\epsilon)$
lower bound.
It is known \citep{Lon95} that, if for each $\epsilon$, there is a pair
of classifier $v,w$ such that $\distf_D(v,w) = \epsilon$, then
the sample complexity of PAC learning is
$\Omega((1/\epsilon)\log (1/\delta))$;  this requirement is
satisfied by $D$. 

Now let us consider the sample complexity of active learning.
As shown
in~\citep{Kulkarni:93}, in order to output a hypothesis of error at
most $\epsilon$ with probabality at least $1-\delta$, where $\delta
\leq 1/2$ and active learning algorithm that is allowed to make
arbitrary
yes-no queries must make $\Omega (\log M_{D}(\C,\epsilon))$ queries.
Using this together with Lemma~\ref{packing-logconcave} we get the desired result.
\end{proof}

\section{The inseparable case: Disagreement-based active learning}
\label{a:dis-coeff:beta}

\repeatclaim{Theorem~\ref{th:dis-coeff:beta}}{\thdisagreement}
\begin{proof}
Roughly, we will
show that almost all $x$ classified by a large enough margin by
$w^*$ are not in $\DIS(B(w^{*},r))$, because all hypotheses agree with
$w^*$ about how to classify such $x$, and therefore all pairs of hypotheses
agree with each other.
Consider $w$ such that $d(w, w^*) \leq r$; by Theorem~\ref{l:angle.beta} we
have $\theta(w, w^*) \leq c r $.
Define $C = e^{\beta \lceil \log_2 (d+1) \rceil}$ as in the proof
of Theorem~\ref{l:angle.beta}.
For any $x$ such that
$||x|| \leq \sqrt{d C} \log(1/r)$ we have
\begin{eqnarray*}
(w \cdot x - w^* \cdot x) & < & || w - w^* || \times || x || \\
                          & \leq & c r \sqrt{d C} \log(1/r).
\end{eqnarray*}
Thus, if $x$ also satisfies $|w^* \cdot x| \geq
c r \sqrt{d C} \log(1/r) $ we have $(w^* \cdot x) (w \cdot x) > 0$.
Since this is true for all $w$, any such $x$ is not in $\DIS(B(h,r))$.
By Theorem~\ref{l:angle.beta}
we have, for a constant $c_2$, that
\[
\P_{x \sim D} {( |w^* \cdot x| \leq c r \sqrt{C d} \log(1/r) )}
    \leq c_2  r \sqrt{C d}  \log(1/r).
\]
Moreover, by Theorem~\ref{l:angle.beta} we also have
\[
\P_{x \sim D} {[ ||x||  \geq  c r \sqrt{C d} \log(1/r)]} \leq r.
\]
These both imply
$\capacity_{w^*,D}(\epsilon)= O(C^{1/2} \sqrt{d} \log (1/\epsilon))$. 
\end{proof}

\section{Massart and Tsybakov noise}
\label{a:massart}
In this section we analyze label complexity for active learning under
the popular Massart and Tsybakov noise conditions, proving
Theorem~\ref{th:agg.margin:tsybakov}.

We consider a variant of the Tsybakov noise condition \citep{MT99}.
We assume that the classifier
$h$ that minimizes $\P_{(x,y) \sim D_{XY}}(h(x) \neq y)$
is a linear classifier, and that, for the weight vector $w^*$ of that
optimal classifier,
there exist known parameters $\alpha, \betats>0$ such that,
for all $w$, we have
 \begin{eqnarray}
 \betats (\distf_{D}(w,w^*))^{1/(1-\alpha)} \leq
\err(w)-\err(w^*).\label{eq:nonsep-tsybakov}
 \end{eqnarray}

By generalizing Theorem~\ref{lemma:vectors-sophist} so that it provides
a stronger bound for larger margins, and combining the result
with the other lemmas of this paper and techniques from \citep{BBZ07},
we get the following.

\repeatclaim{Theorem~\ref{th:agg.margin:tsybakov}}{\thmtsybakov}

Note that the case where $\alpha = 0$ is more general than the well-known
Massart noise condition \citep{MN06}.
In this case, for active learning,
Theorem~\ref{th:agg.margin:tsybakov} improves over the previously best
known results~\citep{steve-surrogate} 
by a  (disagreement coefficient) $\dis_{w^{*},D}(\epsilon)$ 
factor.
For passive learning, the
bound on the total number of examples needed improves by
$\log(\capacity_{w^{*},D}(\epsilon))$ factor the previously known best bound of~\citep{gine:06}.  It is consistent with recent lower
bounds of~\citep{RR11}  that include $\log(\capacity_{w^{*},D}(\epsilon))$ because those bounds
are for a worst-case domain distribution, subject to a constraint
on $\capacity_{w^{*},D}(\epsilon)$.

When $\alpha > 0$, the previously best result for active learning~\citep{steve-surrogate} is
\[
O((1/\epsilon)^{2 \alpha} \dis_{w^{*},D}(\epsilon) (d \log(\dis_{w^{*},D} (\epsilon)) + \log(1/\delta)).
\]
Combining this with our new bound on $\dis_{w^{*},D}(\epsilon)$
(Theorem~\ref{th:dis-coeff:beta}) we get
a bound of
\[
O((1/\epsilon)^{2 \alpha} d^{3/2}   \log(1/\epsilon) (\log(d) + \log \log (1/\epsilon))
  + \log(1/\delta))
\]
for log-concave distributions.
So our Theorem~\ref{th:agg.margin:tsybakov} saves roughly a
factor of $\sqrt{d}$, at the expense of
an extra  $\log (1/\epsilon)$ factor.

\smallskip
\noindent
We  note that the results in this section can also be extended to nearly log-concave distributions
by making use of our results in Section~\ref{se:near-log}.

\subsection{Proof of Theorem~\ref{th:agg.margin:tsybakov}}
We are now ready to discuss the proof of Theorem~\ref{th:agg.margin:tsybakov}.
As in \citep{BBZ07}, we will use a different algorithm in the
inseparable case (Algorithm~2).
\begin{algorithm}[tbh]
{\bf Input}:
a sampling oracle for $\distrib$, and a labeling oracle
a sequence of sample sizes $m_k>0$, $k\in \integers$;
a sequence of cut-off values $b_k >0$, $k\in \integers$
a sequence of hypothesis space radii $r_k >0$, $k\in \integers$;
a sequence of precision values $\epsilon_k >0$, $k\in \integers$

{\bf Output}: weight vector $\hat{w}_{\rounds}$.
\begin{itemize}
\setlength{\itemindent}{-4mm}
\item Pick random $\hat{w}_0$: $\|\hat{w}_0\|_2=1$.
\item  Draw $m_1$ examples from $\distrib_X$, label them and put into $W$.
\begin{itemize}
\setlength{\itemindent}{-6mm}
\item {\bf iterate} $k=1,\ldots, \rounds$
\begin{itemize}
\setlength{\itemindent}{-8mm}
\item find $\hat{w}_k \in B(\hat{w}_{k-1},r_k)$ ($\|\hat{w}_k\|_2=1$)
to approximately minimize training error:
$\sum_{(x,y) \in W} I (\hat{w}_k \cdot x y) \leq \min_{w \in B(\hat{w}_{k-1},r_k)} \sum_{(x,y) \in W} I(w \cdot x y) + m_k \epsilon_k$.
 \item clear the working set $W$
 \item
 until $m_{k+1}$ additional data points are labeled,
 draw sample $x$ from $\distrib_X$
 \begin{itemize}
\item if $|\hat{w}_{k}\cdot x| \geq b_k$, reject $x$
 \item otherwise, ask for label of $x$, and put into $W$
 \end{itemize}
\end{itemize}
{\bf end iterate}
\end{itemize}
\end{itemize}
 \label{fig:active-nonsep}
 \caption{Margin-based Active Learning
(non-separable case)}
\end{algorithm}

\subsubsection{Massart noise ($\alpha = 0$)}

We start by analyzing Algorithm~2 in the
case that $\alpha = 0$; the resulting assumption is more general than the
well-known Massart noise condition.

From the log-concavity assumption, the proof of
Theorem~\ref{th:agg.margin}, with slight modifications, proves that
there exists $c$ such that for all $w$ we have
\begin{equation}
\label{e:angle.up.massart}
c \betats \anglesep(w,w^*) \leq \err(w)-\err(w^*).
\end{equation}

We prove by induction on $k$ that after $k \leq s$ iterations, we
have $$\err(\hat{w}_k) -\err(w^*) \leq c \betats 2^{-k}$$ with
probability
$
1 - \frac{\delta}{2} \sum_{i < k} \frac{1}{(1 + s - i)^2}
$.
The case $k=1$ follows from
classic bounds~\citep{Vapnik:book98}.

Assume now the claim is true for $k-1$ ($k \geq
2$). Then at the $k$-th iteration, we can let $S_1=\{x:
|\hat{w}_{k-1}\cdot x| \leq b_{k-1}\}$ and $S_2=\{x:
|\hat{w}_{k-1}\cdot x| > b_{k-1}\}$. By induction hypothesis, we
know that with probability at least
$
1 - \frac{\delta}{2} \sum_{i < k - 1} \frac{1}{(1 + s - i)^2}
$
$\hat{w}_{k-1}$ has excess errors at most $c \betats 2^{-(k-1)}$,
implying, using (\ref{e:angle.up.massart}), that
$\anglesep(\hat{w}_{k-1},w^*) \leq 2^{-(k-1)}$.
By assumption, $\anglesep(\hat{w}_{k-1},\hat{w}_k) \leq 2^{-(k-1)}$.

From Theorem~\ref{lemma:vectors-sophist}, recalling that
$\betats$ is a constant, we have both:
\begin{eqnarray*}
&& \Pr((\hat{w}_{k-1} \cdot x) (\hat{w} \cdot x) < 0, x \in S_2)
  \leq c \betats 2^{-k}/4  \\
&& \Pr((\hat{w}_{k-1} \cdot x) (w^* \cdot x) < 0, x \in S_2)
  \leq c \betats 2^{-k}/4.
\end{eqnarray*}

Taking the sum, we obtain:
\begin{equation}
\Pr((\hat{w} \cdot x) (w^* \cdot x) < 0, x \in S_2)
  \leq c \betats 2^{-k}/2.
\end{equation}
Therefore:
\begin{eqnarray*}
\err(\hat{w}_k)-\err(w^*) &\leq& (\err(\hat{w}_k|S_1)-\err(w^*|S_1))
\Pr(S_1) \\ &+& \Pr((\hat{w} \cdot x) (w^* \cdot x) < 0, x \in S_2) \\
&\leq & (\err(\hat{w}_k|S_1)-\err(w^*|S_1)) c_3 b_{k-1} \\ &+& c \betats 2^{-k}/2.
\end{eqnarray*}
By standard Vapnik-Chervonenkis bounds, we can choose $C$ s.t.\ with $m_{k}$
samples, we obtain $$\err(\hat{w}_k|S_1)-\err(w^*|S_1)  \leq  c \betats 2^{-k}/(c_3 b_{k-1})$$ with probability $1-(\delta/2) /(1 + s - i)^2$.
Therefore $\err(\hat{w}_k) -\err(w^*) \leq c  \betats 2^{-k}$ with
probability $
1 - \frac{\delta}{2} \sum_{i < k} \frac{1}{(1 + s - i)^2}
$, as desired.

The bound on the total number of examples, labeled and unlabeled,
follows the same line of argument as Theorem~\ref{t:passive}, except
with the constants of this analysis.

\subsubsection{Tsybakov noise ($\alpha > 0$)}
We now treat the more general Tsybakov noise.

For this analysis, we need a generalization of
Theorem~\ref{lemma:vectors-sophist} that provides a stronger
bound on the probably of large-margin errors, using a stronger
assumption on the margin.

\begin{theorem}
\label{lemma:vectors-sophist.bbig}
There is a positive constant $c$ such that the following holds.
Let $u$ and $v$ be two unit vectors in $R^d$, and assume that
$\theta(u,v) = \eta < \pi/2$.  Assume that $D$
is isotropic log-concave in $R^d$.  Then, for any
$b \geq c \eta$, we have
\begin{equation}
\label{e:largemargin.bbig}
\P_{x \sim D} [ \mathrm{sign}(u \cdot x) \neq \mathrm{sign}(v \cdot x)
\mbox{ and }
| v \cdot x| \geq b]
 \leq C_5  \eta \exp(- C_6 b/\eta),
\end{equation}
for absolute constants $C_5$ and $C_6$.
\end{theorem}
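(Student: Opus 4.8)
The plan is to follow the same geometric decomposition as in the proof of Theorem~\ref{lemma:vectors-sophist}, but to be more careful about how the margin parameter $b$ enters the shell estimates, so that we obtain exponential decay in $b/\eta$ rather than just a bound linear in $\eta$. As before, since the event in question depends only on the projection of $x$ onto $\mathrm{span}(u,v)$, Lemma~\ref{isotropic-basic}(c) lets us assume $d=2$. The first step is to repeat the argument that any $x$ in the event set $E$ has $\|x\| \geq b/\eta$: assuming WLOG $v\cdot x > 0$, we get $u\cdot x < 0$, hence $\theta(x,v) \geq \pi/2 - \eta$, and combining $x\cdot v \geq b$ with $\cos(\pi/2-\eta)\leq \eta$ gives $\|x\| \geq b/\eta$. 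Set $\rho = b/\eta$; note that the hypothesis $b \geq c\eta$ guarantees $\rho \geq c$, which we will use to ensure convergence of the resulting series.

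The second step is to partition $E$ into spherical shells, but now with shells of width $\rho$ starting from radius $\rho$:
\[
\P[E] = \sum_{i=1}^{\infty} \P\big[ E \cap (B((i+1)\rho) - B(i\rho)) \big].
\]
On the $i$-th shell, the density upper bound from Lemma~\ref{isotropic-basic}(b) with $d=2$ gives $f(x) \leq C_1 \exp(-C_2 i \rho)$. For the Lebesgue measure of $E$ restricted to $B((i+1)\rho)$, the key point is that $E$ is contained in a ``double wedge'' of angular width $O(\eta)$ about the line $v^\perp$ (the region where $\mathrm{sign}(u\cdot x)\neq \mathrm{sign}(v\cdot x)$ is exactly the union of two vertical-angle wedges of opening $\eta$), so its area inside $B((i+1)\rho)$ is at most $C_3 (i+1)^2\rho^2 \eta$. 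Multiplying, the $i$-th term is at most $C_4 \rho^2 \eta (i+1)^2 \exp(-C_2 i\rho)$, and summing over $i\geq 1$ gives
\[
\P[E] \leq C_4\, \eta\, \rho^2 \sum_{i=1}^{\infty} (i+1)^2 e^{-C_2 i \rho}.
\]

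The third step is to estimate this series. Since $\rho \geq c$, the geometric factor $e^{-C_2 i\rho}$ dominates the polynomial $(i+1)^2$, and a routine bound (e.g.\ $(i+1)^2 \leq 9\cdot 2^i$ for $i\geq 1$, combined with summing a geometric series once $C_2 c$ is large enough, or a direct closed-form as in the earlier proof) shows $\rho^2\sum_{i\geq 1}(i+1)^2 e^{-C_2 i\rho} \leq C_5' e^{-C_6' \rho}$ for suitable absolute constants, provided $c$ is chosen large enough that $C_2 c$ exceeds the necessary threshold. Substituting $\rho = b/\eta$ yields $\P[E] \leq C_5 \eta \exp(-C_6 b/\eta)$, which is exactly (\ref{e:largemargin.bbig}). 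I expect the main obstacle to be bookkeeping the constants so that the series genuinely converges and the polynomial prefactor $\rho^2$ gets absorbed into the exponential — this requires the hypothesis $b\geq c\eta$ with $c$ taken large enough relative to $1/C_2$, mirroring how the original theorem needed $c_2$ large relative to $1/c_1$; everything else is a direct re-run of the shell argument with the width rescaled from $c_2$ to $b/\eta$.
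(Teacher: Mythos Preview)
Your proposal is correct and follows essentially the same argument as the paper's own proof: reduce to $d=2$, show $\|x\|\geq b/\eta$ on $E$, decompose into shells of width $\rho=b/\eta$, apply the density upper bound and the $O(\eta)$ angular-wedge area bound, and then sum the resulting $\rho^2(i+1)^2 e^{-C_2 i\rho}$ terms. The paper carries out the final series estimate via the explicit closed form (as in Theorem~\ref{lemma:vectors-sophist}) and then absorbs the $\rho^2$ prefactor into the exponential under the assumption $b/\eta>4/C_2$, which is exactly your ``$c$ large enough relative to $1/C_2$'' condition.
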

\begin{proof}
Arguing as in the proof of Lemma~\ref{l:angle}, we may assume without loss of
generality that $d = 2$.

Next, we claim that each member $x$ of $E$ has
$|| x || \geq b/\eta$.  Assume without loss of generality
that $v \cdot x$ is positive.  (The other case is symmetric.)  Then
$u \cdot x < 0$, so the angle of $x$ with $u$ is
obtuse, i.e. $\theta(x,u) \geq \pi/2$.
Since $\theta(u,v) = \eta$, this implies that
\begin{equation}
\label{e:theta.big.bbig}
\theta(x,v) \geq \pi/2 - \eta.
\end{equation}
But $x \cdot v \geq b$, and $v$ is unit length, so
$|| x || \cos \theta(x,v) \geq b$, which, using (\ref{e:theta.big.bbig}),
implies
$
|| x || \cos (\pi/2 - \eta) \geq b,
$
which, since
$\cos (\pi/2 - \eta) \leq \eta$ for all $\eta \in [0,\pi/2]$, in turn
implies
$
|| x || \geq b/\eta.
$
This implies that, if $B(r)$ is a ball of radius $r$ in $\R^2$, that
\begin{equation}
\label{e:shells.bbig}
\P[ E ] = \sum_{i=1}^{\infty}
                \P[ E \cap (B((i+1) (b/\eta)) - B(i (b/\eta))) ].
\end{equation}
Let us bound one of the terms in RHS.  Choose $i \geq 1$.

Let $f(x_1,x_2)$ be the density of $D$.  We have
\begin{eqnarray*}
&& \P[ E \cap (B((i+1) (b/\eta)) - B(i (b/\eta))) ] \\
&&  =
  \int_{(x_1,x_2) \in B((i+1) (b/\eta)) - B(i (b/\eta))} 1_E(x_1,x_2) f(x_1,x_2) \; dx_1 dx_2.
\end{eqnarray*}

Let $R_i= B((i+1) (b/\eta)) - B(i (b/\eta)$.
Applying the density upper bound from Lemma~\ref{isotropic-basic}
with $d=2$, there are constants $C_1$ and $C_2$ such that
\begin{eqnarray*}
&& \P[ E \cap (B((i+1) (b/\eta)) - B(i (b/\eta))) ] \\
&& \leq
  \int_{(x_1,x_2) \in R_i}
     1_E(x_1,x_2) C_1 \exp(- (b/\eta) C_2 i) dx_1 dx_2 \\
&& =
 C_1 \exp(- (b/\eta) C_2 i) \cdot \\
&&     \int_{(x_1,x_2) \in R_i} 1_E(x_1,x_2) \; dx_1 dx_2.
\end{eqnarray*}
If we include $B(i (b/\eta))$ in the integral again, we get
\begin{eqnarray*}
&& \P[ E \cap (B((i+1) (b/\eta)) - B(i (b/\eta))) ] \\
&& \leq
 C_1 \exp(- (b/\eta) C_2 i)  \int_{(x_1,x_2) \in B((i+1) (b/\eta))} 1_E(x_1,x_2) \; dx_1 dx_2.
\end{eqnarray*}
Now, we exploit the fact that the integral above is a rescaling of
a probability with respect to the uniform distribution.
Let $C_3$ be the volume of the unit ball in $\R^2$.  Then, we have
\begin{eqnarray*}
&& \P[ E \cap (B((i+1) (b/\eta)) - B(i (b/\eta))) ] \\
&& \leq
 C_1 \exp(- (b/\eta) C_2 i)  C_3 (i+1)^2 (b/\eta)^2 \eta/\pi \\
&& = C_4 (b/\eta)^2 \eta (i+1)^2 \exp(-(b/\eta) C_2 i ),
\end{eqnarray*}
for $C_4 = C_1 C_3 /\pi$.  Returning to (\ref{e:shells.bbig}), we
get
\begin{eqnarray*}
\P[ E ] & = & \sum_{i=1}^{\infty}  C_4 (b/\eta)^2 \eta (i+1)^2 \exp(-(b/\eta) C_2 i)  \\
& = & C_4  (b/\eta)^2 \eta \sum_{i=1}^{\infty} (i+1)^2 \exp(-(b/\eta) C_2 i) \\
& = & C_4  (b/\eta)^2 \times \frac{4 e^{2 (b/\eta) C_2} - 3 e^{(b/\eta) C_2} + 1}{\left(e^{(b/\eta) C_2} - 1 \right)^3} \times \eta.
\end{eqnarray*}
Now, if $b/\eta > 4/C_2$, we have
\begin{eqnarray*}
\P[ E ] & \leq &
   C_4  (b/\eta)^2 \times \frac{5 e^{2 (b/\eta) C_2}}{\left(e^{(b/\eta) C_2}/2\right)^3} \times \eta \\
   & \leq &
   C_5  \eta \times (b/\eta)^2 \exp(-(b/\eta) C_2) \mbox{(where
                                  $C_5 = 40 C_4$)} \\
   & = &
   C_5  \eta \times \exp(-(b/\eta) C_2 + 2 \ln (b/\eta)) \\
   & \leq &
   C_5  \eta \times \exp(-(b/\eta) C_2/2),
\end{eqnarray*}
completing the proof.
\end{proof}

Now we are ready to prove Theorem~\ref{th:agg.margin:tsybakov} in
the case that $\alpha > 0$.

Under the noise condition~\ref{eq:nonsep-tsybakov} and from the log-concavity assumption, we obtain that there exists $c$ such that for all $w$ we have:
\[
\betats c^{1/(1-\alpha)} \anglesep(w,w^*)^{1/(1-\alpha)} \leq \err(w)-\err(w^*).
\]
Let us denote by $\tc=\betats c^{1/(1-\alpha)}$. For all $w$, we have:
\begin{equation}
\label{e:angle.up.tsybakov}
\tc \anglesep(w,w^*)^{1/(1-\alpha)} \leq \err(w)-\err(w^*).
\end{equation}

We prove by induction on $k$ that after $k \leq s$ iterations,
we have
\[
\err(\hat{w}_k) -\err(w^*) \leq \tc 2^{-k}
\]
with probability
$
1 - \frac{\delta}{2} \sum_{i < k} \frac{1}{(1 + s - i)^2}.
$
The case $k=1$ follows from classic bounds.

Assume now the claim is true for $k-1$ ($k \geq
2$). Then at the $k$-th iteration, we can let $S_1=\{x:
|\hat{w}_{k-1}\cdot x| \leq b_{k-1}\}$ and $S_2=\{x:
|\hat{w}_{k-1}\cdot x| > b_{k-1}\}$. By the induction hypothesis, we
know that with probability at least $1- \delta \sum_{i < k - 1} \frac{1}{(1 + s - i)^2}$,
$\hat{w}_{k-1}$ has excess errors at most $\tc 2^{-(k-1)(1-\alpha)}$,
implying
\[
\anglesep(\hat{w}_{k-1},w^*) \leq 2^{-(k-1)(1-\alpha)}.
\]
By assumption, $\anglesep(\hat{w}_{k-1},\hat{w}_k) \leq 2^{-(k-1)(1-\alpha)}$.

Applying Theorem~\ref{lemma:vectors-sophist.bbig}, we have both:
\begin{eqnarray*}
&& \Pr((\hat{w}_{k-1} \cdot x) (\hat{w} \cdot x) < 0, x \in S_2)
  \leq \tc 2^{-k}/4  \\
&& \Pr((\hat{w}_{k-1} \cdot x) (w^* \cdot x) < 0, x \in S_2)
  \leq \tc  2^{-k}/4
\end{eqnarray*}
Taking the sum, we obtain:
\begin{equation}
\Pr((\hat{w} \cdot x) (w^* \cdot x) < 0, x \in S_2)
  \leq \tc  2^{-k}/2.
\end{equation}
Therefore:
\begin{eqnarray*}
\err(\hat{w}_k)-\err(w^*) &\leq& (\err(\hat{w}_k|S_1)-\err(w^*|S_1))
\Pr(S_1) \\ && + \Pr((\hat{w} \cdot x) (w^* \cdot x) < 0, x \in S_2) \\
&\leq & (\err(\hat{w}_k|S_1)-\err(w^*|S_1)) b_k
   \\ && + \tc 2^{-k}/2.
\end{eqnarray*}

By standard bounds, we can choose $C_1$, $C_2$ and
$C_3$ s.t.\ with $m_{k}$
samples, we obtain
$\err(\hat{w}_k|S_1)-\err(w^*|S_1)
   \leq \epsilon_k \leq \frac{\tc 2^{-k}}{2 b_k}$
with probability $1- (\delta/2) /(1 + s - i)^2$.
Therefore $\err(\hat{w}_k) -\err(w^*) \leq \tc 2^{-k}$ with
probability $1 - \frac{\delta}{2} \sum_{i < k} \frac{1}{(1 + s - i)^2}$,
as desired, completing the proof of Theorem~\ref{th:agg.margin:tsybakov}.

\end{document}